\documentclass[journal]{IEEEtran}
\IEEEoverridecommandlockouts 
\usepackage{makeidx}  
\usepackage[utf8]{inputenc}
\usepackage[T1]{fontenc}

\usepackage{graphicx}
  \graphicspath{ {figs/} }
  \DeclareGraphicsExtensions{.pdf,.jpeg,.png,.eps,.jpg}
\usepackage[bookmarksnumbered,unicode]{hyperref} 
	
\usepackage{nicefrac} 
\usepackage{amsmath}
	\usepackage{amsfonts}
	\usepackage{amssymb}
  \usepackage{amsthm} 
\usepackage{mathtools} 
\usepackage{bm} 
\usepackage{bbm} 
\usepackage[dvipsnames,prologue,svgnames]{xcolor}
\usepackage[inline,shortlabels]{enumitem} 
\usepackage[group-separator={,}]{siunitx} 
\usepackage{array}
\usepackage{booktabs} 
\usepackage{tikz} 
\usepackage{tikz-3dplot} 
	\usetikzlibrary{positioning,shapes,fit,3d,calc,arrows.meta,perspective}
  \usetikzlibrary{arrows.meta,bending,positioning,3d,calc,perspective,automata}
  \usetikzlibrary{calc}
\usetikzlibrary{svg.path}
\usetikzlibrary{captl} 
\usetikzlibrary{extshapes}
\usetikzlibrary{automata,positioning,shapes,fit}
\usetikzlibrary{decorations.markings}
\usetikzlibrary{backgrounds}
	\pgfdeclarelayer{BoxesLayer}      
	\pgfdeclarelayer{background}     
	\pgfdeclarelayer{ArrowsLayer}    
	\pgfdeclarelayer{foreground}     
	\pgfsetlayers{BoxesLayer,background,ArrowsLayer,main,foreground}
\usepackage{inconsolata} 
\usepackage{pgfplots} 
	\usepgfplotslibrary{units}  
	\usepgfplotslibrary{fillbetween} 
	\usetikzlibrary{patterns} 
	\usetikzlibrary{intersections}
	\pgfplotsset{compat=newest} 
	\pgfplotsset{plot coordinates/math parser=false} 
	\newlength\figureheight 
	\newlength\figurewidth 
\usepackage{tikz-cd}
\usepackage[ruled,vlined,linesnumbered]{pkgs/algorithm2e-far} 
  \usepackage[]{subcaption} 
\usepackage{textcomp} 
\usepackage{stfloats} 
\usepackage{url} 
\usepackage{verbatim} 
\usepackage{cite} 
\usepackage[ligature, inference]{semantic} 
\usepackage{xargs} 
\usepackage{marginnote} 
\usepackage{comment}
\usepackage{orcidlink}
\usepackage{bm} 
\usepackage{caption}
	\captionsetup{font=footnotesize,labelsep=period} 
\usepackage{pgf}
\usepackage{xfp}
\usepackage{xparse}
\usepackage{dsfont} 
\usepackage{nicefrac} 
\usepackage{mparhack}
\input{02_Commands}
\makeatletter
\def\footnoterule{\kern-3\p@
  \hrule \@width 0.5\columnwidth \kern 2.6\p@} 
\makeatother
\begin{document}
\title{%
  Correct-by-Construction
  Vision-based 
  Pose Estimation using Geometric Generative Models
}%
\author{
  \IEEEauthorblockN{%
    Ulices Santa Cruz\IEEEauthorrefmark{1},
    Mahmoud Elfar\IEEEauthorrefmark{1}, and
    Yasser Shoukry\IEEEauthorrefmark{1}%
  }%
  \thanks{The authors are with the Department of Electrical Engineering and Computer Science, University of California, Irvine, CA 92697, USA. Email: \{usantacr, elfarm, yshoukry\}@uci.edu}%
}%
\maketitle              
\begin{abstract}
In this paper, we consider the problem of vision-based pose estimation for autonomous systems.
While deep neural networks have been successfully used for vision-based tasks, they inherently lack provable guarantees on the correctness of their output, which is crucial for safety-critical applications.
This paper presents a framework for designing certifiable neural networks (NNs) for perception-based pose estimation that integrates physics-driven modeling with learning-based estimation.
The proposed framework begins by leveraging the known geometry of planar objects commonly found in the environment, such as traffic signs and runway markings, referred to as target objects.
At its core, it introduces a geometric generative model (GGM), a neural-network-like model whose parameters are derived from the image formation process of a target object observed by a camera.
Once designed, the GGM can be used to train NN-based pose estimators with certified guarantees in terms of their estimation errors. Thanks to the GGM, these certified guarantees hold even for data that were not present during the training of these NN-estimators.
We first demonstrate this framework in uncluttered environments, where the target object is the only object present in the camera's field of view.
To generalize the framework to more realistic scenarios,
we next show how the employ ideas from NN reachability analysis to design certified object NN that can detect the presence of the target object in cluttered environments.
Subsequently, the framework consolidates the certified object detector with the certified pose estimator to design a multi-stage perception pipeline that generalizes the proposed approach to cluttered environments, while maintaining its certified guarantees.
We evaluate the proposed framework using both synthetic and real images of various planar objects commonly encountered by autonomous vehicles.
Using images captured by an event-based camera, we show that the trained encoder can effectively estimate the pose of a traffic sign in accordance with the certified bound provided by the framework.
\end{abstract}
\begin{IEEEkeywords}
Formal Verification, Reachability Analysis,
Pose Estimation, Vision-Based Perception, Certifiable Neural Networks, Safety-Critical Systems
\end{IEEEkeywords}
\section{Introduction}%
\label{sec:intro}

Autonomous vehicles rely on a variety of sensors to estimate their pose and perceive nearby objects in order to safely navigate their environment.
Imaging sensors are particularly vital for detecting and recognizing objects of interest, such as road marks, traffic signs, lane dividers and runway markings.
In recent years, deep Neural Networks (NNs) have been successfully deployed in autonomous vehicles to process images captured by cameras and provide high-level semantic information about the environment that contributes to the vehicle's perception of its surroundings and the objects within them.
deep NNs outperform traditional computer vision algorithms in tasks such as object detection, recognition and tracking, becoming the most prevalent choice for vision-based perception in autonomous vehicles.

However, deep NNs are inherently black-boxes, and thus do not provide guarantees on the correctness of their output.
This is particularly problematic for autonomous vehicles where DNNs are used in safety-critical subsystems.
For example, a NN-based sign recognition subsystem may misclassify a stop sign, which can lead to loss of life and property. Similarly, an aircraft approaching a runway for landing may rely on a DNN to interpret threshold stripes or centerline markings. An undetected misclassification in this safety-critical stage can lead to significant deviations in the estimated approach trajectory, endangering both passengers and surrounding operations.
These examples underscore the central challenge: while end-to-end neural networks excel in perception accuracy under nominal conditions, their lack of certified guarantees leaves safety-critical systems vulnerable to catastrophic errors.

In this paper, we consider the problem of perception-based pose estimation for autonomous vehicles.
We focus on environments that contain unique, planar landmarks whose geometry is known a priori.
Such landmarks are ubiquitous: traffic signs, road markings, and lane dividers for ground vehicles;
or navigation aids~\cite{faa2023advisory}, runway markings~\cite{faa2020advisory}, and helipads for aerial vehicles (see~\figref{fig:taxiway}).
We refer to these landmarks as \emph{target objects}.
Given a target object of known planar geometry, the objective is to estimate the position and orientation of an imaging sensor --- referred to as the \emph{camera} --- relative to the target,
using only the images captured by the camera.
For instance, a front-facing camera on a autonomous car approaching an intersection may simultaneously observe a stop sign on the roadside and a pedestrian crossing marked on the pavement.
Likewise, as shown in~\figref{fig:landing},
a forward-facing camera on an aircraft descending toward a runway may capture threshold stripes, centerlines, and touchdown zone designators, all of which are planar objects with known geometry.
In both cases, the known geometry of the observed target objects enables precise pose estimation, which is crucial for safe navigation and control of the vehicle and its environment.

\begin{figure}[t]%
  \centering%
	\subfloat{\includegraphics[height=1.25in]{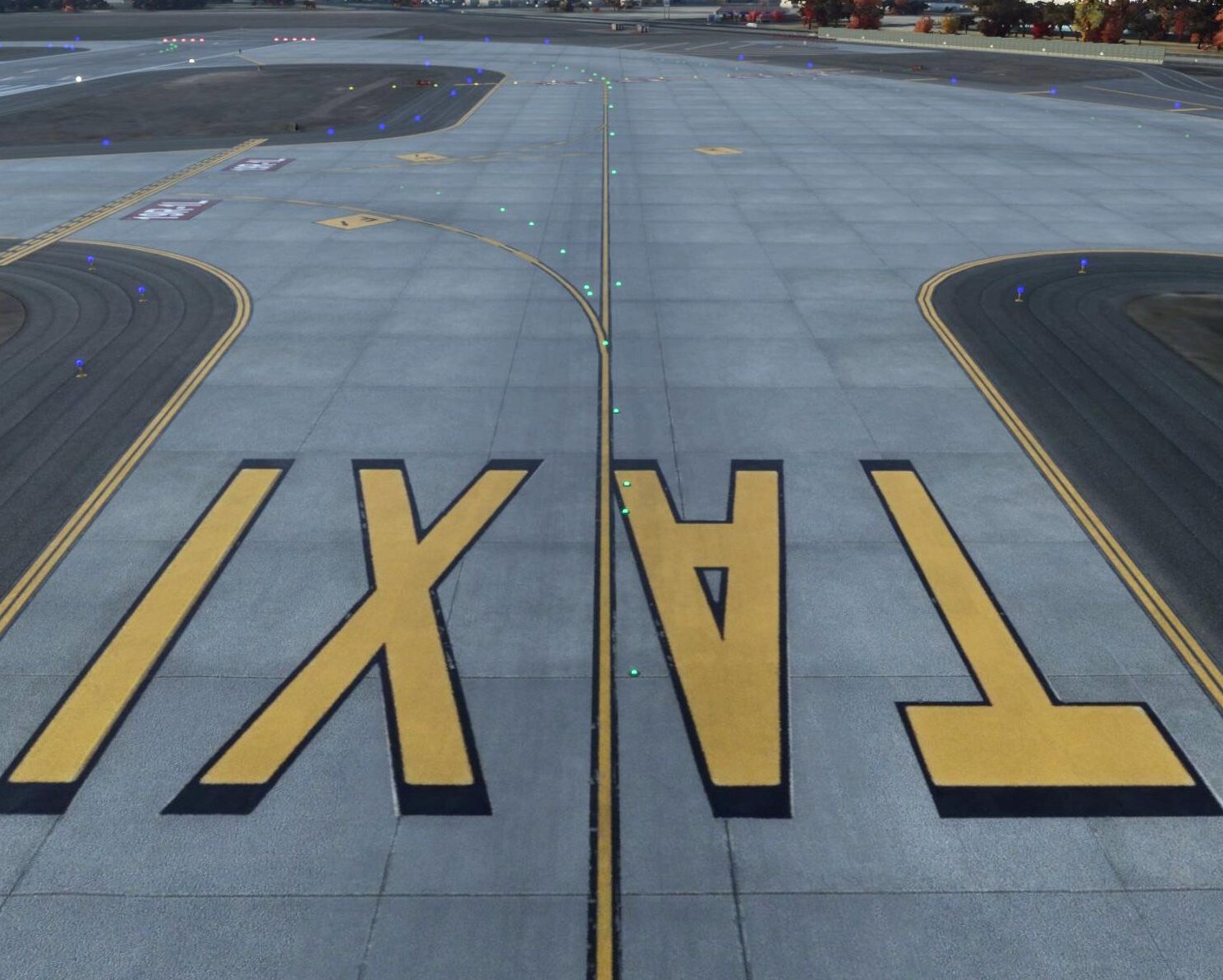}}%
	\hfill%
	\subfloat{\includegraphics[height=1.25in]{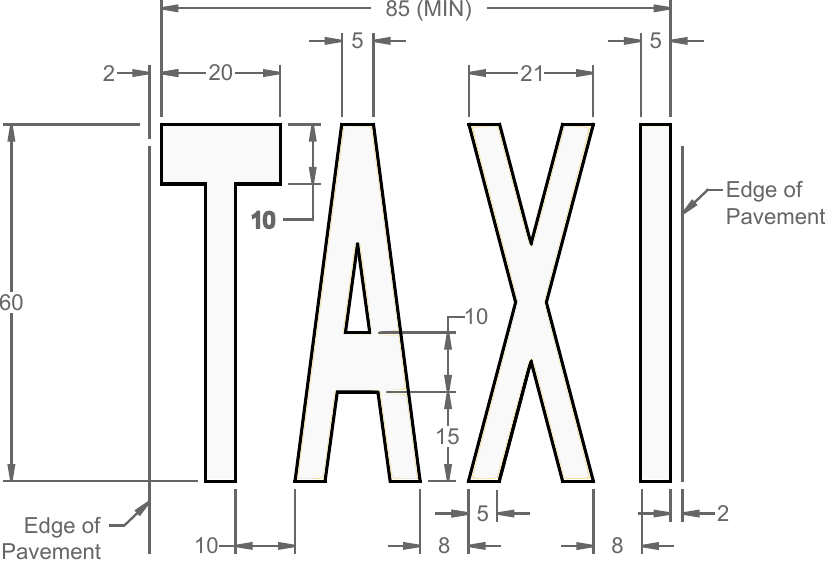}}%
  \caption{Example of (left) a taxiway planar marking, and (right) its known geometry
	(source:~\cite{faa2020advisory}) that can be used for vision-based pose estimation.}%
  \label{fig:taxiway}%
\end{figure}

\begin{figure}[t]
  \centering
	\includegraphics[width=\columnwidth]{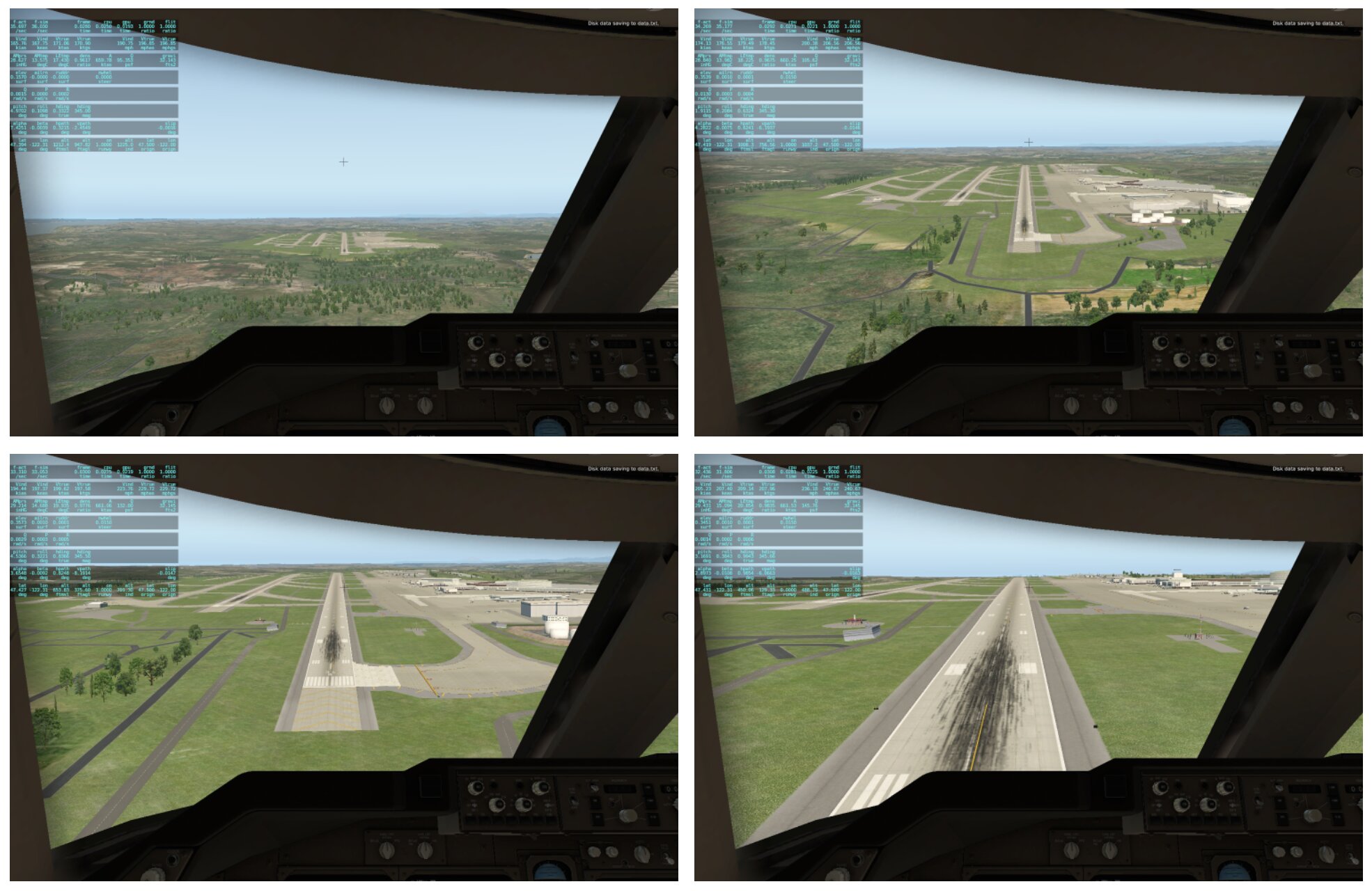}%
  \caption{Sequence of camera images of an aircraft approaching a runway for autonomously landing. The known geometry of the planar runway markings can be used for vision-based pose estimation.}
  \label{fig:landing}
\end{figure}

In this paper, we propose a framework for designing correct-by-construction and certifiable NNs for perception-based pose estimation that comprises three main components.
The first component leverages the known geometry of planar target objects to design a NN-like model,
referred to as a \emph{geometric generative model} (GGM),
that emulates the image formation process of a camera sensor.
Specifically, the framework derives the GGM's architecture and parameters based on the physical parameters of the camera and the geometry of the target object.
Once designed, a GGM, generates an image of the target object from the perspective of a given camera pose.

The second component provides a systematic approach to train a NN estimator that can estimate the pose of the target object from an image while providing worst case guarantees on the performance of this estimator. Uniquely to our analysis, and thanks to the GGM's NN-like structure, we provide guarantees on the estimation error that generalizes to data that was not present during the NN training. 

While the second component addresses the pose estimation problem under ideal conditions in uncluttered environments,
the third component extends the framework to more realistic scenarios where multiple objects may be present in the scene.
To this end, the framework first tackles the problem of object detection by using light-weight forward reachability algorithm, thanks to the GGM NN like architecture.
Once detection is established, the framework consolidates the certified pose estimator with the object detector
into a multi-stage perception pipeline that incorporates spatial filtering and reachability-based reasoning.
This pipeline enables the framework to generalize certifiable pose estimation to cluttered environments,
ensuring reliable operation even in the presence of distractor objects and noise.

We evaluate the proposed framework using both synthetic and real images of various planar objects commonly encountered by autonomous vehicles.
In particular, using images captured by an event-based camera, we demonstrate that a trained encoder can effectively estimate the pose of a traffic sign in accordance with the certified error bounds established by the framework.
These results confirm that the proposed framework not only provides formal guarantees in theory but also achieves reliable performance in practice.

In summary, the main contributions of this paper are as follows:
\begin{itemize}
\item We design GGMs that capture the image formation process of planar objects based on known geometry and camera parameters.
\item We develop a framework for training certifiable NN-based pose estimators that are guaranteed to maintain their certified performance even for data not seen during training.
\item We design a multi-stage perception pipeline that uses forward reachability analysis to build certified object detection, when combined with certified pose estimation, enables operation in cluttered environments.
\item We demonstrate the effectiveness of the framework through numerical simulations and experiments with event-based camera data.
\end{itemize}

The rest of this paper is organized as follows.
\secref{sec:prelims} presents the preliminaries on the image formation process of objects from a camera sensor perspective and the abstraction of objects of interest as polytopic complexes, followed by the formulation of the vision-based pose estimation problem.
\secref{sec:ggm} presents the proposed process for designing a GGM that emulates the image formation process of a camera sensor.
In~\secref{sec:pose_estimation_uncluttered}, we introduce the use of the GGM to provide certified bounds for NN-based pose estimators in uncluttered environments.
In~\secref{sec:detection_uncluttered}, we show how to design vision-based object detectors that reason about the presence of a target object in an image.
In~\secref{sec:pose_estimation_cluttered}, we extend the framework with a multi-stage perception pipeline that integrates detection, spatial filtering, and pose estimation to handle cluttered environments.
For evaluation, \secref{sec:eval} describes numerical and experimental results using both synthetic images and real images captured by an event-based camera, demonstrating that the framework achieves reliable and certifiable pose estimation for planar objects commonly encountered by autonomous vehicles.

\subsection*{Related Work}

DNN-based perception and navigation methods have achieved remarkable performance in recent years, particularly in vision-based navigation and pose estimation tasks.
End-to-end learning approaches have demonstrated empirical success in tasks such as object detection, semantic segmentation, and pose regression.
However, their black-box attributes and lack of provable guarantees on the correctness or robustness of their outputs renders them an unacceptable choice for safety-critical applications.
In those applications, such as autonomous driving and aircraft landing, an erroneous perception or an undetected misclassification can propagate downstream and lead to unsafe states where loss of life and property may occur.

To mitigate those challenges, early formal verification methods cast the verification of feedforward NNs as a Satisfiability-Modulo-Theory (SMT) problem~\cite{huang2017safety}.
These approaches enable exhaustive reasoning about the invariance of classification outputs under bounded input perturbations, formally guaranteeing robustness to small image manipulations.
While precise, SMT-based methods are computationally expensive and scale poorly with network size or input dimensionality, motivating the development of relaxed reachability and convex-optimization formulations~\cite{tran2020verification, tran2021robustness, ostrovsky2022abstraction, khedr2021peregrinn}.
Collectively, these methods provide deterministic safety proofs for limited architectures but remain constrained by computational cost and the complexity of image-based inputs.

Standard NN verification techniques struggle with high-dimensional image inputs, motivating specialized approaches for vision-based control systems.
One direction uses generative models as camera surrogates, training GANs to map system states to plausible images and enabling reachability analysis on the lower-dimensional combined generator-controller network~\cite{katz2022verification, cai2025scalable, geng2025deterministic}.
Alternatively, geometric abstraction methods partition 3D scenes into image-invariant regions or construct interval image representations~\cite{santa2022nnlander, habeeb2023verification, santa2023certified, habeeb2024interval, mitra2024formal}.
While these approaches successfully verify vision-based controllers in specific scenarios, they treat perception model construction and controller design as separate processes, leaving a gap between the perception representation and the downstream control requirements.

Beyond exact verification, several works have sought statistical or contract-based assurance of learned perception systems.
Perception-contract frameworks formalize testable specifications for machine learning components using conformal prediction, systematic abstraction, or counterexample-guided synthesis~\cite{puasuareanu2023closed, li2023refining, sun2024learning, yang2023safe, hsieh2022verifying, ghosh2021counterexample, fremont2020formal}.
Although these methods improve empirical reliability, their guarantees are probabilistic and non-compositional~\cite{abraham2022industry}.
Control-theoretic approaches instead synthesize controllers provably safe despite bounded perception errors by extending control barrier functions to account for measurement uncertainty~\cite{dean2021guaranteeing, dawson2022learning, tong2022enforcing, hsieh2024assuring}.
While effective for runtime safety, these methods assume bounded perception error rather than deriving such bounds from first principles.
Meanwhile, advances in vision architectures have pushed the frontier of perception performance~\cite{gehrig2023recurrent}, but these models remain purely data-driven and unverifiable.

In contrast, our framework unifies physics-based image formation and learning-based estimation within a single certifiable structure.
By encoding camera and object geometry through a GGM with network parameters realized through coordinate transformations rather than learned from data, the perception process itself becomes analyzable via reachability methods.
This enables certified error bounds on pose estimation—providing a formal layer of correctness absent from existing NN-based perception pipelines.
\section{Preliminaries and Problem Formulation}
\label{sec:prelims}
\subsection{Notation}
We use $\R$, $\N$, $\Z$ and $\B$ to denote the set of real, natural, integer and Boolean numbers, respectively.
We use $v_i$ to denote the $i$-th component of a vector $\bm{v} \in \R^n$.
We also use $\|\bm{v}\|_p$ to denote its $p$-norm.
We use $\mathbf{M}_{ij}$ to denote the $(i,j)$-th entry of a matrix $\mathbf{M} \in \R^{m \times n}$.
For a function $f$, we use $\dom{f}$ to denote its domain.
We use $A \lor B$ to denote the element-wise logical OR operation between two matrices $A$ and $B$, and $A \odot B$ to denote the element-wise (Hadamard) product between them.
We use $\ConvexHull(S)$ to denote the convex hull of a set $S$, and $\Powerset(S)$ to denote its power set.
For an angle $\theta$, we use the nomenclature $\Cos{\theta}$ and $\Sin{\theta}$ to denote $\cos{(\theta)}$ and $\sin{(\theta)}$, respectively.
Given a function $f: \mathcal{X} \to \mathcal{Y}$ and a set $S \subseteq \mathcal{X}$, 
we use $\Reachset(f, S)$ to denote the forward reachable set:
\begin{equation*}
	\Reachset(f, S) = \{y \in \mathcal{Y} \mid \exists x \in S .\; y = f(x)\}.
\end{equation*}
\subsection{Target Model}

In this work, a \emph{target} is a planar object whose geometry is known a priori, and
can be represented by the union of a finite number of convex polygons that lie in the same 2D plane.
\begin{definition}[Target Model]\label{def:target_model}
  Given a stationary target in the Euclidean space,
  the target model is composed of a finite set of convex polygons:
  \begin{align*}
    \Target &= \left\{ \Polygon_i \;\Big|\;
		i \! = \! 1, \ldots, \TargetSize,\,
		\Polygon_i = \left( \Vertex_{ij} \right)_{j=1}^{\PolySize_i},\,
		\Vertex_{ij} \in \R^3 \right\},
  \end{align*}
  where
  \begin{itemize}
    \item $\TargetSize$ is the number of convex polygons in the target;
    \item $\Polygon_i$ is the $i$-th polygon of size $\PolySize_i$, defined as a tuple of the polygon's vertices arranged in a CCW winding order;
    \item $\PolySize_i$ is the number of vertices in the $i$-th polygon; and
    \item $\Vertex_{ij} = ({\Vertex_{x}}_{ij},{\Vertex_{y}}_{ij},0),\; \forall \Vertex_{ij} \in \Polygon_i$.
  \end{itemize}
\end{definition}

Note that the definition dictates that the polygons are convex and lie in the same 2D plane ($z=0$),
but the target itself does not have to be convex.
Also, all vertices are defined with respect to a local coordinate frame attached to the target.
We will use $\Clutter$ to denote the set of all objects in the environment other than the target,
referred to as the \emph{clutter}.
Since the target is stationary, we can use the target coordinate frame as a global reference frame.

\begin{example}
  \figref{fig:target_example} shows an example of a stop sign as a target.
  The stop sign is composed of 8 convex polygons, \ie $\TargetSize = 8$,
  together forming a non-convex shape $\Target = \left\{ \Polygon_1, \ldots, \Polygon_8 \right\}$.
  Each polygon $\Polygon_i \in \Target$
	 is defined by its vertices, arranged in a counter-clockwise order; \eg $\Polygon_1 = ( \Vertex_{11}, \Vertex_{12}, \Vertex_{13}, \Vertex_{14} )$.
  While each polygon is convex, the combination of all polygons is a non-convex shape.
\end{example}

\begin{figure}[t]
  \centering%
  \subfloat{\includegraphics[width=0.32\linewidth]{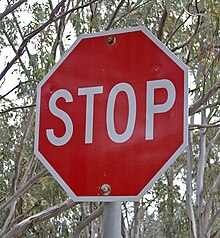}} \hfill%
  \subfloat{\includegraphics[width=0.26\linewidth]{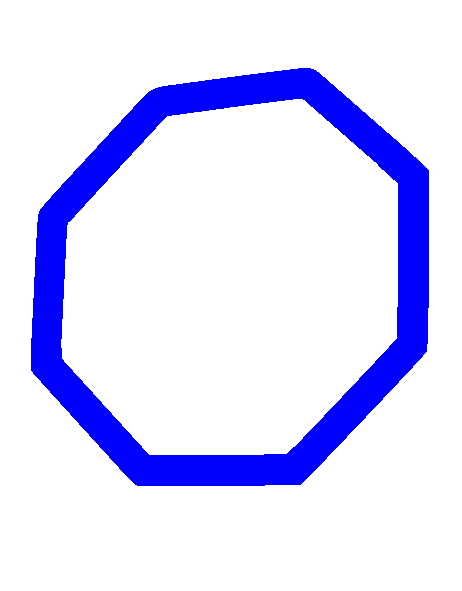}} \hfill%
  \subfloat{\resizebox{0.38\columnwidth}{!}{\begin{tikzpicture}
  \newlength{\octagonradius}
  \setlength{\octagonradius}{40pt}
  \begin{scope}[shift={(0,0)},on grid, scale=1.0, every node/.style={scale=1.0}]
  \draw[thick] (-22.5:\octagonradius) 
      \foreach \a in {22.5,67.5,...,315} { -- (\a:\octagonradius) } -- cycle;
  \draw[thick] (-22.5:1.2\octagonradius) 
      \foreach \a in {22.5,67.5,...,315} { -- (\a:1.2\octagonradius) } -- cycle;
  \foreach \a in {-22.5,22.5,...,315} {
    \draw[thick] (\a:\octagonradius) -- (\a:1.2\octagonradius);
  }
  \foreach \a in {-22.5,22.5} {
      \draw (\a:\octagonradius) node[circle,fill,inner sep=1.2pt]{};
      \draw (\a:1.2\octagonradius) node[circle,fill,inner sep=1.2pt]{};
  }
  \foreach \a in {0,45,...,315} {
    \draw (\a:1.4\octagonradius) node {$\Polygon_{\pgfmathparse{int((\a+22.5)/45+1)}\pgfmathresult}$};
  }
  \draw (22.5:0.9\octagonradius) node[left] {$\Vertex_{12}$};
  \draw (22.5:1.2\octagonradius) node[right] {$\Vertex_{11}$};
  \draw (-22.5:0.9\octagonradius) node[left] {$\Vertex_{13}$};
  \draw (-22.5:1.2\octagonradius) node[right] {$\Vertex_{14}$};
  \draw[thick,-latex] (0,0) -- (0,0.5*\octagonradius) node[above] {$y$};
  \draw[thick,-latex] (0,0) -- (0.5*\octagonradius,0) node[right] {$x$};
  \filldraw[black] (0,0) circle (1.5pt) node[below left] {$(0,0)$};
  \end{scope}
\end{tikzpicture}}}%
  \caption{Example of (left) a stop sign as a target; (center) its image generated by the camera; and (right) its target model composed of 8 polygons representing a non-convex shape.}
  \label{fig:target_example}
\end{figure}
\subsection{Camera Model}

In this work, we consider a mobile, monochromatic camera that produces digital images.
A camera $\Camera$ is characterized by both its intrinsic and extrinsic parameters.
The camera's \emph{intrinsic parameters} include its focal length, image sensor resolution, and geometric center (principal point).
For simplicity, we assume that the focal length of the camera lens in both the horizontal and vertical directions is the same, denoted by $\FocalLength \in \R_{>0}$.
The image sensor resolution is defined by
the number of pixel rows (height) and columns (width) of the image, denoted by $\ImHeightPxl \in \N$ and $\ImWidthPxl \in \N$, respectively.
We will also assume that the principal point of the camera, where the camera's optical axis intersects the image sensor plane, is at the center of the image, that is, $\nicefrac{\ImWidthPxl}{2}$ and $\nicefrac{\ImHeightPxl}{2}$ in the horizontal and vertical directions, respectively.

The camera's \emph{extrinsic parameters} include its position and orientation, \ie its \emph{pose}, and are captured by the tuple
$\Pose = (\PosX, \PosY, \PosZ, \Roll, \Pitch, \Yaw)$,
$\Pose \in \PoseSpace \subset \R^6$, where
$\PosX, \PosY, \PosZ$ denote the camera's position with respect to the global coordinate frame;
$\Roll, \Pitch, \Yaw$ denote its roll, pitch, and yaw angles, respectively;
and $\PoseSpace$ is the pose space.
The digital image produced by the camera is denoted by $\Image \in \B^{\ImSizePxl}$.
We will use $\Image(i, j)$ and $\Image_{ij}$ to denote the pixel value at the $i$-th row and $j$-th column of the image.
In ideal cases, this image depends only on the target $\Target$, the camera pose $\Pose$, as well as the intrinsic parameters of the camera as captured by the following definition.

\begin{definition}[Ideal Camera Model]
  Let $\Target$ be a stationary target.
  The ideal camera model with respect to $\Target$ is a function:
  \begin{align*}
    \Camera_{\Target} \colon \PoseSpace \rightarrow \B^{\ImSizePxl}
    \; \text{ given by } \;
    \Camera_{\Target}(\Pose) = \Image_{{\Target}\Pose},
  \end{align*}
  where
  $\Pose \in \PoseSpace$ is the camera pose, and
  $\Image_{\Target}(\Pose)$ is the ideal image of the target $\Target$ as observed by the camera at pose $\Pose$.
\end{definition}

In practice, camera models are not ideal, since the image captured by the camera also contains other objects in the scene. 
Thus, we identify two types of images based on the presence of other objects in the environment, specifically cluttered and uncluttered images.

\begin{definition}[Image Clutter]\label{def:image_clutter}
  Let $\Target$ be a target object.
  Clutter is defined as the set of all objects in the environment other than the target, denoted by $\Clutter$.
  The image $ \Image(\Pose)$, captured by the camera at pose $\Pose$ in that environment, can be represented as:
  \begin{align*}
    \Image(\Pose) = \ImageIdeal(\Pose) \lor \ImageClutter(\Pose),
  \end{align*} 
  where
  \begin{itemize}
    \item $ \ImageIdeal $ is the ideal image of the target, and
    \item $ \ImageClutter $ is the image of clutter.
  \end{itemize}
  We call an environment cluttered if $ \Clutter \neq \emptyset $, \ie other objects are present in the image, and we call the corresponding image cluttered.
  Similarly, we call an environment uncluttered if $ \Clutter = \emptyset $, \ie no other objects are present in the image, and we characterize the corresponding image uncluttered.
\end{definition}
This definition is centered around partitioning the objects in the environment into the target object, whose geometry is known a priori, and clutter, whose geometry is unknown.
Based on this partitioning, we can define a system model that captures the target, clutter, camera, and pose space.

\begin{definition}[System Model]
  A system model is a tuple $(\Target, \Clutter, \Camera, \PoseSpace)$, where
  \begin{itemize}
    \item $\Target$ is the target object;
    \item $\Clutter$ is the clutter in the environment;
    \item $\Camera \colon \PoseSpace \to \B^{\ImSizePxl},\; \Pose \mapsto \Image(\Pose)$ is the camera model; and
    \item $\PoseSpace$ is the pose space of the camera.
  \end{itemize}
\end{definition}

Note that the camera model $\Camera$ depends on both the target $\Target$ and the clutter $\Clutter$.
In an uncluttered environment (\ie $\Clutter = \emptyset$), the camera model reduces to the ideal camera model of the target, \ie $\Camera = \Camera_{\Target}$.
\subsection{Problem Formulation}

Consider a system $(\Target, \Clutter, \Camera, \PoseSpace)$
that consists of a stationary target $\Target$
located at the origin of the world coordinate frame,
and a camera $\Camera$ at an unknown pose $\Pose \in \PoseSpace$,
producing an image $\Image$ that contains the target.
Our goal is to design a framework for estimating the pose of the camera relative to the stationary target, given the produced image.
To this end, both the target model $\Target$ and the camera model $\Camera$ need to be known a priori.
In practice, however, $\Camera$ is not directly available and, therefore, must be constructed from the camera's intrinsic parameters, which are typically known.
Hence, we define the following problem.

\begin{problem}\label{prob:camera_model}
  Given the intrinsic parameters of a camera $\Camera$ and a target model $\Target$,
  design a target-specific camera model $\Camera_{\Target}$ such that:
  \begin{align*}
    \Camera_{\Target} \colon \PoseSpace \rightarrow \B^{\ImSizePxl} \quad \suchthat \quad \forall \Pose \in \PoseSpace:\;
    \Camera(\Pose) = \Image_{\Target}(\Pose),
  \end{align*}
  where
  \begin{itemize}
    \item $\PoseSpace$ is the pose space of the camera,
    \item $\Image_{\Target}(\Pose)$ is the ideal image of the target $\Target$ at pose $\Pose$, and
    \item $\ImSizePxl$ is the size of the image produced by the camera.
  \end{itemize}
\end{problem}

In other words, \probref{prob:camera_model} seeks to design a target-specific camera model that reproduces the ideal image of the target at any given pose.
With $\Camera_{\Target}$ in place, we can now define the problem of pose estimation.

\begin{problem}\label{prob:pose_estimation}
  Given a system model $(\Target, \Clutter, \Camera_{\Target}, \PoseSpace)$
  and an image $\Image_{\Pose} = \Image(\Pose),\, \Image_{\Pose} \in \ImageSpace$, produced by the camera at an unknown pose $\Pose \in \PoseSpace$,
  design a pose estimator
	$\NN \colon \ImageSpace_{\Target} \to \PoseSpace$
	such that:
	\begin{align*}
		\forall \Pose \in \PoseSpace:\;
		\PoseEst = \NN(\Image_{\Pose}) \quad \text{and} \quad
		\norm{\Pose - \PoseEst} \leq \ErrorBound,
	\end{align*}
  where
  \begin{itemize}
    \item $\Image_{\Pose}$ is the image produced by the camera at pose $\Pose$,
    \item $\hat{\Pose}$ is the estimated pose of the camera, and
    \item $\ErrorBound \in \R$ is a small constant.
  \end{itemize}
\end{problem}

Note that the error bound $\ErrorBound$ quantifies the accuracy of the estimated pose $\hat{\Pose}$, and is typically given as a specification.
Thus, examining the factors that affect the error bound $\ErrorBound$ is crucial for designing a robust pose estimator $\NN$.
Also, since the pose $\Pose$ includes both translational and rotational components,
the norm $\norm{\cdot}$ used to quantify the error denotes an appropriate norm on $\R^6$,
\eg a weighted $p$-norm that accounts for the different units of translation and rotation.
\section{NN-based Generative Geometric Model}
\label{sec:ggm}
To address~\probref{prob:camera_model}, 
we generalize the classical pinhole camera model, originally developed for single points, 
into a neural network-based camera model with free parameters that capture both intrinsic and extrinsic properties. 
Because target objects consist of infinitely many points, applying the pinhole model to each point individually is infeasible. 
Instead, we develop the \emph{generative geometric model} (GGM): 
a neural-network-like, target-specific model capable of generating images of a target at any given pose. 
The GGM for a camera and an $N$-polygon target object consists of three computational stages:
  (i) the camera's \emph{analog behavior} for polygons, which projects the polygon vertices onto the camera sensor;
  (ii) the camera's \emph{digital behavior} for polygons, which maps the projected vertices to discrete pixel values in the image; and
  (iii) the \emph{image composition function}, which combines the pixel values from all polygons to form the final image.
In the remainder of this section, we derive the details of these three stages in turn.
\subsection{Pinhole Camera Model for Single Point}
We begin by revisiting the ideal pinhole camera model for a single point~\cite{hartley2003multiple}.
As shown in~\figref{fig:pinhole_camera}, we first introduce the three frames of reference required to describe the image formation process:
\begin{enumerate}
  \item The Target Coordinate Frame (TCF): is attached to the target object, serves as the frame of reference for points in the target.
  \item The Camera Coordinate Frame (CCF): is attached to the camera, provides the frame of reference for defining the camera's position and orientation relative to the target.
  \item The Pixel Coordinate Frame (PCF): is defined over the image plane, serves as the frame of reference for the image produced by the camera.
\end{enumerate}

\begin{figure}[t]
	\centering
  \resizebox{1\columnwidth}{!}{
\begin{tikzpicture}[isometric view,rotate around x=90, rotate around z=90, rotate around x=70,scale=0.45]
    \tikzset{>=latex} 
    \pgfmathsetmacro{\triangle}{8}
    \pgfmathsetmacro{\triangleDistX}{3}
    \pgfmathsetmacro{\triangleDistY}{-1}
    \pgfmathsetmacro{\triangleDistZ}{12}
    \pgfmathsetmacro{\PCFZ}{-5}
    \pgfmathsetmacro{\gridSizeX}{5}
    \pgfmathsetmacro{\gridSizeY}{6}
    \pgfmathsetmacro{\ratio}{\PCFZ/\triangleDistZ}
    \begin{scope}
      \draw[-Latex,thick] (0,0,0) -- (2,0,0) node[pos=1.5] {$x_\texttt{CCF}$};
      \draw[-Latex,thick] (0,0,0) -- (0,2,0) node[pos=1.5] {$y_\texttt{CCF}$};
      \draw[-Latex,thick] (0,0,0) -- (0,0,2) node[pos=1.5,anchor=north,yshift=-5] {$z_\texttt{CCF}$};
      \draw[dash dot] (0,0,\triangleDistZ) -- (0,0,-5) node[pos=0.25,sloped,anchor=south,font=\fontsize{7pt}{1em}\selectfont] {Optical Axis};
      \draw[C0] (0,0,0) circle (0.4);
    \end{scope}
    \begin{scope}[canvas is xy plane at z=\PCFZ]
      \fill[lightgray!50, opacity=0.4] (-\gridSizeX,-\gridSizeY) rectangle (\gridSizeX,\gridSizeY);
      \draw[help lines,xstep=0.5,ystep=0.5,gray!80] (-\gridSizeX,-\gridSizeY) grid (\gridSizeX,\gridSizeY);
      \draw[<->,color=black!80] (-\gridSizeX-0.5,-\gridSizeY) -- (-\gridSizeX-0.5,\gridSizeY) node[midway,anchor=north east] {$\ImWidthPxl$};
      \draw[<->,color=black!80] (-\gridSizeX,-\gridSizeY-0.5) -- (\gridSizeX,-\gridSizeY-0.5) node[midway,left] {$\ImHeightPxl$};
      \draw[black!80,fill=black!20] (0,0) circle (7pt);
      \draw[fill=black!80] (0,0) circle (1pt);
    \end{scope}
    \begin{scope}[canvas is xy plane at z=\triangleDistZ]
      \coordinate (A) at (\triangleDistX,\triangleDistY);
      \coordinate (B) at ($(A) + (30:\triangle)$);
      \coordinate (C) at ($(B) + (150:\triangle)$);
      \draw[C0,thick,fill=C0!20] (A) -- (B) -- (C) -- cycle;
      \node[anchor=south, yshift=2mm, xshift=-2mm, inner sep=1pt] 
        at (A) {$\Vertex_1$};
      \draw[fill=C0] (A) circle (5pt);
      \node[anchor=west, yshift=2mm] 
        at (B) {$\Vertex_2$};
      \draw[fill=C0] (B) circle (7pt);
      \node[anchor=north west, yshift=0mm, xshift=0mm] 
        at (C) {$\Vertex_3$};
      \draw[fill=C0] (C) circle (9pt);
      \coordinate (TCF) at ($(A)!0.5!(B)!0.3333!(C)$);
      \draw[black!80,fill=black!20] (0,0) circle (7pt);
      \draw[fill=black!80] (0,0) circle (1pt);
    \end{scope}
    
    \begin{scope}
      \draw[fill=black!50]
       (TCF) -- ++(0.5,0,0) -- ++(0,0.5,0) -- ++(-0.5,0,0) -- cycle; 
      \draw[-Latex,thick] (TCF) -- ++(0,2) node[pos=1.,anchor=west] {$x_{\texttt{TCF}}$};
      \draw[-Latex,thick] (TCF) -- ++(2,0) node[pos=1.,anchor=south] {$y_{\texttt{TCF}}$};
      \draw[-Latex,thick] (TCF) -- ++(0,0,-2) node[pos=1.,anchor=north] {$z_{\texttt{TCF}}$};
    \end{scope}
    \begin{scope}[canvas is xy plane at z=0]
      \coordinate (ZnegA) at ($ (0,0,0)!\ratio!(A) $);
      \draw[densely dotted] (A) -- (ZnegA);
      \fill[C3] (ZnegA) circle (6pt) node[anchor=north west, black] {$\Projection_1$};
      \coordinate (ZnegB) at ($ (0,0,0)!\ratio!(B) $);
      \draw[densely dotted] (B) -- (ZnegB);
      \fill[C3] (ZnegB) circle (5pt) node[anchor=north west, black] {$\Projection_2$};
      \coordinate (ZnegC) at ($ (0,0,0)!\ratio!(C) $);
      \draw[densely dotted] (C) -- (ZnegC);
      \fill[C3] (ZnegC) circle (5pt) node[anchor=east, black] {$\Projection_3$};
      \draw[C3,fill=C3!20,opacity=0.5] (ZnegA) -- (ZnegB) -- (ZnegC) -- cycle;
      
    \end{scope}
    
    \begin{scope}
      \draw[fill=black!50]
       (\gridSizeX,\gridSizeY,\PCFZ) -- ++(-0.5,0,0) -- ++(0,-0.5,0) -- ++(0.5,0,0) -- cycle;
      \draw[-Latex,thick] (\gridSizeX,\gridSizeY,\PCFZ) -- (\gridSizeX,\gridSizeY-2,\PCFZ) node[pos=1,anchor=south west, xshift=-4] {$x_{\texttt{PCF}}$};
      \draw[-Latex,thick] (\gridSizeX,\gridSizeY,\PCFZ) -- (\gridSizeX-2,\gridSizeY,\PCFZ) node[pos=1,anchor=west] {$y_{\texttt{PCF}}$};
    \end{scope}
  
    \begin{scope}
      \node[scale=1] at (\gridSizeX,-\gridSizeY+1,\PCFZ) [anchor=west,yshift=8] {Camera Sensor};
      \node[scale=1] at (B) [anchor=east,xshift=-5] {Target Polygon};
      
      \draw[thin,scale=1,gray] (0,4,0) -- (0,\gridSizeY*2+2,0);
      \draw[thin,scale=1,gray] (-\gridSizeX,\gridSizeY+0.5,\PCFZ) -- (-\gridSizeX,\gridSizeY+1.5,\PCFZ);
      \draw[<->,color=black!80] (0,\gridSizeY*2+1.5,0) -- (-\gridSizeX,\gridSizeY+1,\PCFZ) node[midway,anchor=north west] {$\FocalLength$};
    \end{scope}
  \end{tikzpicture}}
	\caption{Ideal pinhole camera illustration, showing the target (TCF), camera (CCF), and pixel (PCF) coordinate frames.
  $\Projection_i$ denotes the projection of point $\Vertex_i$ onto the camera sensor.}
	\label{fig:pinhole_camera}
\end{figure}
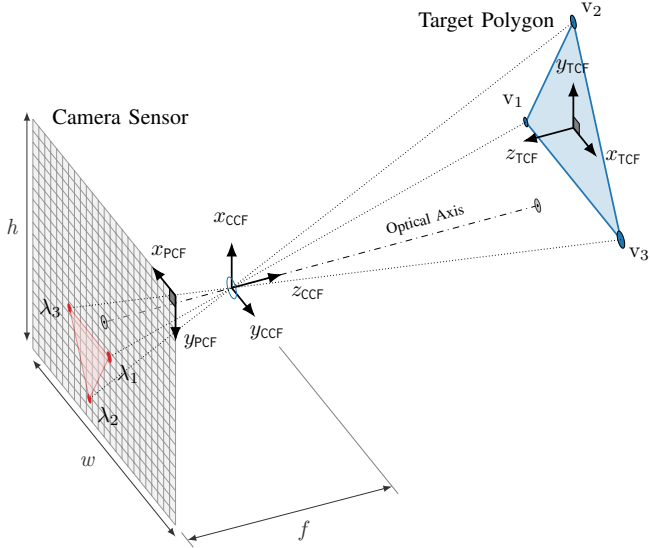

Using these frames, the mapping of a point on the target to a pixel in the image produced by the camera can be divided into two stages as mentioned earlier.
In the first stage, the analog behavior maps the coordinates of a point on the target
(expressed relative to the camera) to the coordinates of its projection on the camera sensor (relative to a local frame of reference).
Consider a camera pose $\Pose = (\PosX, \PosY, \PosZ, \Roll, \Pitch, \Yaw)$
with respect to the TCF of a target $\Target$.
We then formally define the analog behavior of the camera over a point $\Vertex$ of the target's 2D plane as follows.

\begin{definition}[Analog Behavior]
  The analog behavior of the camera for point $\Vertex$ is a function:
  \begin{align*}
    \AnalogFn^{\Vertex}\colon \PoseSpace \to \R^3, \quad 
		\Pose \mapsto \Projection^\CCF,
  \end{align*}
  where $\Projection$ is the projection of point $\Vertex$ by the camera lens onto the camera sensor, and
  $\Projection^\CCF \in \R^3$ is the coordinates of $\Projection$ in the CCF given the camera pose $\Pose$.
\end{definition}
Note that an analog function $\AnalogFn^\Vertex$ is defined for a specific point $\Vertex$,
and its exact form is determined by the camera's intrinsic parameters.
To this end, we examine two operations that $\Vertex$ undergoes:
the transformation of its coordinates from the TCF to the CCF, and
the projection of that point by the camera lens onto the camera sensor.
The transformation of the coordinates of point $\Vertex$ in the TCF, denoted by $\Vertex^\TCF$,
to its coordinates in the CCF, denoted by $\Vertex^\CCF$,
is obtained by applying both the rotation and translation matrices~\cite{hartley2003multiple},
denoted $\Rot{\Pose}$ and $\Tran{\Pose}$, respectively, to $p^\TCF$, where:
\begingroup\setlength{\arraycolsep}{0.4em}\begin{align*}
  \Tran{\Pose} 
    & = \begin{bmatrix}
          \PosX & \PosY & \PosZ
        \end{bmatrix}^{\intercal },
    \\[1ex]
  \Rot{\Pose}
    & = \RotZ{\Yaw} \cdot \RotY{\Pitch} \cdot \RotX{\Roll}
    \\
    & = \begin{bmatrix}
        \Cos{\Yaw} \Cos{\Pitch} 
          & \Cos{\Yaw} \Sin{\Pitch} \Sin{\Roll} - \Sin{\Yaw} \Cos{\Roll} 
          & \Cos{\Yaw} \Sin{\Pitch} \Cos{\Roll} + \Sin{\Yaw} \Sin{\Roll} \\
        \Sin{\Yaw} \Cos{\Pitch} 
          & \Sin{\Yaw} \Sin{\Pitch} \Sin{\Roll} + \Cos{\Yaw} \Cos{\Roll} 
          & \Sin{\Yaw} \Sin{\Pitch} \Cos{\Roll} - \Cos{\Yaw} \Sin{\Roll} \\
        -\Sin{\Pitch} 
          & \Cos{\Pitch} \Sin{\Roll} 
          & \Cos{\Pitch} \Cos{\Roll}
      \end{bmatrix},
\end{align*}\endgroup
and $\Pose = (\PosX, \PosY, \PosZ, \Roll, \Pitch, \Yaw)$.
The transformed point coordinates can then be written as:
\begin{align}
	\Vertex^\CCF = \Rot{\Pose} \cdot \Vertex^\TCF + \Tran{\Pose}.
  \label{eq:extrinsic} 
\end{align}
Next, let $\Projection$ be the projection of point $\Vertex$ by the camera lens onto the camera sensor.
Using the camera intrinsic matrix $\Intrinsic$, the homogeneous pixel coordinates of $\Projection$ can be obtained as follows:
\begin{align}
  \Projection^\CCF &= \Intrinsic \cdot \Vertex^\CCF  
  =   
  \begin{bmatrix}
    \FocalLength & 0 & \nicefrac{\ImWidthPxl}{2} \\
    0 & \FocalLength & \nicefrac{\ImHeightPxl}{2} \\
    0 & 0 & 1 
  \end{bmatrix}
  \begin{bmatrix}
    \Vertex^\CCF_x \\
    \Vertex^\CCF_y \\
    \Vertex^\CCF_z
  \end{bmatrix}. \label{eq:intrinsic}
\end{align}
By substituting \eqref{eq:extrinsic} into \eqref{eq:intrinsic},
we can define the exact form of the analog function $\AnalogFn^{\Vertex}$ as follows:
\begin{align}
  \AnalogFn^{\Vertex}(\Pose)
  &= \Projection^\CCF
  = \Intrinsic \cdot \left( \Rot{\Pose} \cdot \Vertex^\TCF + \Tran{\Pose} \right).
  \label{eq:analog_fn}
\end{align}

In the second stage, the digital behavior maps these homogeneous coordinates to discrete pixel values in the image.
We formally define the digital behavior as follows.

\begin{definition}[Digital Behavior]\label{def:digital_behavior}
  The digital behavior of the camera for a point $\Vertex$
  at pixel $\Pixel \in \Z^2$ in the image is a function:
  \begin{align*}
    \DigitalFn^{\Vertex,\Pixel} \colon \Projection^\CCF \mapsto \Image(\Pixel),
  \end{align*}
  where $\Projection^\CCF$ is the homogeneous coordinates of $\Vertex$ in the CCF,
	obtained from the analog stage; and
  $\Image(\Pixel) \in \B$ is the value at pixel $\Pixel$ in the image.
\end{definition}

Note that the digital function $\DigitalFn^{\Vertex,\Pixel}$ is defined for a specific point $\Vertex$ and pixel $\Pixel$.
To realize this function, we compute the corresponding pixel coordinates in the PCF
that correspond to $\Projection^\CCF$, denoted by $\Projection^\PCF$,
by rounding the normalized coordinates of $\Projection^\CCF$ to the nearest integer:
\begin{align*}
  \Projection^\PCF = \left( \left\lfloor \nicefrac{\Projection^\CCF_x}{\Projection^\CCF_z} \right\rfloor, \left\lfloor \nicefrac{\Projection^\CCF_y}{\Projection^\CCF_z} \right\rfloor \right).
\end{align*}
Therefore, the projection of a single point $\Vertex$ results in setting the pixel value at $\Projection^\PCF$ to $1$ in the image produced by the camera provided that $\Projection^\PCF$ lies within the image boundaries.
Consequently, we can define the exact form of the digital function for any pixel
$\Pixel = (\Pixel_x, \Pixel_y)$ as follows:
\begin{align}
  \DigitalFn^{\Vertex,\Pixel}(\Projection^\CCF) = \begin{cases}
    1 & \text{if }  \Projection^\PCF \! = \! \Pixel,\;
                    1 \!\leq\! \Pixel_x \!\leq\! \ImWidthPxl,\;
                    1 \!\leq\! \Pixel_y \!\leq\! \ImHeightPxl,\\
    0 & \text{otherwise}.
  \end{cases}
  \label{eq:digital_fn}
\end{align}
\subsection{Generalized Camera Model for Convex Polygons}
So far we have derived the exact form of the analog and digital
functions~\eqref{eq:analog_fn} and~\eqref{eq:digital_fn} for mapping a single point $\Vertex$ to the image produced by the camera.
However, a target object has virtually an infinite number of points,
making it infeasible to apply the analog and digital functions to each point individually.
To address this issue, we derive a procedure that efficiently maps a target object to the image produced by the camera by leveraging the convex polygon-based representation of target objects that we introduced earlier in~\defref{def:target_model}.

We start by defining the analog behavior for a convex polygon $\Polygon$ as the
collection of analog functions applied to each vertex $\Vertex \in \Polygon$, that is:
\begin{align*}
  \AnalogFn^{\Polygon} : \PoseSpace \to (\R^3)^{\PolySize}, \qquad
  \Pose \mapsto \big(\Projection_1^{\CCF}(\Pose),\ldots,\Projection_{\PolySize}^{\CCF}(\Pose)\big),
\end{align*}
where $\Projection_j^{\CCF}(\Pose) := \AnalogFn^{\Vertex_j}(\Pose) \in \R^3$ denotes the (homogeneous) image-plane coordinates of the projection of $\Vertex_j$ at pose $\Pose$.
We denote the pose-dependent collection of projected vertices by
\begin{align*}
  \ProjectionSet^{\CCF}(\Pose) \;:=\; \big\{ \Projection_1^{\CCF}(\Pose),\ldots,\Projection_{\PolySize}^{\CCF}(\Pose) \big\}.
\end{align*}
The digital behavior for a convex polygon, however, must consider all the points within the polygon's convex hull rather than just its vertices.
Hence, we extend the definition of the digital behavior to convex polygons as follows.
\begin{definition}[Digital Behavior for Convex Polygons]
  The digital behavior of the camera for a convex polygon $\Polygon$
  at pixel $\Pixel \in \{1, \ldots, \ImHeightPxl\} \times \{1, \ldots, \ImWidthPxl\}$ in the image produced by the camera is a function:
  \begin{align*}
    \DigitalFn^{\Polygon,\Pixel} \colon \ProjectionSet^{\CCF} \mapsto \Image(\Pixel)
    \;\;\suchthat\;\;
    \Image(\Pixel) = \begin{cases}
      1 & \text{if } \Pixel \in \ConvexHull\left(\ProjectionSet^{\CCF}\right), \\
      0 & \text{otherwise},
    \end{cases}
  \end{align*}
  where $\ProjectionSet^{\CCF} = \{ \Projection^{\CCF}_1, \ldots, \Projection^{\CCF}_{\PolySize} \}$ is
	the set of homogeneous image coordinates of the projections of the vertices of $\Polygon$,
  and $\ConvexHull(\ProjectionSet^{\CCF})$ is the convex hull of $\ProjectionSet^{\CCF}$.
\end{definition}
Unlike the single-point case, where only one pixel is activated, the polygon digital behavior activates all pixels inside the convex hull of the projected vertices, subject to the field-of-view constraints of the image.
Consequently, instead of applying the digital function to each point in the polygon individually,
we can now obtain the image of the polygon by only applying the polygon digital function at each pixel in the image.
To realize $\DigitalFn^{\Polygon,\Pixel}$, we implement the condition $\Pixel \in \ConvexHull\left(\ProjectionSet^{\CCF}\right)$ by applying an orientation test based on the cross product of the pixel coordinates with each edge of the projected polygon $\ProjectionSet^{\CCF}$.
For a pixel $\Pixel = (\Pixel_x, \Pixel_y)$,
we start by picking the first two projected vertices $\Projection_1, \Projection_2 \in \ProjectionSet^{\CCF}$,
and compute the cross product of the vectors $\overrightarrow{\Projection_1\Projection_2}$ and $\overrightarrow{\Projection_1\Pixel}$ as follows:
\begin{align}
  \Cross_{\Pixel}(1,2)
  &\triangleq \Vector{\Projection_1\Projection_2} \times \Vector{\Projection_1\Pixel} \nonumber \\
  &= \Pixel_y \left( \Projection_{2x} - \Projection_{1x} \right)
     - \Pixel_x \left( \Projection_{2y} - \Projection_{1y} \right) \nonumber \\
     & \phantom{=\mkern\medmuskip} + \left( \Projection_{2x}\Projection_{1y} - \Projection_{2y}\Projection_{1x} \right). \label{eq:cross_product}
\end{align}
We repeat this computation for each consecutive pair of vertices
$\Cross_{\Pixel}(i,n-1)$, $i=1 \ldots n$,
and for the closing edge $\Cross_{\Pixel}({\PolySize},1)$.
If all cross products share the same sign, then the condition $\Pixel \in \ConvexHull\left(\ProjectionSet^{\CCF}\right)$ holds, and the pixel is activated.
That is:
\begin{align}
  \DigitalFn^{\Polygon,\Pixel}(\ProjectionSet^{\CCF}) \! = \! \begin{cases}
    1 & \text{if } \forall i \in \{1, \ldots, \PolySize\!-\!1\}\colon \\
      & \sgn\left( \Cross_{\Pixel}(i,{i\!\!+\!\!1}) \right) \! = \!
        \sgn\left( \Cross_{\Pixel}(\PolySize,1) \right) \\
    0 & \text{otherwise}.
  \end{cases} \label{eq:digital_fn_sign_test}
\end{align}
Finally, by applying~\eqref{eq:cross_product} and~\eqref{eq:digital_fn_sign_test} at every pixel location, we can construct the image of the polygon as follows:
\begin{align}
  \Image_{\Polygon} (\Pose) = \begin{bmatrix}
    \DigitalFn^{\Polygon,(\PixelX,\PixelY)}(\ProjectionSet^{\CCF})
  \end{bmatrix}_{\PixelX=1,\PixelY=1}^{\ImHeightPxl,\ImWidthPxl}.
  \label{eq:polygon_image}
\end{align}
This procedure allows for efficient generation of the polygon image without
evaluating the analog and digital behaviors for infinitely many interior points.

\begin{example}
	\figref{fig:pixel_test} illustrates the orientation test for determining whether a pixel $\Pixel$ lies within the convex hull of the projected vertices $\Projection_1, \Projection_2, \Projection_3$ of the convex polygon from~\figref{fig:pinhole_camera}.
	In this example, the signs of the cross products $\Cross_{\Pixel}(1,2)$, $\Cross_{\Pixel}(2,3)$, and $\Cross_{\Pixel}(3,1)$ is depicted for each pixel, where the color of each pixel indicates the sign of the corresponding cross product at that pixel.
	For example, for $\Pixel = (12,10)$:
	$$ \Cross_{(12,10)}(1,2) < 0,\; \Cross_{(12,10)}(2,3) < 0,\; \Cross_{(12,10)}(3,1) > 0, $$
	so $\DigitalFn^{\Polygon,(12,10)}(\ProjectionSet^{\CCF}) = 0$.
	However, for $\Pixel = (15,14)$, we have:
	$$ \Cross_{(15,14)}(1,2),\; \Cross_{(15,14)}(2,3),\; \Cross_{(15,14)}(3,1)\; < 0, $$
	so $\DigitalFn^{\Polygon,(15,14)}(\ProjectionSet^{\CCF}) = 1$ since all cross products share the same sign.
\end{example}

\begin{figure}[t]
	\centering
  \resizebox{1\columnwidth}{!}{
\begin{tikzpicture}[x=1cm,y=1cm, every node/.style={font=\footnotesize}]
  \def\rows{20}               
  \def\cols{24}               
  \def\cell{0.22}             
  \def\gx{gray!40}            
  \def\bg{gray!8}             
	
	\newcommand{\PxA}{11.6}
	\newcommand{\PyA}{13.2}
	\newcommand{\PxB}{14.9}
	\newcommand{\PyB}{18.8}
	\newcommand{\PxC}{18.6}
	\newcommand{\PyC}{13.2}
  \pgfmathsetmacro{\W}{\cols*\cell}
  \pgfmathsetmacro{\H}{\rows*\cell}
	
	\newcommand{\DrawPixelSquare}[2]{%
		\draw[C0, thick] %
			({#1*\cell - \cell},{\H - #2*\cell}) %
				rectangle %
			({#1*\cell},{\H - (#2-1)*\cell});%
	}
	\newcommand{\DrawPixelPositive}[2]{%
		\fill[C0!40, opacity=0.3] %
			({#1*\cell - \cell},{\H - #2*\cell}) %
				rectangle %
			({#1*\cell},{\H - (#2-1)*\cell});%
	}
	\newcommand{\DrawPixelNegative}[2]{%
		\fill[C1!90, opacity=0.3] %
			({#1*\cell - \cell},{\H - #2*\cell}) %
				rectangle %
			({#1*\cell},{\H - (#2-1)*\cell});%
	}
	\newcommand{\DrawPixelPolygon}[2]{%
		\fill[C3!90, fill opacity=0.3, draw=C3!80] %
			({#1*\cell - \cell},{\H - #2*\cell}) %
				rectangle %
			({#1*\cell},{\H - (#2-1)*\cell});%
	}
	\begin{scope}[shift={(0.32*\W+0.1,0)}, scale=1.0, transform shape] 
		\foreach \i in {1,...,\numexpr\cols-1\relax} {\draw[\gx, line width=0.25pt] (\i*\cell,0) -- (\i*\cell,\H);}
		\foreach \j in {1,...,\numexpr\rows-1\relax} {\draw[\gx, line width=0.25pt] (0,\j*\cell) -- (\W,\j*\cell);}
		
		\foreach \x in {1,...,\cols} {
		\foreach \y in {1,...,\rows} {
			\pgfmathsetmacro{\dxA}{\x-\PxA+0.5}
			\pgfmathsetmacro{\dyA}{\y-\PyA+0.5}
			\pgfmathsetmacro{\uxA}{\PxB-\PxA-0.55}
			\pgfmathsetmacro{\uyA}{\PyB-\PyA-0.5}
			\pgfmathsetmacro{\crosspA}{\uxA*\dyA - \uyA*\dxA  }
			\pgfmathsetmacro{\crossnA}{-\uxA*\dyA + \uyA*\dxA }
			\pgfmathsetmacro{\dxB}{\x-\PxB-1.0}
			\pgfmathsetmacro{\dyB}{\y-\PyB-1.0}
			\pgfmathsetmacro{\uxB}{\PxC-\PxB+0.1}
			\pgfmathsetmacro{\uyB}{\PyC-\PyB+0.1}
			\pgfmathsetmacro{\crosspB}{\uxB*\dyB - \uyB*\dxB  +0}
			\pgfmathsetmacro{\crossnB}{-\uxB*\dyB + \uyB*\dxB +0}
			\pgfmathsetmacro{\dxC}{\x-\PxC-0.5}
			\pgfmathsetmacro{\dyC}{\y-\PyC-0.5}
			\pgfmathsetmacro{\uxC}{\PxA-\PxC}
			\pgfmathsetmacro{\uyC}{\PyA-\PyC}
			\pgfmathsetmacro{\crosspC}{\uxC*\dyC - \uyC*\dxC  }
			\pgfmathsetmacro{\crossnC}{-\uxC*\dyC + \uyC*\dxC }
			\ifdim \crossnA pt>0pt
			\ifdim \crossnB pt>0pt
			\ifdim \crossnC pt>0pt
				\DrawPixelPolygon{\x}{\y}
			\fi
			\fi
			\fi
		}}
		\coordinate (O) at (0,\H);
		\draw[-latex, thick] (O) -- ++(0.9,0) node[below right=-2pt] {$x$};
		\draw[-latex, thick] (O) -- ++(0,-0.9) node[right=-2pt] {$y$};
		\path[fill=\bg, draw=black, line width=0.4pt, fill opacity=0] (0,0) rectangle (\W,\H);
		\coordinate (PA) at ({\PxA*\cell},{\H - \PyA*\cell});
		\coordinate (PB) at ({\PxB*\cell},{\H - \PyB*\cell});
		\coordinate (PC) at ({\PxC*\cell},{\H - \PyC*\cell});
		\path[fill=none, draw=C3!70!black, line width=2.0pt, opacity=0.5] (PA) -- (PB) -- (PC) -- cycle;
		\fill[C3!80!black] (PA) circle[radius=0.06cm] node[above left=-2pt,black] {$\Projection_{1}$};
		\fill[C3!80!black] (PB) circle[radius=0.06cm] node[left=2pt       ,black] {$\Projection_{2}$};
		\fill[C3!80!black] (PC) circle[radius=0.06cm] node[above right=-2pt,black] {$\Projection_{3}$};
	
	\end{scope}
	\begin{scope}[shift={(0,0.68*\H)}, scale=0.32] 
		\path[fill=\bg, draw=black, line width=0.4pt] (0,0) rectangle (\W,\H);
		\foreach \i in {1,...,\numexpr\cols-1\relax} {\draw[\gx, line width=0.25pt] (\i*\cell,0) -- (\i*\cell,\H);}
		\foreach \j in {1,...,\numexpr\rows-1\relax} {\draw[\gx, line width=0.25pt] (0,\j*\cell) -- (\W,\j*\cell);}
		\coordinate (PA) at ({\PxA*\cell},{\H - \PyA*\cell});
		\coordinate (PB) at ({\PxB*\cell},{\H - \PyB*\cell});
		\coordinate (PC) at ({\PxC*\cell},{\H - \PyC*\cell});
		\path[fill=C3!30, draw=C3!70!black, line width=0.4pt, fill opacity=0.0] (PA) -- (PB) -- (PC) -- cycle;
		\fill[C3!80!black] (PA) circle[radius=0.06cm]; 
		\fill[C3!80!black] (PB) circle[radius=0.06cm]; 
		\fill[C3!80!black] (PC) circle[radius=0.06cm]; 
		\draw[-{Latex[scale=0.6]}, thick, black] (PA) -- (PB) node[midway,left,xshift=-2pt,yshift=0pt,scale=0.5] {$\Vector{\Projection_1\Projection_2}$};
	
		\foreach \x in {1,...,\cols} {
		\foreach \y in {1,...,\rows} {
			\pgfmathsetmacro{\dx}{\x-\PxA-0.0}
			\pgfmathsetmacro{\dy}{\y-\PyA-1.0}
			\pgfmathsetmacro{\ux}{\PxB-\PxA}
			\pgfmathsetmacro{\uy}{\PyB-\PyA}
			\pgfmathsetmacro{\crossp}{\ux*\dy - \uy*\dx  }
			\pgfmathsetmacro{\crossn}{-\ux*\dy + \uy*\dx }
			\ifdim \crossp pt>0pt
				\DrawPixelPositive{\x}{\y}
			\fi
			\ifdim \crossn pt>0pt
				\DrawPixelNegative{\x}{\y}
			\fi
		}}
	\end{scope}
	\begin{scope}[shift={(0,0.34*\H)}, scale=0.32] 
		\path[fill=\bg, draw=black, line width=0.4pt] (0,0) rectangle (\W,\H);
		\foreach \i in {1,...,\numexpr\cols-1\relax} {\draw[\gx, line width=0.25pt] (\i*\cell,0) -- (\i*\cell,\H);}
		\foreach \j in {1,...,\numexpr\rows-1\relax} {\draw[\gx, line width=0.25pt] (0,\j*\cell) -- (\W,\j*\cell);}
		\coordinate (PA) at ({\PxA*\cell},{\H - \PyA*\cell});
		\coordinate (PB) at ({\PxB*\cell},{\H - \PyB*\cell});
		\coordinate (PC) at ({\PxC*\cell},{\H - \PyC*\cell});
		\path[fill=C3!30, draw=C3!70!black, line width=0.4pt, fill opacity=0.0] (PA) -- (PB) -- (PC) -- cycle;
		\fill[C3!80!black] (PA) circle[radius=0.06cm]; 
		\fill[C3!80!black] (PB) circle[radius=0.06cm]; 
		\fill[C3!80!black] (PC) circle[radius=0.06cm]; 
		\draw[-{Latex[scale=0.6]}, thick, black] (PB) -- (PC) node[midway,right,xshift=1pt,yshift=-2pt,scale=0.5] {$\Vector{\Projection_2\Projection_3}$};
	
		\foreach \x in {1,...,\cols} {
		\foreach \y in {1,...,\rows} {
			\pgfmathsetmacro{\dx}{\x-\PxB-1.1}
			\pgfmathsetmacro{\dy}{\y-\PyB-1.1}
			\pgfmathsetmacro{\ux}{\PxC-\PxB}
			\pgfmathsetmacro{\uy}{\PyC-\PyB}
			\pgfmathsetmacro{\crossp}{\ux*\dy - \uy*\dx  }
			\pgfmathsetmacro{\crossn}{-\ux*\dy + \uy*\dx }
	
			\ifdim \crossp pt>0pt
				\DrawPixelPositive{\x}{\y}
			\fi
			
			\ifdim \crossn pt>0pt
				\DrawPixelNegative{\x}{\y}
			\fi
		}}
	\end{scope}
	
	\begin{scope}[shift={(0,0)}, scale=0.32] 
		\path[fill=\bg, draw=black, line width=0.4pt] (0,0) rectangle (\W,\H);
		\foreach \i in {1,...,\numexpr\cols-1\relax} {\draw[\gx, line width=0.25pt] (\i*\cell,0) -- (\i*\cell,\H);}
		\foreach \j in {1,...,\numexpr\rows-1\relax} {\draw[\gx, line width=0.25pt] (0,\j*\cell) -- (\W,\j*\cell);}
		\coordinate (PA) at ({\PxA*\cell},{\H - \PyA*\cell});
		\coordinate (PB) at ({\PxB*\cell},{\H - \PyB*\cell});
		\coordinate (PC) at ({\PxC*\cell},{\H - \PyC*\cell});
		\path[fill=C3!30, draw=C3!70!black, line width=0.4pt, fill opacity=0.0] (PA) -- (PB) -- (PC) -- cycle;
		\fill[C3!80!black] (PA) circle[radius=0.06cm]; 
		\fill[C3!80!black] (PB) circle[radius=0.06cm]; 
		\fill[C3!80!black] (PC) circle[radius=0.06cm]; 
		\draw[-{Latex[scale=0.6]}, thick, black] (PC) -- (PA) node[midway,above,xshift=0pt,yshift=0pt,scale=0.5] {$\Vector{\Projection_3\Projection_1}$};
	
		\foreach \x in {1,...,\cols} {
		\foreach \y in {1,...,\rows} {
			\pgfmathsetmacro{\dx}{\x-\PxC-0.5}
			\pgfmathsetmacro{\dy}{\y-\PyC-0.5}
			\pgfmathsetmacro{\ux}{\PxA-\PxC}
			\pgfmathsetmacro{\uy}{\PyA-\PyC}
			\pgfmathsetmacro{\crossp}{\ux*\dy - \uy*\dx  }
			\pgfmathsetmacro{\crossn}{-\ux*\dy + \uy*\dx }
	
			\ifdim \crossp pt>0pt
				\DrawPixelPositive{\x}{\y}
			\fi
			
			\ifdim \crossn pt>0pt
				\DrawPixelNegative{\x}{\y}
			\fi
		}}
	\end{scope}
	\begin{scope}[shift={(0.32*\W+0.1,0)}]
		\def\pix{0.25}          
		\def\gap{0.2}          
		\def\shifty{-0.12}
		\def\shiftx{-2.3}
		\coordinate (LEGNE) at (\W-0.1,\H-0.1);   
		\pgfmathsetmacro{\rowstep}{\pix+\gap}
		\path[fill=white, fill opacity=0.9, draw=black!40, rounded corners=1pt]
			($(LEGNE)+(\shiftx-0.2, -4*\rowstep-0.1)$) rectangle ($(LEGNE)+(0,0)$);
		\path[fill=C0!90, draw=black!20, fill opacity=0.7]
			($(LEGNE)+(\shiftx,-0*\rowstep-\pix+\shifty)$) rectangle ++(\pix,\pix);
		\node[anchor=west,scale=0.8] at ($(LEGNE)+(\shiftx+\pix+0.15,-0.5*\rowstep+0.5*\shifty)$)
			{$\Cross_{\Pixel}(i,i\!+\!1)\geq 0$};
		\path[fill=C1!90, draw=black!20, fill opacity=0.7]
			($(LEGNE)+(\shiftx,-1*\rowstep-\pix+\shifty)$) rectangle ++(\pix,\pix);
		\node[anchor=west,scale=0.8] at ($(LEGNE)+(\shiftx+\pix+0.15,-1.5*\rowstep+0.5*\shifty)$)
			{$\Cross_{\Pixel}(i,i\!+\!1)<0$};
		\path[fill=C3!90, draw=C3!80, fill opacity=0.5]
			($(LEGNE)+(\shiftx,-2*\rowstep-\pix+\shifty)$) rectangle ++(\pix,\pix);
		\node[anchor=west,scale=0.8] at ($(LEGNE)+(\shiftx+\pix+0.15,-2.5*\rowstep+0.5*\shifty)$)
			{$\DigitalFn^{\Polygon,\Pixel}(\ProjectionSet^{\CCF})=1$};
		\path[fill=white, draw=black!50, fill opacity=0.5]
			($(LEGNE)+(\shiftx,-3*\rowstep-\pix+\shifty)$) rectangle ++(\pix,\pix);
		\node[anchor=west,scale=0.8] at ($(LEGNE)+(\shiftx+\pix+0.15,-3.5*\rowstep+0.5*\shifty)$)
			{$\DigitalFn^{\Polygon,\Pixel}(\ProjectionSet^{\CCF})=0$};
	\end{scope}

\end{tikzpicture}}
	\caption{Orientation test for determining whether each pixel $\Pixel$ lies within the convex hull of the projected vertices $\Projection_1, \Projection_2, \Projection_3$ of the convex polygon from~\figref{fig:pinhole_camera}.
	The sign of the cross products are shown for $\Cross_{\Pixel}(1,2)$, $\Cross_{\Pixel}(2,3)$, and $\Cross_{\Pixel}(3,1)$.
	}
	\label{fig:pixel_test}
\end{figure}
\subsection{Compositional Model for $\TargetSize$-Polygon Targets}

The analog and digital behaviors defined so far describe how to generate the image of a single convex polygon $\Polygon_i$ at a given camera pose~$\Pose$.
We now extend this formulation to entire target objects, which are represented as collections of polygons $\Target = \{\Polygon_1, \ldots, \Polygon_\TargetSize\}$.
For each polygon $\Polygon_i \in \Target$, the analog and digital behaviors yield a binary image $\Image_{\Polygon_i}(\Pose)$.
The image of the target $\Target$ is then obtained by applying a composition expression $\Composition$ over the polygon images, generated by the grammar
\begin{align*}
	\Composition ::=\
		\Image_{\Polygon_i}(\Pose)\ \mid\
		\neg \Composition\ \mid\
		\Composition_1 \vee \Composition_2\ \mid\
		\Composition_1 \wedge \Composition_2\ \mid\
		\Composition_1 \oplus \Composition_2 ,
\end{align*}
where $\vee$, $\wedge$, and $\oplus$ denote element-wise Boolean operations OR, AND, and XOR, respectively. The semantics of these operators are defined in~\tabref{tab:composition_operators}.
Here, all operations are applied pixelwise on the binary images produced by the analog and digital behaviors. At the geometric level, these correspond respectively to the union, intersection, and symmetric difference of the polygons' projections.
Thus, the compositional model provides a general mechanism to assemble the target image from its constituent polygons.

\begin{table}[t]
	\centering
	\caption{Image Composition Operator Semantics}\label{tab:composition_operators}
	\renewcommand{\arraystretch}{1.4}
	\setlength{\tabcolsep}{0.2em}
	\begin{tabular}{rll}
		\toprule
		$\Image_{\Target}(\Pose)$
			& $:= \Composition_{\Target}(\Image_{\Polygon_1}(\Pose),\ldots,\Image_{\Polygon_\TargetSize}(\Pose))$ \\
		$\neg \Image$
			& $:= \mathbf{1} - \Image$
			& $\text{(Negation)}$ \\
		$(\Image_1 \vee \Image_2)$
			& $:= \max\{\Image_1, \Image_2\} = \min\{1, \Image_1 \!+\! \Image_2\}$
			& $\text{(Union)}$ \\
		$(\Image_1 \wedge \Image_2)$
			& $:= \min\{\Image_1, \Image_2\} = \max\{0, \Image_1 \!+\! \Image_2 \!-\! 1\}$
			& $\text{(Intersection)}$ \\
		$(\Image_1 \oplus \Image_2)$
			& $:= (\Image_1 \vee \Image_2)\ \wedge\ \neg(\Image_1 \wedge \Image_2)$
			& $\text{(Difference)}$ \\
		\bottomrule
	\end{tabular}
\end{table}

\begin{figure*}[t]
  \centering
  \resizebox{1.0\linewidth}{!}{\input{tikz/TKZ_GGM_Process_v05.tex}}
  \caption{Computational graph (CG) of (a) the GGM for generating the image $\Image_{\Target}$ of a target object $\Target$ at camera pose $\Pose = (\PosX, \PosY, \PosZ, \Roll, \Pitch, \Yaw)$;
	(b) the analog behavior CG for vertex $\Vertex$;
	(c) the digital behavior CG for polygon $\Polygon_i$ (of size $\PolySize_i = 4$) at pixel $\Pixel = (\PixelX, \PixelY)$; and
	(d) the composition function CG for combining the images of $\TargetSize$ polygons,
	where $\Image_{\Target}(\Pose) = \Image_{\Polygon_1}(\Pose) \vee \Image_{\Polygon_2}(\Pose) \vee \Image_{\Polygon_3}(\Pose) $.
	Some nodes and edges are consolidated, simplified or omitted for clarity.}
  \label{fig:ggm_process}
\end{figure*}
\subsection{Geometric Generative Model (GGM) for N-Polygon Target}

As shown in~\figref{fig:ggm_process}(a), the (GGM) for a target object $\Target$ involves three stages: the analog behavior, the digital behavior, and the image composition function.
We now discuss the implementation of each stage as a NN-based computational graph (CG).
\figref{fig:ggm_process}(b) illustrates the CG for the analog behavior of a single vertex $\Vertex$,
which directly implements the analog function in~\eqref{eq:analog_fn}.
The CG takes as input the camera pose $\Pose$ and returns the homogeneous image coordinates $\Projection^\CCF$ of the projection of $\Vertex$ in the CCF.
The hidden layers of the CG implements the non-linear sinusoidal and multiplicative operations in~\eqref{eq:extrinsic}, followed by the normalization and linear operations in~\eqref{eq:intrinsic}.
Those computations are characterized by two sets of fixed parameters:
the intrinsic parameters of the camera, including $\FocalLength$, $\ImWidthPxl$, $\ImHeightPxl$; and
the 3D coordinates of the vertex $\Vertex^\TCF = (\Vertex_{x}, \Vertex_{y}, 0)$ in the TCF.
Subsequently, each vertex requires a separate CG with identical structure but different parameters.
\figref{fig:ggm_process}(c) illustrates the CG for the digital behavior of a single polygon $\Polygon_i$ (of size $\PolySize_i = 4$) at a single pixel $\Pixel = (\Pixel_x, \Pixel_y)$, implementing the sign test in~\eqref{eq:digital_fn_sign_test}.
The input layer consists of the projected 2D coordinates of the polygon vertices in the CCF, \ie
$\Projection^{\CCF}_1, \ldots, \Projection^{\CCF}_{\PolySize_i}$,
where $\Projection^{\CCF}_j = (\Projection_{x_j}^{\CCF}, \Projection_{y_j}^{\CCF})$,
resulting in an input size of $2 \PolySize_i$.
The output of the NN is a binary value indicating whether the pixel $\Pixel$ lies inside the projection of the polygon in the image plane.
As shown in~\tabref{tab:ggm_layers}, the GGM subnetwork $\NN^{\Polygon_i,\Pixel}$ includes three hidden layers, namely \Layer1, \Layer2, and \Layer3, that model the geometric relations between the polygon vertices and the image pixels.
The layer \Layer1 computes the cross product in~\eqref{eq:cross_product} for each pair of consecutive input projections:
\begin{align*}
  a^{(1)}_1 &= \Pixel_y \left( \Projection_{2x} \!-\! \Projection_{1x} \right)
  \!-\! \Pixel_x \left( \Projection_{2y} \!-\! \Projection_{1y} \right)
  \!+\! \left( \Projection_{2x}\Projection_{1y} \!-\! \Projection_{2y}\Projection_{1x} \right),
\end{align*}
where $a^{(1)}_1$ is the output of the first node in \Layer1.
To determine whether the pixel $\Pixel$ lies within the polygon, and hence its value,
we implement the orientation test in~\eqref{eq:digital_fn_sign_test}:
\begin{align*}
  \sgn\left( a^{(1)}_i \right) = \sgn\left( a^{(1)}_{i+1} \right) \, \iff \,
  \sum_{i=1}^{\PolySize} \left\lvert a^{(1)}_i \right\rvert - \left\lvert \sum_{i=1}^{\PolySize} a^{(1)}_i \right\rvert = 0.
\end{align*}
That is, if the sum of the absolute values of the cross products equals the absolute value of their sum, then all cross products share the same sign and, consequently, the pixel $\Pixel$ lies within the polygon.
This test is implemented in the subsequent layers, \Layer2, \Layer3, and \Layer4.
\begin{table}[!tb] 
  \centering
  \caption{GGM layers for an $n$-polygon.}%
  \label{tab:ggm_layers}
  \setlength{\tabcolsep}{6pt}
  \centering
  \begin{tabular}{llll}
  \toprule
  Layer & Description & Activation & Size \\
  \midrule
  \Layer0 & Input           & None       & $2\PolySize$ (shared) \\
  \Layer1 & Cross product   & Linear     & $\ImSizePxl \times \PolySize$ \\
  \Layer2 & Absolute value  & ReLU (Paired)  & $\ImSizePxl \times (2\PolySize+2)$\\
  \Layer3 & Summation       & Linear     & $\ImSizePxl \times 2 $ \\
  \Layer4 & Output (Binary)  & ReLU (Threshold)  & $\ImSizePxl $ \\
  \bottomrule
  \end{tabular}
\end{table} 

For each output of \Layer1, $a^{(1)}_i$, both the value and its negation are fed into two separate ReLU nodes in \Layer2:
\begin{align*}
	a^{(2)}_{ia} = \max\{a^{(1)}_i, 0\}, \quad
	a^{(2)}_{ib} = \max\{-a^{(1)}_i, 0\}.
\end{align*}
Either $a^{(2)}_{ia}$ or $a^{(2)}_{ib}$ yields the absolute value of $a^{(1)}_i$, while the other yields zero.
Effectively, their sum equals the absolute value of $a^{(1)}_i$:
\begin{align*}
	a^{(2)}_{ia} + a^{(2)}_{ib}
	= \max\{a^{(1)}_i, 0\} + \max\{-a^{(1)}_i, 0\}
	= | a^{(1)}_i |.
\end{align*}
This summation is performed for all outputs of \Layer2 using the first neuron of \Layer3
to compute the sum of the absolute values of the cross products:
\begin{align*}
  a^{(3)}_1 &= \sum_{i=1}^{\PolySize} \left( a^{(2)}_{ia} + a^{(2)}_{ib} \right)
            = \sum_{i=1}^{\PolySize} | a^{(1)}_i |.
\end{align*}
Similarly, the last two ReLU nodes in \Layer2 compute the positive and negative
values of the sum of the cross products:
\begin{align*}
	a^{(2)}_{(\PolySize+1) a} = \sum_{i=1}^{\PolySize} a^{(1)}_i, \quad
	a^{(2)}_{(\PolySize+1) b} = -\sum_{i=1}^{\PolySize}a^{(1)}_i.
\end{align*}
The second neuron in \Layer3 is then used to compute the absolute value of this sum:
\begin{align*}
	a^{(3)}_{2} = a^{(2)}_{(\PolySize+1) a} + a^{(2)}_{(\PolySize+1) b}
	= \left\lvert \sum_{i=1}^{\PolySize} a^{(1)}_i \right\rvert.
\end{align*}
Finally, the output layer \Layer4 comprises a single neuron that outputs:
\begin{align*}
  a^{(4)} = \left\{ \begin{array}{ll}
    1 & \text{if } a^{(3)}_1 = a^{(3)}_2, \\
    0 & \text{otherwise},
  \end{array} \right.
\end{align*}
that is, the output of \Layer4 is used to set the pixel value $\Image(\Pixel)$
to $1$ if the pixel $\Pixel$ is inside the polygon, and $0$ otherwise.
Each pair of a polygon $\Polygon_i$ and a pixel $\Pixel$ requires a separate CG with identical structure but different parameters;
thus, a total of $\TargetSize \times \ImSizePxl$ digital behavior CGs are needed to generate the image of $\Target$, where each group of $\ImSizePxl$ CGs generates the image of the corresponding polygon $\Polygon_i$.
\figref{fig:ggm_process}(d) illustrates the CG for the composition function that combines the images of $\TargetSize$ polygons.
In this example, the target $\Target$ consists of three polygons, where its geometry is defined by the union of the images of the constituent polygons.
Thus, the CG implements the expression $\Image_{\Target}(\Pose) = \Image_{\Polygon_1}(\Pose) \vee \Image_{\Polygon_2}(\Pose) \vee \Image_{\Polygon_3}(\Pose)$ using the $\min\{1, \cdot\}$ semantics listed in~\tabref{tab:composition_operators}
by summing the outputs of the three polygon subnetworks
and capping the pixel values using $\min\{1, \cdot\}$:
\begin{align*}
	\Image_{\Target}(\Pose)
		&= \Image_{\Polygon_1}(\Pose) \vee \Image_{\Polygon_2}(\Pose) \vee \Image_{\Polygon_3}(\Pose) \\
		&= \min\left\{1,\; \Image_{\Polygon_1}(\Pose) + \min\left\{1, \Image_{\Polygon_2}(\Pose) + \Image_{\Polygon_3}(\Pose)\right\}\right\}\\
		&= \min\left\{1,\; \Image_{\Polygon_1}(\Pose) + \Image_{\Polygon_2}(\Pose) + \Image_{\Polygon_3}(\Pose)\right\}.
\end{align*}
Note that the architecture of the image composition CG depends on the exact expression $\Composition$ used to combine the polygon images.
\section{Certified Vision-Based Pose Estimation for Uncluttered Environments}
\label{sec:pose_estimation_uncluttered}
We now take the first step towards addressing the general problem of vision-based pose estimation described in~\probref{prob:pose_estimation}.
Specifically, we consider the case where the camera is capturing images of a target object in an uncluttered environment (see~\defref{def:image_clutter}).
\subsection{Problem Overview}
Consider a system $(\Target, \Clutter, \Camera_{\Target}, \PoseSpace)$
in an uncluttered environment where the target object $\Target$ is the only one
present in the scene captured by the camera, \ie $\Clutter = \emptyset$ and $\Image(\Pose) = \ImageIdeal(\Pose) $ by definition.
This assumption relaxes the problem of pose estimation by eliminating the need to detect and identify multiple objects within the image.
We can now formalize the relaxed version of~\probref{prob:pose_estimation} as follows.%
\begin{problem}\label{prob:pose_estimation_uncluttered}
  Given a system $(\Target, \Clutter, \Camera_{\Target}, \PoseSpace_{\Target})$,
	where $\Clutter = \emptyset$, and $\PoseSpace_{\Target} \subseteq \PoseSpace$ is the pose space region where $\Target$ is fully visible,	
	design a pose estimator $\NN \colon \ImageSpace_{\Target} \to \PoseSpace $
	such that:
  \begin{align*}
		\max_{\Pose \in \PoseSpace_\Target} \|\Pose - \NN(\Image_\Pose)\| \leq \ErrorBound,
  \end{align*}
  where $\Image_\Pose = \Image(\Pose)$ is the image produced by the camera at pose $\Pose$,
	and $\ErrorBound \in \R_{\geq 0}$ is a user-defined error bound.
\end{problem}

In other words, the goal is to design a pose estimator that, given an image where the target object is fully visible, can estimate the camera's pose with a certified bound on the estimation error.
\subsection{GGM-based Decoder-Encoder Framework}
We address~\probref{prob:pose_estimation_uncluttered} by proposing a two-stage framework for designing a certifiable image-to-pose estimator.
As shown in~\figref{fig:pose_estimation_framework}, the proposed framework for designing a certifiable image-to-pose estimator consists of two main components.
The first component, the \emph{decoder},
is a GGM $\Decoder \colon \PoseSpace_{\Target} \to \ImageSpace$
for generating synthetic images of the target object for any given pose $\Pose \in \PoseSpace_{\Target}$, where $\PoseSpace_{\Target}$ is the bounded pose space of the camera.
The second component, the \emph{encoder} $\Encoder \colon \ImageSpace \to \PoseSpace_{\Target}$,
is a neural network that takes an ideal image of the target object as input and outputs the estimated pose of the camera.

The process of designing the encoder is outlined in~\algref{alg:dec_enc_framework}.
	First, a GGM-based decoder $\Decoder$ is designed for the geometric properties of the target object and the intrinsic parameters of the camera.
    Next, the GGM-based decoder is utilized to train an encoder $\Encoder$ to learn the mapping between the ideal image of the target object and the pose of the camera at which the image was captured. Akin to autoencoders---albeit the order of the encoder and decoder is flipped as shown in~\figref{fig:pose_estimation_framework}---we use reconstruction loss between the input to the decoder and the output of the encoder:
    $$ \mathcal{L}_{\text{Encoder}} = \norm{\Pose - \PoseEst} = \norm{\Pose - \Encoder(\Decoder(\Pose))}$$
	Next, the GGM-based decoder is utilized to generate a dataset of pose-image pairs $\Dataset_\text{test} = \{(\Pose_i, \Image_i)\}_{i=1}^N \subset \PoseSpace_{\Target} \times \ImageSpace$.
    For our design, it is crucial that the dataset $Dataset_\text{test}$ is constructed by sampling poses $\Pose$ over a grid of size $\eta$.
	This dataset is then used to test the encoder $\Encoder$ and evaluate the empirical error $\epsilon$ which will then be used to establish the worst case upper bound $\ErrorBound$ on the trained encoder.

\begin{figure}[t]
\centering
\includegraphics[width=1.0\columnwidth]{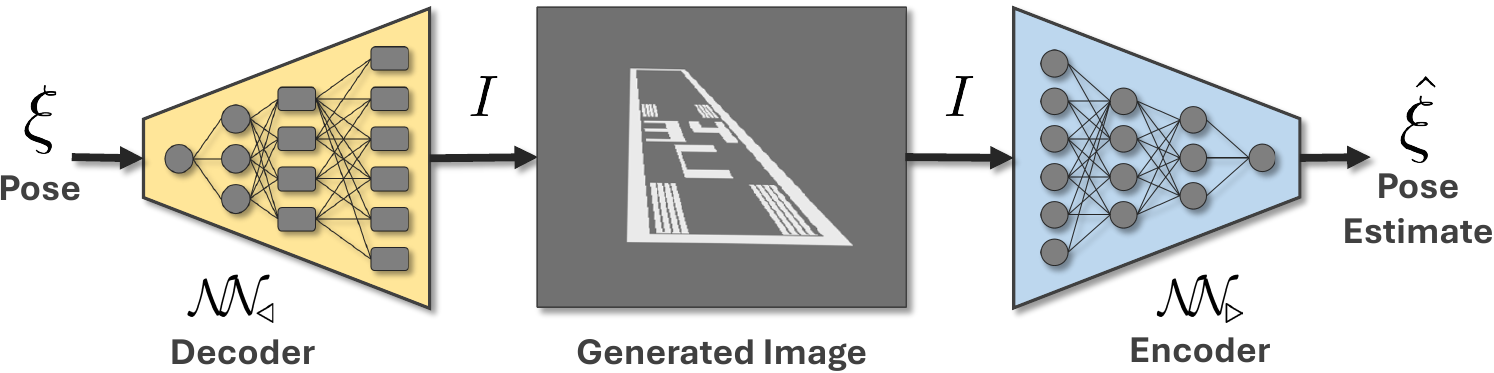}
\caption{
	Decoder-encoder framework for designing a certifiable image-to-pose estimator.
}
\label{fig:pose_estimation_framework}
\end{figure}

\begin{algorithm}[t]
	\caption{Decoder--Encoder Framework for Certified Pose Estimation}
	\label{alg:dec_enc_framework}
	\DontPrintSemicolon
	\KwIn{%
		$\Target$, target geometry;
		$\PoseSpace_{\Target}$, bounded pose space;
	}
	\KwOut{%
		$\Encoder$, certified image-to-pose estimator
	}
	\vspace{0.2em}
	\textbf{\underline{Stage 1: Decoder}}\;
	Construct GGM $\Decoder \colon \PoseSpace_{\Target} \to \ImageSet_{\Target}$ for $\Target$ and $\Camera$\;
	Compute Lipschitz constant $L_{\Decoder}$ of $\Decoder$\;
	Generate testing dataset $\Dataset_\text{test} \gets \emptyset$\;
	Select grid size $\eta$\;
	\For{$i=1,\ldots, \eta^6$}{%
		Sample $\Pose_i$ from a grid over $\PoseSpace_{\Target}$ with resolution $\eta$\;	
	Obtain image from GGM $\Image_i \gets \Decoder(\Pose_i)$\;
		$\Dataset_\text{test} \gets \Dataset_\text{test} \cup \{(\Pose_i, \Image_i)\}$\;
	}
	\vspace{0.2em}
	\textbf{\underline{Stage 2: Encoder}}\;
	Compute Lipschitz constant $L_{\Encoder}$ of $\Encoder$\;
	Sample randomly from $\PoseSpace_{\Target}$ and train the encoder $\Encoder : \ImageSet_{\Target} \to \PoseSpace_{\Target}$ using the reconstruction loss:
    $$ \mathcal{L}_{\text{Encoder}} = \norm{\Pose - \PoseEst} = \norm{\Pose - \Encoder(\Decoder(\Pose))}$$
	
    Evaluate $\Encoder$ on test dataset $\Dataset_\text{test}$ \;
	Compute emperical error $\epsilon \gets \max_{(\Pose_i, \Image_i) \in \Dataset_\text{test}} \norm{\Pose_i - \Encoder(\Image_i)}$\;
	Compute certified bound $\ErrorBound \gets (\eta + \delta) (L_{\Decoder} L_{\Encoder} + 1)  + \epsilon$\;
	\vspace{0.2em}
	\Return $\Encoder$, $\ErrorBound$\;
\end{algorithm}
\subsection{Formal Guarantees on Encoder Performance}	
Before we discuss the formal guarantees of the proposed decoder-encoder, we present a necessary condition for the well-posedness of Problem~\ref{prob:pose_estimation_uncluttered}. In particular, for the existence of an encoder that provides unique pose estimates from images, the mapping between the image space and the pose space must be injective.
This means that for each image generated by the decoder, there should be a unique corresponding pose in the pose space.
Conversely, if two different poses produce the same image, the encoder cannot distinguish between them, leading to ambiguity in the pose estimation.

\begin{definition}[Injectivity Condition]\label{def:injectivity}
	The decoder $\Decoder \colon \PoseSpace_{\Target} \to \ImageSpace$ is said to be \emph{injective} if:
	\begin{align}\label{eq:injectivity}
		\forall \Pose_1, \Pose_2 \in \PoseSpace_{\Target},\;
			\Pose_1 \neq \Pose_2 \implies \Decoder(\Pose_1) \neq \Decoder(\Pose_2).
	\end{align}
\end{definition}

Unfortunately, Definition~\ref{def:injectivity} is very restrictive and can not be applied to this problem due to the topological mismatch between the domain and codomain of $\Decoder$. While the pose space $\PoseSpace_{\Target}$ is a continuous space, the image space $\ImageSpace$ is intrinsically discrete, constrained by finite pixel quantization. Consequently, the GGM-based decoder $\Decoder \colon \PoseSpace_{\Target} \to \ImageSpace$ functions as a quantization operator, rendering strict injectivity mathematically impossible; distinct poses that differ by sub-pixel perturbations will inevitably map to the identical images. As a result, the pre-image $\Decoder^{-1}(I)$ of an image $I$ is never a singleton, but rather a ``set'' of indistinguishable poses. Therefore, the traditional condition of injectivity in Definition~\ref{def:injectivity} is ill-posed for this setting. To establish rigorous guarantees for the inverse map $\Encoder$, we must relax this requirement to $\delta$-identifiability, replacing the notion of unique recovery with that of recovery within a strictly bounded diameter of uncertainty.

\begin{definition}[$\delta$-Identifiability] \label{def:injectivity2}
The decoder $\Decoder \colon \PoseSpace_{\Target} \to \ImageSpace$ is said to be $\delta$-identifiable if, for any pair of poses $\Pose_1, \Pose_2 \in \PoseSpace_{\Target}$, the following implication holds:
\begin{equation}
    \Decoder(\Pose_1) = \Decoder(\Pose_2) \implies \norm{\Pose_1 - \Pose_2} \le \delta.
\end{equation}
\end{definition}

\begin{assumption}[$\delta$-Identifiable GGM Decoder]\label{asm:injective_decoder}
	We assume that the GGM-based decoder $\Decoder$ is $\delta$-Identifiable (i.e., according to \defref{def:injectivity2}) over $\PoseSpace_{\Target}$.%
\end{assumption}

A violation of~\asmref{asm:injective_decoder} occurs when distinct camera poses produce identical or indistinguishable images.
Examples include symmetric target objects viewed from different angles, or poses outside the camera's field of view where the target object is not visible.

\if0 
Consider a symmetric target $\Target$ and its GGM $\GGM$.
The symmetry of the target implies that there exists $\Pose,\Pose' \in \PoseSpace$, $\Pose \neq \Pose'$,
such that $\GGM(\Pose) = \GGM(\Pose')$, that is:
\begin{align*}
	\GGM(\Pose + \Delta\Pose) = \GGM(\Pose), \quad \Delta\Pose = \Pose' - \Pose.
\end{align*}
Since $\Delta\Pose$ does not affect the output of the GGM, it follows that
$\Jacobian \cdot \Delta\Pose = 0$, where $\Jacobian$ is the Jacobian matrix of the GGM.
\fi

We now establish formal guarantees on the accuracy of the NN-based encoder $\Encoder$ as follows.
\begin{theorem}[Estimation Error Bounds]%
  \label{thm:estimation_error_bounds}
    Consider a system $(\Target, \Clutter, \Camera_{\Target}, \PoseSpace)$
	where $\Clutter = \emptyset$, i.e., an uncluttered environment.
  Given a neural network encoder $\Encoder$ trained using Algorithm~\ref{alg:dec_enc_framework}. Let Assumption~\ref{asm:injective_decoder} holds over $\PoseSpace_\Target$, then
	the following bound on the pose estimation error holds:
	\begin{align*}
		\max_{\Pose \in \PoseSpace_\Target} \norm{\Pose - \Encoder(\Image_{\Pose})} \leq (\delta  + \BoundInput) \left(\Lipschitz_{\Dec} \Lipschitz_{\Enc} + 1 \right) + \TrainingError,
	\end{align*}
	where $\Lipschitz_{\Dec}$ and $\Lipschitz_{\Enc}$ are the Lipschitz constants of the decoder and encoder networks, respectively,
    and $\BoundInput$ is the grid size used to generate the test dataset $\Dataset_\text{test}$ in Algorithm~\ref{alg:dec_enc_framework}. Finally, $\TrainingError$ is the maximum empirical error over the test dataset $\Dataset_\text{test}$, i.e.,
    $$\epsilon = \max_{(\Pose_i, \Image_i) \in \Dataset_\text{test}} \norm{\Pose_i - \Encoder(\Image_i)}.$$
\end{theorem}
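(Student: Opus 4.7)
The plan is to combine a triangle-inequality decomposition across a representative grid pose with Lipschitz propagation through $\Decoder$ and $\Encoder$. Fix an arbitrary $\Pose \in \PoseSpace_\Target$ and let $\Pose_i \in \Dataset_\text{test}$ be the grid pose associated with the sampling cell containing $\Pose$. The key decomposition is
\begin{align*}
\norm{\Pose - \Encoder(\Image_\Pose)} \leq{} & \norm{\Pose - \Pose_i} \\
  &+ \norm{\Pose_i - \Encoder(\Image_{\Pose_i})} \\
  &+ \norm{\Encoder(\Image_{\Pose_i}) - \Encoder(\Image_\Pose)},
\end{align*}
which splits the error into three interpretable pieces: pose-space coverage, empirical test error, and encoder sensitivity propagated from image space.

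For the first term, the plan is to combine the $\BoundInput$-resolution grid-sampling bound with the $\delta$-identifiability of $\Decoder$ from Assumption~\ref{asm:injective_decoder}. Because $\Decoder$ acts as a quantization map whose pre-image classes have diameter at most $\delta$, the worst-case distance between $\Pose$ and any grid pose that shares (or may as well share) its image class is bounded by $\delta + \BoundInput$: the $\BoundInput$ comes from nearest-grid-point coverage and the additional $\delta$ absorbs the pre-image ambiguity the encoder cannot resolve from pixels alone. The second term is bounded directly by $\TrainingError$ since $(\Pose_i,\Image_i) \in \Dataset_\text{test}$ and $\TrainingError$ is defined in Algorithm~\ref{alg:dec_enc_framework} as the maximum of exactly this quantity over the test set. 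For the third term, applying the Lipschitz bound of the encoder yields $\Lipschitz_{\Enc}\, \norm{\Image_\Pose - \Image_{\Pose_i}}$, then applying the Lipschitz bound of the decoder (using $\Image_\Pose = \Decoder(\Pose)$ and $\Image_{\Pose_i} = \Decoder(\Pose_i)$) yields $\Lipschitz_{\Enc}\Lipschitz_{\Dec}\, \norm{\Pose - \Pose_i} \leq \Lipschitz_{\Enc}\Lipschitz_{\Dec}\,(\delta + \BoundInput)$.

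Summing the three bounds gives $(\delta + \BoundInput) + \TrainingError + \Lipschitz_{\Enc}\Lipschitz_{\Dec}(\delta + \BoundInput) = (\delta + \BoundInput)(\Lipschitz_{\Dec}\Lipschitz_{\Enc} + 1) + \TrainingError$, and since the decomposition holds for every $\Pose \in \PoseSpace_\Target$, the maximum over the pose space inherits the same bound. The main technical obstacle will be reconciling the $\delta$-identifiability of the quantized $\Decoder$ with the Lipschitz constant $\Lipschitz_{\Dec}$ invoked in the third step: strict pointwise Lipschitz continuity is not well-posed for a map into the discrete codomain $\B^{\ImSizePxl}$, so $\Lipschitz_{\Dec}$ must be interpreted as the Lipschitz constant of the underlying continuous pre-quantization composition (the polynomial analog layers feeding into the ReLU-based digital orientation test), and the additive $\delta$ in the first term must be shown to correctly account for the worst-case pre-image radius without double-counting against the Lipschitz-propagated contribution.
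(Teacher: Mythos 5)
Your proof is correct and follows essentially the same route as the paper's: a three-term triangle inequality through a nearby test-grid pose, with the empirical error $\TrainingError$ bounding the middle term and the composed Lipschitz constants $\Lipschitz_{\Dec}\Lipschitz_{\Enc}$ propagating the grid gap through the decoder--encoder chain. The only (harmless) difference is bookkeeping of the $\delta$ slack --- the paper injects it via an auxiliary pose $\Pose_{\delta}$ with $\Decoder(\Pose_{\delta})=\Decoder(\Pose)$ inside the Lipschitz term, whereas you fold it into the coverage term $\norm{\Pose-\Pose_i}\leq\delta+\BoundInput$ --- and both yield the identical final bound; your closing caveat about interpreting $\Lipschitz_{\Dec}$ for a quantized, binary-image-valued decoder is well taken and applies equally to the paper's own argument.
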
%

\begin{remark}
	\thmref{thm:estimation_error_bounds} provides a worst-case upper bound on the pose estimation error over the entire bounded pose space $\PoseSpace_\Target$.
	This is in contrast to classical machine learning error bounds, which typically provide probabilistic guarantees over only the poses present in the training dataset.
	The bound in~\thmref{thm:estimation_error_bounds} is deterministic and holds for all poses within the specified pose space (even for poses that were not present in the training data),
	offering a stronger assurance of performance for safety-critical applications,
	compared to probabilistic bounds provided in standard machine learning literature, which is limited to the training samples.%
\end{remark}
\begin{proof}

The proof of Theorem \ref{thm:estimation_error_bounds} relies on the properties of Lipschitz continuous functions and the composition of such functions.
We start by noting that the encoder $\Encoder$ and decoder $\Decoder$ are both Lipschitz continuous, which implies that small changes in the input pose will result in bounded changes in the output pose.

First, let $\Pose, \PoseEst \in \PoseSpace_{\Target}$ be the true pose and the estimated pose, respectively.
Also, let $\Image_{\Target}(\Pose)$ be the ideal image of the target object at pose $\Pose$,
and $\Image_{\Pose}$ be the image captured by the camera at pose $\Pose$.
Since an uncluttered environment is assumed, from~\defref{def:image_clutter}, we have:
\begin{align*}
	\Image_{\Pose} &= \Image(\Pose) = \Image_{\Target}(\Pose).
\end{align*}
That is, the image captured by the camera is equal to the image generated by the decoder at the same pose $\Pose$.
We can then write the encoder's output as:
\begin{align}\nonumber
	\PoseEst &= \Encoder(\Image_{\Pose}) = \Encoder(\Image_{\Target}(\Pose)).
\end{align}
We can then express the estimation error as:
\begin{align}\nonumber
	\norm{\PoseEst - \Pose} &= \norm{\Encoder(\Image_{\Pose}) - \Pose} \\
	&= \norm{\Encoder(\Image_{\Target}(\Pose)) - \Pose} \label{eq:proof:estimation_error_a}.
\end{align}
By~\asmref{asm:injective_decoder}, any pose $\Pose'$ such that $\norm{\Pose - \Pose_{\delta}} \le \delta$ can create the same image, i.e., $\Image_{\Target}(\Pose) = \Decoder(\Pose) = \Decoder(\Pose_{\delta})$. Hence, by into~\eqref{eq:proof:estimation_error_a}, we can rewrite the estimation error as:
\begin{align}\label{eq:proof:estimation_error_b}
	\norm{\PoseEst - \Pose} &\le \max_{\substack{\Pose_{\delta} \in \PoseSpace_{\Target} \\ \norm{\Pose - \Pose_{\delta}} \le \delta}  } \norm{\Encoder(\Decoder(\Pose_{\delta})) - \Pose}.
\end{align}
Given a test dataset $\Dataset_\text{test}$ of pose-image pairs with poses sampled over a grid with sample step $\BoundInput$ and maximum test error $\TrainingError$,
there exists a test sample $(\Pose', \Image_{\Pose'}) \in \Dataset_\text{test}$ such that:
\begin{align}\label{eq:proof:constraints}
	\norm{\Pose - \Pose'} &\leq \BoundInput,\quad
	\norm{\Encoder(\Image_{\Pose'}) - \Pose'} \leq \TrainingError, 
\end{align}
where $\Image_{\Pose'} = \Decoder(\Pose')$.
We can now decompose the estimation error in~\eqref{eq:proof:estimation_error_b} using the triangle inequality and the constraints in~\eqref{eq:proof:constraints}:
\begin{align}
	\norm{\PoseEst - \Pose} &\le \max_{\substack{\Pose_{\delta} \in \PoseSpace_{\Target} \\ \norm{\Pose - \Pose_{\delta}} \le \delta}  } \norm{\Encoder(\Decoder(\Pose_\delta)) - \Pose} \nonumber \\
	&\leq \max_{\substack{\Pose_{\delta} \in \PoseSpace_{\Target} \\ \norm{\Pose - \Pose_{\delta}} \le \delta} } \norm{\Encoder(\Decoder(\Pose_\delta)) - \phantom{\lVert}\Encoder(\Image_{\Pose'})} \nonumber \\
	&\phantom{\leq~\lVert\Encoder(\Decoder(\Pose))} + \norm{\Encoder(\Image_{\Pose'}) -\phantom{\lVert} \Pose'} \nonumber \\
	&\phantom{\leq~\lVert\Encoder(\Decoder(\Pose))+\|\Encoder(\Image_{\Pose'})} + \norm{\Pose' - \Pose} \nonumber \\
	&\leq \max_{\substack{\Pose_{\delta} \in \PoseSpace_{\Target} \\ \norm{\Pose - \Pose_{\delta}} \le \delta}  } \norm{\Encoder(\Decoder(\Pose_\delta)) - \Encoder(\Image_{\Pose'})} + \TrainingError + \BoundInput. 
    \label{eq:proof:d}
\end{align}
Since $\Decoder$ and $\Encoder$ are Lipschitz continuous, we have:
\begin{align*}
	\norm{\Decoder(\Pose_\delta) - \Decoder(\Pose')} &\leq \Lipschitz_{\Dec} \norm{\Pose_\delta - \Pose'},\\
	\norm{\Encoder(\Decoder(\Pose_\delta)) - \Encoder(\Decoder(\Pose'))}
	&\leq \Lipschitz_{\Enc} \norm{\Decoder(\Pose_\delta) - \Decoder(\Pose')}.
\end{align*}
Therefore:
\begin{align}
	\max_{\substack{\Pose_{\delta} \in \PoseSpace_{\Target} \\ \norm{\Pose - \Pose_{\delta}} \le \delta}} 
    \norm{\Encoder(\Decoder(\Pose_\delta)) - \Encoder(\Decoder(\Pose'))}& \nonumber\\
    &\hspace{-6em}\leq \Lipschitz_{\Enc} \Lipschitz_{\Dec} \norm{\Pose_\delta - \Pose'} \nonumber \\
    &\hspace{-6em}\leq \Lipschitz_{\Enc} \Lipschitz_{\Dec} (\norm{\Pose_\delta - \Pose} + \norm{\Pose - \Pose'})\nonumber \\
    &\hspace{-6em}\leq \Lipschitz_{\Dec} \Lipschitz_{\Enc} (\delta + \BoundInput). \label{eq:proof:e}
\end{align}
Substituting~\eqref{eq:proof:e} into~\eqref{eq:proof:d}, we obtain:
\begin{align*}
	\norm{\PoseEst - \Pose} &\leq \Lipschitz_{\Dec} \Lipschitz_{\Enc} (\delta + \BoundInput) + \TrainingError + \BoundInput
    \\ &\le (\delta + \BoundInput)\left(\Lipschitz_{\Dec} \Lipschitz_{\Enc} + 1\right) + \TrainingError.
\end{align*}
Since we considered an arbitrary pose $\Pose \in \PoseSpace_{\Target}$,
the bound holds for all poses in the space.
This establishes the desired bound on the estimation error, completing the proof for~\thmref{thm:estimation_error_bounds}.
\end{proof}

Since the constants $\delta$ and $\eta$ are typically small, the bound established in~\thmref{thm:estimation_error_bounds} indicates that the estimation error is dominated by two factors:
\begin{enumerate}
	\item The Lipschitz constants $\Lipschitz_{\Dec}$ and $\Lipschitz_{\Enc}$, which capture the sensitivity of the mapping to perturbations in the input pose and image, respectively. While the Lipschitz constant $\Lipschitz_{\Dec}$ depends on the GGM and the complexity of the target, the Lipschitz constant  $\Lipschitz_{\Enc}$ can be controlled during the training of the encoder using regularization and smoothing techniques.
	\item The maximum test error $\TrainingError$ which reflects the quality of the encoder's training.
\end{enumerate}
\subsection{Encoder Robustness to Image Noise}

We now extend the results of~\thmref{thm:estimation_error_bounds} to account for images with bounded pixel-wise noise.

\begin{definition}[Bounded Pixel Noise]
	An image $\tilde{\Image}_{\Pose}$ is said to have \emph{bounded pixel noise} if:
	\begin{align*}
		\norm{\tilde{\Image}_{\Pose} - \Image_{\Pose}} \leq \NoiseBound,
	\end{align*}
	where $\Image_{\Pose}$ is the ideal image of the target object at pose $\Pose$,
	and $\NoiseBound > 0$ is the maximum noise level.
\end{definition}
The definition above captures various types of noise that may affect at most
$n = \NoiseBound^2$ pixels in the image.
\figref{fig:noisy_image_example} shows an example of a noisy image with bounded pixel noise.%
\begin{figure}[t]
	\centering
	\includegraphics[width=1.0\columnwidth]{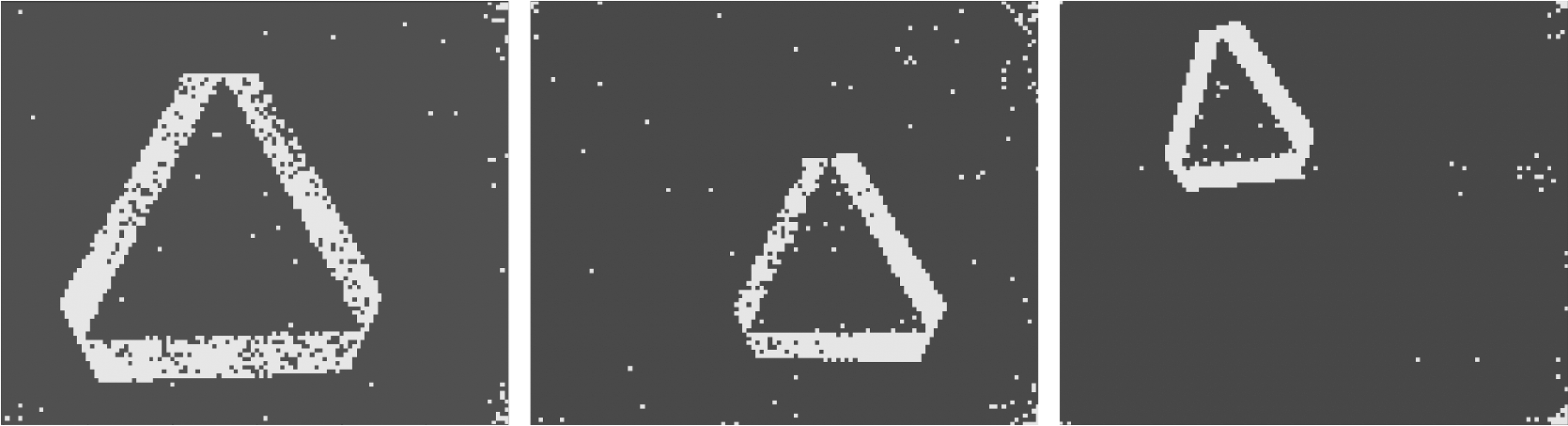}
	\caption{%
		Examples of noisy images of slow-vehicle ahead sign at three different poses in an uncluttered environment.
	}
	\label{fig:noisy_image_example}
\end{figure}
We now establish the certifiable performance of the encoder $\Encoder$ when applied to images with bounded pixel noise.

\begin{corollary}[Estimation Error Bounds under Bounded Noise]
	\label{cor:estimation_error_bounds_noisy}
    Consider a system $(\Target, \Clutter, \Camera_{\Target}, \PoseSpace)$
	where $\Clutter = \emptyset$, i.e., an uncluttered environment.
	Assume that the encoder $\Encoder$ is trained using Algorithm~\ref{alg:dec_enc_framework},
	then, for any noisy image $\tilde{\Image}_{\Pose}$ with bounded pixel noise $\NoiseBound > 0$,
	the following bound on the pose estimation error holds:
	\begin{align*}
		\max_{\Pose \in \PoseSpace_\Target} \norm{\Pose - \Encoder(\tilde{\Image}_{\Pose})} \leq (\delta + \BoundInput) \left(\Lipschitz_{\Dec} \Lipschitz_{\Enc} + 1 \right) + \Lipschitz_{\Enc} \NoiseBound + \TrainingError.
	\end{align*}
\end{corollary}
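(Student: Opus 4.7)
The plan is to reduce Corollary~\ref{cor:estimation_error_bounds_noisy} to Theorem~\ref{thm:estimation_error_bounds} by isolating the noise contribution through the triangle inequality and then absorbing it via the Lipschitz continuity of $\Encoder$. Concretely, for any pose $\Pose \in \PoseSpace_\Target$ and any noisy image $\tilde{\Image}_{\Pose}$ satisfying $\norm{\tilde{\Image}_{\Pose} - \Image_{\Pose}} \leq \NoiseBound$, I would start from the decomposition
\begin{align*}
\norm{\Pose - \Encoder(\tilde{\Image}_{\Pose})}
&\leq \norm{\Pose - \Encoder(\Image_{\Pose})}
 + \norm{\Encoder(\Image_{\Pose}) - \Encoder(\tilde{\Image}_{\Pose})}.
\end{align*}

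First I would bound the noise-free term $\norm{\Pose - \Encoder(\Image_{\Pose})}$ by directly invoking Theorem~\ref{thm:estimation_error_bounds}, which, under the same Assumption~\ref{asm:injective_decoder} and the training procedure of Algorithm~\ref{alg:dec_enc_framework}, gives the uniform bound $(\delta + \BoundInput)(\Lipschitz_{\Dec}\Lipschitz_{\Enc} + 1) + \TrainingError$ over all $\Pose \in \PoseSpace_\Target$. This step reuses the earlier argument verbatim, so no new reasoning about grid resolution, $\delta$-identifiability, or empirical training error is required.

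Next I would bound the perturbation term by using the Lipschitz continuity of the encoder $\Encoder$ with constant $\Lipschitz_{\Enc}$, yielding
\begin{align*}
\norm{\Encoder(\Image_{\Pose}) - \Encoder(\tilde{\Image}_{\Pose})}
\leq \Lipschitz_{\Enc}\, \norm{\Image_{\Pose} - \tilde{\Image}_{\Pose}}
\leq \Lipschitz_{\Enc}\, \NoiseBound.
\end{align*}
Summing the two bounds and taking the maximum over $\Pose \in \PoseSpace_\Target$ produces exactly the claimed inequality.

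The argument is essentially routine once the decomposition is fixed; the only subtlety worth flagging is that the norm used on image differences must be compatible with the Lipschitz constant $\Lipschitz_{\Enc}$ of $\Encoder$ (viewed as a map $\ImageSpace \to \PoseSpace_\Target$ with the pose norm of Problem~\ref{prob:pose_estimation_uncluttered}) and with the definition of bounded pixel noise used in the corollary. Assuming these norms are consistent, as they are in the preceding development, no further work is needed. If desired, the step can also be phrased without $\max$ by noting that the Theorem~\ref{thm:estimation_error_bounds} bound already holds pointwise in $\Pose$, and the noise contribution $\Lipschitz_{\Enc}\NoiseBound$ is independent of $\Pose$, so the supremum carries through additively.
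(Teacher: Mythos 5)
Your proposal is correct and matches the paper's intent: the paper omits a detailed proof, stating only that the corollary ``follows the same structure as that of Theorem~\ref{thm:estimation_error_bounds}, with the addition of a term accounting for the noise in the input image,'' which is precisely the triangle-inequality decomposition plus encoder-Lipschitz bound you give. Your reduction to the noise-free theorem is the cleanest way to make that sketch rigorous, and the resulting bound matches the statement exactly.
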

The proof of~\corref{cor:estimation_error_bounds_noisy} follows the same structure as that of~\thmref{thm:estimation_error_bounds},
with the addition of a term accounting for the noise in the input image.
\section{Certified Vision-Based Object Detection for Uncluttered Environments}
\label{sec:detection_uncluttered}
\subsection{Problem Overview}
In the previous section, we addressed the pose estimation problem under the assumption that the target object is present in the captured image.
In this section, we relax this assumption and consider the object detection problem:
given an image, the goal is to determine whether the target object is present in the scene.
Effectively, the object detection problem serves as a prerequisite to the pose estimation framework.

\begin{problem}[Object Detection]
\label{prob:object_detection}
  Consider a system $(\Target, \Clutter, \Camera_{\Target}, \PoseSpace_{\Target})$
	where $\Clutter = \emptyset$.
	Given an image $\Image \in \ImageSpace$ captured by the camera $\Camera_{\Target}$ at unknown pose $\Pose \in \PoseSpace_{\Target}$,
	design a detector $\Detector \colon \ImageSpace \to \{\True, \False\}$ such that:
	\begin{align*}
		\Detector(\Image) = \left\{
			\begin{array}{ll}
				\True & \text{if } \exists \Pose \in \PoseSpace_{\Target} \text{ s.t. } \Image = \Camera_{\Target}(\Pose),\\
				\False & \text{otherwise.}
			\end{array}
			\right.
	\end{align*}
\end{problem}

A straight forward approach to solve~\probref{prob:object_detection} is to check if the image $I$ belongs to the forward reachable set $\Reachset(\Decoder, \PoseSpace)$ of the GGM-based decoder where $\Reachset(\Decoder, \PoseSpace)$ is defined as:
$$\Reachset(\Decoder, \PoseSpace) = \{I \in \mathcal{I} \mid \exists \Pose \in \PoseSpace .\; I = \Decoder(\Pose)\}.$$
Unfortunately, computing the exact forward reachable set of deep neural networks is computationally
intractable~\cite{huang2017safety,ferlez2021bounding,katz2017reluplex}
and current techniques rely on coarse over-approximation of such sets~\cite{hsieh2022verifying,fatnassi2023bern,fatnassi2024bern,xu2020automatic}.
To that end, we first present a lightweight and accurate technique that exploits the properties of the GGM-based decoder network followed by our approach to solve~\probref{prob:object_detection}.
\subsection{Computing Reachable Set of GGM-Based Detectors Using Grid Sampling}
Given a compact set $\PoseSpace \subset \R^6$, we
construct a sampling grid with step size $\GridStep$ that satisfies $\GridStep < \Lipschitz_{\Dec}^{-1}$, where $\Lipschitz_{\Dec}^{-1}$ is the Lipschitz constant of the GGM decoder network. The output of this step is a discrete set of candidate poses $\PoseSpaceGrid$.
For each candidate pose $\Pose_i \in \PoseSpaceGrid$, we apply the decoder $\Decoder$ to generate the corresponding image:
\begin{align*}
	\Image_i = \Decoder(\Pose_i).
\end{align*}
The reachable set $\Reachset(\Decoder, \PoseSpace)$ is then computed by taking the union over all the generated images $\Image_i$. This process is summarized in Algorithm~\ref{alg:grid_sampling}.

The condition $\GridStep < \Lipschitz_{\Dec}^{-1}$ is critical for correctness of Algorithm~\ref{alg:grid_sampling}. As shown later in the proof of Theorem~\ref{thm:detection_correctness}, this condition 
combined with the grid resolution, ensures to find all poses that produce images that differ strictly by less than 1 pixel. In other words, this approach exploits the fact that the GGM networks imposes a partitioning of the $\PoseSpace$ set into a set of equivalent classes (due to the digital behavior of the GGM), where each equivalent class of poses generate the same exact image. The grid sampling approach above is then guaranteed to cover all possible equivalent classes and hence transforms the problem of computing a reachable set over the continuous space $\PoseSpace$ into an equivalent one over the discrete set $\PoseSpaceGrid$.
\begin{algorithm}[t] 
\caption{Forward reachability via grid sampling}
\label{alg:grid_sampling}
\KwIn{%
	$\PoseSpace \subset \R^6$, compact pose space; \\
    \hspace{3em}$\Decoder$, a GGM-based decoder;
	}
\KwOut{%
	$\PoseSpaceGrid$, discrete set of sampled poses;\\
    \hspace{4em}$\Reachset$, the forward reachable set of $\Decoder$;
}
\SetKwProg{KwFnGridSampling}{function}{\,:}{end}
\KwFnGridSampling{%
	$\GridSampling\,(\PoseSpace, \GridStep)$%
}{
	$\PoseSpaceGrid \leftarrow \emptyset$;\quad $\Reachset \leftarrow \emptyset$\;
	$\PoseMin \leftarrow \inf \PoseSpace; \quad \PoseMax \leftarrow \sup \PoseSpace$\;
		$\GridStep \leftarrow \Lipschitz_{\Dec}^{-1} - \textbf{c}$ \tcp*[r]{for any small constant $\textbf{c} > 0$.}
	\ForEach{%
		$k \in \Z^6$ s.t. $\PoseMin_i + k_i \GridStep \le \PoseMax_i$, $i=1,\ldots,6$%
	}{
		$\Pose \leftarrow \PoseMin+k\odot\GridStep$\;
		$\PoseSpaceGrid \leftarrow \PoseSpaceGrid \cup \{\Pose\}$\;
        $\Reachset \leftarrow \Reachset \cup \{\Decoder(\Pose)\}$
	}
	\Return ($\PoseSpaceGrid, \Reachset$)
}
\end{algorithm}
\subsection{GGM-Based Object Detection Framework}

Now, we address~\probref{prob:object_detection} by leveraging the pose estimation framework developed in~\secref{sec:pose_estimation_uncluttered} along with the forward reachability approach in Algorithm~\ref{alg:grid_sampling}.
The proposed detection pipeline is illustrated in~\figref{fig:detection_framework}.
Consider a pose space $\PoseSpace_{\Target} \subset \R^6$
and suppose an image $\Image$ is captured by the camera at some unknown pose $\Pose \in \PoseSpace_{\Target}$.
Given that both the camera parameters and the target object geometry are known a priori,
we can use the pose estimation framework to design a certifiable pose estimator $\Encoder$ for $\Target$ using Algorithm~\ref{alg:dec_enc_framework}.

\begin{figure*}[t]
\centering
\includegraphics[width=1.0\textwidth]{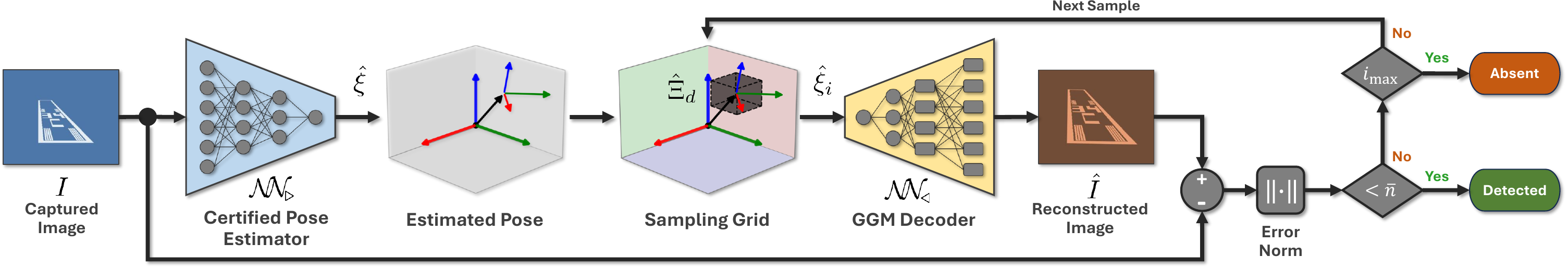}
\caption{%
  GGM-based object detection framework.
	Given an input image $\Image$ captured at unknown pose, a certified pose estimator $\Encoder$ is used first to estimate the pose $\PoseEst$, which is then used to construct a sampling grid over the certified error bound set $\PoseSpaceEst$.
	The GGM decoder $\Decoder$ is then applied to each sampled pose in the grid to generate the corresponding images, which are compared with $\Image$ to determine whether the target object is present in the scene.
	}
\label{fig:detection_framework}
\end{figure*}

As detailed in~\algref{alg:object_detection}, the detection procedure operates as follows.
First, we apply the certified pose estimator $\Encoder$ to the input image $\Image$ to obtain an initial pose estimate:
\begin{align*}
	\PoseEst = \Encoder(\Image).
\end{align*}
Recall from~\thmref{thm:estimation_error_bounds} that the error bounds certified for $\Encoder$ guarantee that if $\Pose \in \PoseSpace_{\Target}$, then $\norm{\Pose - \PoseEst} \leq \Radius$, where $\Radius \coloneq (\delta + \BoundInput) \left(\Lipschitz_{\Dec} \Lipschitz_{\Enc} + 1 \right) + \TrainingError$ is the certified error radius.

To verify whether the target is present in the image, we construct the forward reachable set $\Reachset(\Decoder, \PoseSpaceEst)$ where:
$\PoseSpaceEst = \left\{ \Pose \in \PoseSpace_{\Target} \mid \norm{\Pose - \PoseEst} \leq \Radius \right\}.$
We then compute the distance between the original image $\Image$ and the images in the forward reachable set $\Image_i$:
\begin{align*}
	d_i = \norm{\Image - \Image_{\Pose_i}} \qquad \forall \Image_{\Pose_i} \in \Reachset(\Decoder, \PoseSpaceEst).
\end{align*}
If there exists any candidate pose $\Pose_i$ such that $d_i = 0$ (i.e., the generated image matches the input image exactly), we conclude that the target object is present in the scene. Otherwise, the target is absent.
The complete procedure is summarized in~\algref{alg:object_detection}. 
\begin{algorithm}[t]
\caption{GGM-Based Object Detection}%
\label{alg:object_detection}
\KwIn{%
	$\Image$, captured image; \\
	\hspace{3em}$\PoseSpace_{\Target}$, target pose space region; \\
}
\KwOut{%
	$\texttt{isDetected}$, detection status\\
    \hspace{4em}$\PoseEst$, estimated pose
}
\SetKwProg{KwFnDetect}{function}{\,:}{end}
\KwFnDetect{%
    $\Detector\,(\Image, \PoseSpace_{\Target})$%
}{
	$\PoseEst \gets \Encoder(\Image)$ \tcp*[r]{Apply certified pose estimator to full pose space}
    $\Radius \gets (\delta + \BoundInput) \left(\Lipschitz_{\Dec} \Lipschitz_{\Enc} + 1 \right) + \TrainingError$
    \tcp*[r]{Compute certified error radius}
    $\PoseSpaceEst = \left\{ \Pose \in \PoseSpace_{\Target} \mid \norm{\Pose - \PoseEst} \leq \Radius \right\}$\; 
	$ (\PoseSpaceGrid,\Reachset) \leftarrow \GridSampling(\PoseSpaceEst, \Decoder)$\;
	\ForEach{$(\Pose_i,\Image_{\Pose_i}) \in (\PoseSpaceGrid,\Reachset)$}{
		\If{$\norm{\Image_{\Pose_i} - I} \leq \DetectionThreshold$}{%
			\Return{$(\True, \PoseEst)$} \tcp*[r]{Target present} \label{line:detection_true}%
		}
	}
	\Return{\!$(\False, \emptyset)$\!} \tcp*[r]{Samples exhausted, target absent} \label{line:detection_false}%
}
\end{algorithm}
\subsection{Correctness Analysis}

We now provide a formal proof of the correctness of our GGM-based detection framework as follows. 

\begin{theorem}[Correctness of Object Detection]
\label{thm:detection_correctness}
Let $\Decoder : \PoseSpace_{\Target} \to \ImageSpace$ be a GGM-based decoder that satisfies~\asmref{asm:injective_decoder}
with Lipschitz constant $\Lipschitz_{\Dec}$, and let $\Encoder : \ImageSpace \to \PoseSpace_{\Target}$ be a certified pose estimator satisfying the error bound of \thmref{thm:estimation_error_bounds}. If the grid sampling step size used to compute the forward reachable set $\Reachset(\Decoder, \PoseSpaceEst)$ satisfies
$\GridStep 
< \Lipschitz_{\Dec}^{-1},$
then \algref{alg:object_detection} (with $\DetectionThreshold = 0$) is correct in the following sense:
\begin{enumerate}[label=\alph*)]
\item Soundness: If $\Detector(\Image, \PoseSpace_{\Target}) = \True$, then $\Pose \in \PoseSpace_{\Target}$ (the target is present). 
\item Completeness: If $\Pose \in \PoseSpace_{\Target}$ and $\Image = \Decoder(\Pose)$, then $\Detector(\Image, \PoseSpace_{\Target}) = \True$ (the target will be detected).
\end{enumerate}
\end{theorem}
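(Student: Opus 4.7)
The plan is to prove soundness and completeness separately, leveraging two structural facts: first, when $\DetectionThreshold = 0$ the algorithm returns \True{} only after verifying an exact image match against some sampled pose; second, the grid step condition $\GridStep < \Lipschitz_{\Dec}^{-1}$ together with the binary output space $\B^{\ImSizePxl}$ guarantees that every pose in $\PoseSpaceEst$ is image-equivalent to at least one grid sample.

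For soundness, the argument reduces to inspection of \algref{alg:object_detection}. A \True{} answer can only be produced at line~\ref{line:detection_true}, which is reached only when $\norm{\Decoder(\Pose_i) - \Image} \le 0$ for some $\Pose_i \in \PoseSpaceGrid$. Since $\PoseSpaceGrid \subseteq \PoseSpaceEst \subseteq \PoseSpace_{\Target}$, this exhibits a pose in $\PoseSpace_{\Target}$ realizing $\Image$, satisfying the target-presence condition of \probref{prob:object_detection}.

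For completeness, starting from $\Pose \in \PoseSpace_{\Target}$ with $\Image = \Decoder(\Pose)$, I would first invoke \thmref{thm:estimation_error_bounds} on $\PoseEst = \Encoder(\Image)$ to obtain $\norm{\Pose - \PoseEst} \le \Radius$, which places $\Pose \in \PoseSpaceEst$. Next, by the grid construction of \algref{alg:grid_sampling}, there exists $\Pose_i \in \PoseSpaceGrid$ with $\norm{\Pose - \Pose_i} \le \GridStep$. Lipschitz continuity of $\Decoder$ then yields
\begin{equation*}
\norm{\Decoder(\Pose) - \Decoder(\Pose_i)} \;\le\; \Lipschitz_{\Dec} \, \norm{\Pose - \Pose_i} \;<\; 1.
\end{equation*}
The decisive step is the binary-image argument: because $\Decoder(\Pose), \Decoder(\Pose_i) \in \B^{\ImSizePxl}$, two distinct binary images differ in at least one pixel and therefore have distance at least $1$ in the relevant norm. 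A strict inequality $< 1$ is thus only consistent with the two images coinciding, giving $\Image = \Decoder(\Pose_i)$. The \textbf{if}-branch at line~\ref{line:detection_true} then fires for this $\Pose_i$ and the algorithm returns \True{}.

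The main obstacle will be reconciling continuous and discrete geometries in a single chain of inequalities. Specifically, the Lipschitz constant $\Lipschitz_{\Dec}$ of the GGM must be interpreted with respect to the same norm under which binary images separate by at least one, and the axis-aligned grid of \algref{alg:grid_sampling} must be shown to leave no point of $\PoseSpaceEst$ farther than $\GridStep$ from a grid node under that same norm. I expect this to go through by a standard $\ell_\infty$-versus-$\ell_2$ accounting, possibly absorbing a dimension-dependent factor into the small constant $\mathbf{c}$ in \algref{alg:grid_sampling}; once that compatibility is arranged, the remainder follows immediately from \thmref{thm:estimation_error_bounds} and Lipschitz composition.
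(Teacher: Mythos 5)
Your proposal is correct and follows essentially the same route as the paper: soundness by inspection of the algorithm, and completeness via the chain Theorem~\ref{thm:estimation_error_bounds} $\Rightarrow$ $\Pose \in \PoseSpaceEst$ $\Rightarrow$ grid neighbor within $\GridStep$ $\Rightarrow$ Lipschitz bound $< 1$ $\Rightarrow$ binary images coincide (the paper phrases this last part as a contradiction, you argue it directly, which is immaterial). Your closing remark about reconciling the norm used for the Lipschitz constant with the norm in which binary images separate by at least one is a fair observation about a detail the paper leaves implicit, but it does not change the argument.
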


\begin{proof}
We prove each property separately.

\paragraph{Soundness} This follows directly by construction.
If the algorithm returns $\True$ at Line \ref{line:detection_true},
then there exists some $\Pose_i \in \PoseSpaceGrid$
such that $\norm{\Decoder(\Pose_i) - \Image} \le \DetectionThreshold$.
Since $\PoseSpaceGrid \subseteq \PoseSpace_{\Target}$ by construction,
we conclude that the target is present. 

\paragraph{Completeness}
Assume for the sake of contradiction that
the target \emph{is} present, i.e., $\Image = \Decoder(\Pose)$ for some true (unknown) pose $\Pose \in \PoseSpace_{\Target}$,
yet the algorithm returns $\False$ (\ie claims the target is absent).
Since $\False$ was returned, then all grid points in $\PoseSpaceGrid$ are exhausted without finding a match.
That is, $\forall \PoseEst_i \in \PoseSpaceGrid$ we have:
\begin{equation}
\label{eq:no_match}
\norm{\Decoder(\PoseEst_i) - \Image} > \DetectionThreshold.
\end{equation}
Now, by \thmref{thm:estimation_error_bounds}, and since $\Image = \Decoder(\Pose)$ is a clean image of the target at pose $\Pose$,
the encoder estimate $\PoseEst = \Encoder(\Image)$ satisfies the following bound:
\begin{align*}
\norm{\Pose - \PoseEst} \leq \Radius = (\delta + \eta) (\Lipschitz_{\Dec} \Lipschitz_{\Enc} + 1) + \epsilon.
\end{align*}
This means that $\Pose$ lies within the certified error ball of radius $\Radius$ centered at $\PoseEst$.
By the construction of the grid sampling in \algref{alg:grid_sampling},
which discretizes this ball with step size $\GridStep$,
there exists some grid point $\Pose_j \in \PoseSpaceGrid$ such that:
\begin{equation}
\label{eq:grid_proximity}
\norm{\Pose - \Pose_j} \leq \GridStep.
\end{equation}
Now we apply the Lipschitz continuity of the decoder $\Decoder$. Since $\Decoder$ has Lipschitz constant $\Lipschitz_{\Dec}$, we have:
\begin{align}
\norm{\Decoder(\Pose) - \Decoder(\Pose_j)} 
&\leq \Lipschitz_{\Dec} \norm{\Pose - \Pose_j} \label{eq:lipschitz_decoder}  \\
&\leq \Lipschitz_{\Dec} \cdot \GridStep \quad \text{(by \eqref{eq:grid_proximity})} \nonumber \\
&< 1 \quad \text{(by definition of $\GridStep$)}. \label{eq:less_than_one}
\end{align}
Let $\Image_{\Pose_j} = \Decoder(\Pose_j)$ denote the image generated at the grid point $\Pose_j$. From \eqref{eq:less_than_one}, we have:
\begin{align*}
\norm{\Image - \Image_{\Pose_j}} < 1.
\end{align*}
Since fewer than 1 pixel can differ, the two images are identical.
Therefore, at $\Pose_j$, we have:
\begin{align*} 
\norm{\Decoder(\Pose_j) - \Image} = 0 \leq \DetectionThreshold,
\end{align*}
which means Line~\ref{line:detection_true} of \algref{alg:object_detection} should have returned $\True$.
This contradicts our assumption in \eqref{eq:no_match} that all grid points failed the detection test.
Therefore, if the target is present (\ie $\Image = \Decoder(\Pose)$ for some $\Pose \in \PoseSpace_{\Target}$), the algorithm must return $\True$, establishing its completeness.
\end{proof}
The theorem can also be extended to handle noisy images by replacing the detection threshold $\DetectionThreshold = 0$ with $\DetectionThreshold = \overline{n}$, where $\overline{n}$ is the noise budget as defined in~\corref{cor:estimation_error_bounds_noisy}.
In this case, the bound in~\eqref{eq:less_than_one} becomes:
\begin{align}
\norm{\Image - \Image_{\Pose_i}} \leq \Lipschitz_{\Dec} \cdot \GridStep + n < 1 + n.
\end{align}
If $\DetectionThreshold \geq \Lipschitz_{\Dec} \cdot \GridStep + n$, the algorithm remains correct up to the specified noise margin.
\section{Certified Vision-Based Filtering and Pose Estimation for Cluttered Environments}
\label{sec:pose_estimation_cluttered}

This section addresses the problem of pose estimation in cluttered environments,
where multiple objects may be present in the captured image
and the presence of the target object is not known a priori.
\figref{fig:cluttered_environment_example} illustrates an example of such a scenario,
where the target object is surrounded by various clutter objects.%
The objective is to design an algorithm that determines whether
the target object is present and, if so, estimates the camera pose relative to it.
This extends the earlier assumption of uncluttered environments,
in which the image may only contain the target object.
In that case, a GGM-based detector was used to verify target presence (see \secref{sec:detection_uncluttered}) and, upon detection, to estimate the camera pose relative to the target.

\begin{figure}[t]
	\centering
	\includegraphics[width=0.9\columnwidth]{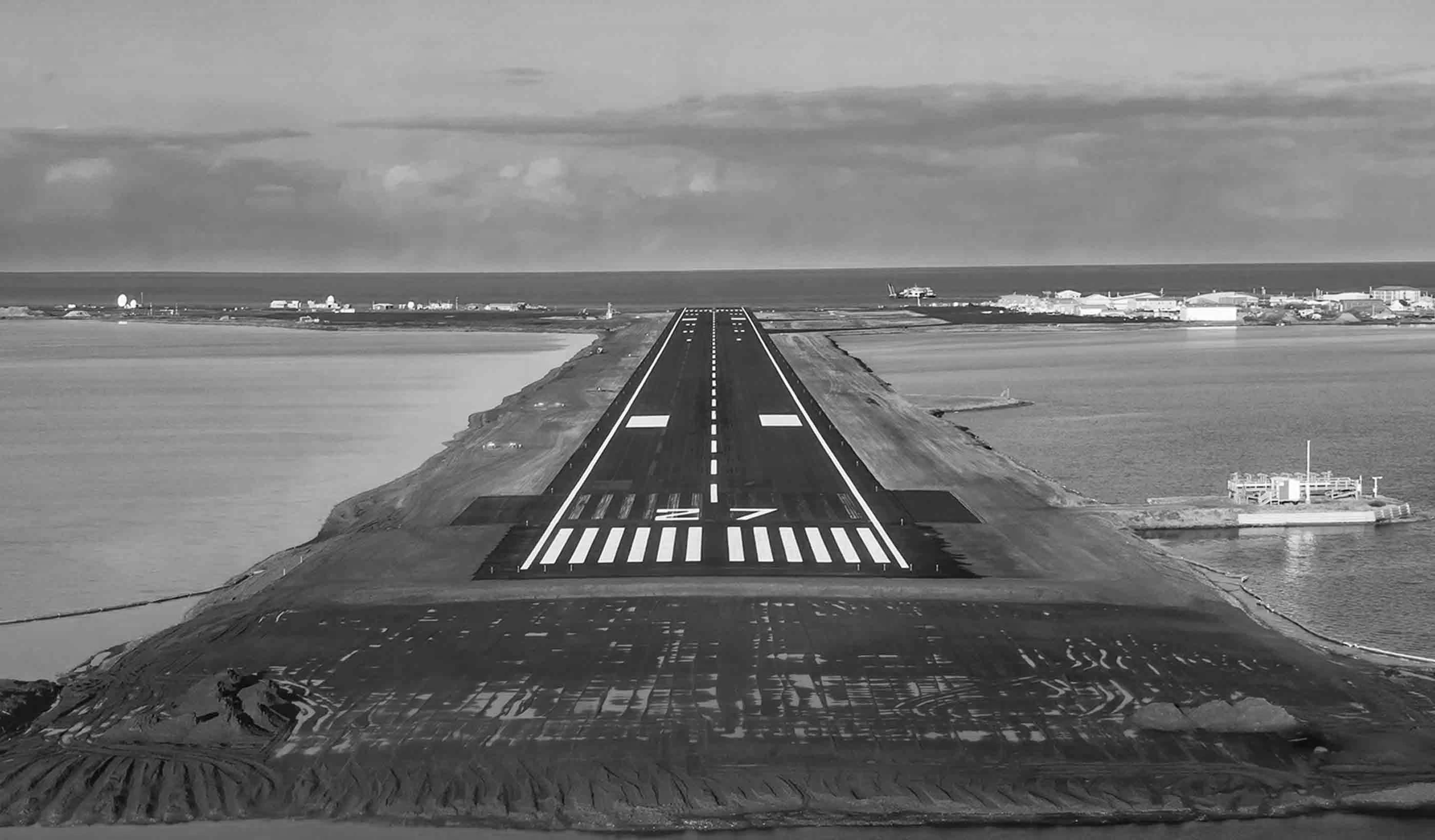}
	\caption{%
		Example of a cluttered environment where the target object (the runway) is surrounded by various clutter objects.
	}%
	\label{fig:cluttered_environment_example}
\end{figure}

\begin{problem}[Pose Estimation in Cluttered Environments]\label{prob:pose_estimation_cluttered}
	Consider a system $(\Target, \Clutter, \Camera, \PoseSpace_{\Target})$, where
	$\Clutter$ denotes a possibly-nonempty set of clutter objects.
	Let $\PoseSpace_{\Target} \subset \PoseSpace$ denote the subset of poses
	for which the target object $\Target$ is present in the scene.
	Given an image $\Image \in \ImageSpace$ captured by the camera $\Camera_{\Target}$
	at unknown pose $\Pose \in \PoseSpace$, design a detector:
	\begin{align*}
		\Detector^c \colon \ImageSpace \to \{\True, \False\} \times (\PoseSpace_{\Target} \cup \{\emptyset\}),
	\end{align*}
	such that:
	\begin{align*}
		\Detector^c(\Image) &= \left\{
			\begin{array}{ll}
				(\True, \PoseEst) & \text{if } \Pose \in \PoseSpace_{\Target},\\
				(\False, \emptyset) & \text{otherwise,}
			\end{array}
		\right.\\
		\max_{\Pose \in \PoseSpace_\Target} \|\Pose - \PoseEst \| & \leq \;\ErrorBound
		\qquad \text{if } \Pose \in \PoseSpace_{\Target},
	\end{align*}
	where $\PoseEst \in \PoseSpace_{\Target}$ is the estimated pose of the camera.
\end{problem}
\subsection{Spatial Filtering for Clutter Removal}

The key challenge in cluttered environments is that the object detector (see~\secref{sec:detection_uncluttered}) assumes that the target object is the only object present in the image.
In cluttered scenes, the object is surrounded by clutter, which interferes with the detection process.
We address~\probref{prob:pose_estimation_cluttered} by proposing a spatial filtering approach where we partition the pose space into regions, design a specialized detector for each region, and combine their outputs.
Consider a bounded pose space $\PoseSpace_{\Target} \subset \PoseSpace$.
We partition $\PoseSpace_{\Target}$ into a $\PartitionCt$ non-overlapping partitions
$\{ \PoseSpace_1, \ldots, \PoseSpace_{\PartitionCt} \}$ such that:
\begin{align}
    &\cup_{i=1}^{\PartitionCt} \PoseSpace_i = \PoseSpace_{\Target} \label{asm:partition_1}\\
    &I' \odot \bigvee_{I \in \Reachset(\PoseSpace_i, \Decoder)} I \not \in \Reachset(\PoseSpace_i, \Decoder) \nonumber\\ &\hspace{5em} \text{ for all } I' \in \PoseSpace_j \text{ with } j \in \{1,...,K\}\setminus \{i\}, \label{asm:partition_2}
\end{align}
where $\bigvee$ denotes the pixel-wise logical OR operation and $\odot$ denotes the pixel-wise logical AND operation.
For each hypercube $\PoseSpace_i$, we construct a spatial filter $\SpatialFilter_i$ that isolates the portion of an image corresponding to all poses $\Pose \in \PoseSpace_i$.
Through the forward reachability analysis discussed in Algorithm~\ref{alg:grid_sampling}, the spatial filter $\SpatialFilter_i$ is realized as the binary mask:
\begin{align} \label{eq:spatial_filter_mask}
	\Mask_i 
    = \bigvee_{\Pose \in \PoseSpace_{i}} \Decoder(\Pose) \quad = \bigvee_{I \in \Reachset(\PoseSpace_{i},\Decoder)} I. 
\end{align}
That is, the mask $\Mask_i$ represents the union of all images that can be generated by $\Decoder$ for poses in $\PoseSpace_i$.
Applying the spatial filter $\SpatialFilter_i$ to an image $\Image$ results in the filtered image:
\begin{align*}
	\widetilde{\Image}_i = \SpatialFilter_i(\Image) = \Image \odot \Mask_i.
\end{align*}
\algref{alg:spatial_filter_mask} outlines the procedure for constructing the spatial filter mask $\Mask_i$
for a given pose space partition $\PoseSpace_i$.
Since $\PoseSpace_i$ is a continuous set, it cannot be directly used to construct the mask.
Therefore, we start by constructing the forward reachable set $\Reachset(\PoseSpace_{i},\Decoder)$ to find all possible images that can be generated within $\PoseSpace_i$.
The mask $\Mask_i$ is then constructed by accumulating the logical OR of all generated images.
Due to the choice of the grid sampling step size $h$ in Algorithm~\ref{alg:grid_sampling}, the algorithm ensures that all possible images generated by poses in $\PoseSpace_i$ are captured in the mask $\Mask_i$.
\begin{algorithm}[t]
	\caption{Spatial Filter Mask Construction}
	\label{alg:spatial_filter_mask}
	\newcommand{\GridStepSub}{\Delta_s}
	\KwIn{$\PoseSpace_i$ (pose space region); \\
    \hspace{3em} $\Decoder$ (GGM decoder); 
    }
	\KwOut{$\Mask_i$ (spatial filter mask)}
	\BlankLine
	
	Initialize $\Mask_i \leftarrow \mathbf{0}^{\ImHeightPxl \times \ImWidthPxl}$\;
	Construct the forward reachable set  
    $(-, \Reachset_i) \leftarrow \GridSampling(\PoseSpace_i, \Decoder)$\;
	\ForEach{$I \in \Reachset_i$}{
		$\Mask_i \leftarrow \Mask_i \vee I$ \tcp*{Accumulate via logical OR}
		}
		\Return $\Mask_i$\;
\end{algorithm}
\subsection{Correctness of Spatial Filtering}

We now establish the correctness properties of the spatial filtering mechanism described earlier.
The correctness relies on two fundamental properties: preservation and exclusion.
The preservation property ensures that if the true pose $\Pose$ lies within a partition $\PoseSpace_i$,
the spatial filter retains the target object in the filtered image.
The exclusion property ensures that if the true pose lies outside all partitions, the spatial filter removes the target object from all filtered images.

\begin{lemma}[Spatial Filter Preservation]\label{lem:spatial_filter_preservation}
  Let $\PoseSpace_i$ be a pose space partition---that satisfies Assumption~\ref{asm:injective_decoder}---and $\Mask_i$ be the corresponding spatial filter mask constructed by~\algref{alg:spatial_filter_mask}.
  For any pose $\Pose \in \PoseSpace_i$, the filtered image satisfies:
  \begin{align*}
    \Detector(\SpatialFilter_i(\Image_\Pose), \PoseSpace_i) = 
    \Detector(\Image_\Pose \odot \Mask_i, \PoseSpace_i) =
    \True.
  \end{align*}
  where $\Image_\Pose = \Decoder(\Pose)$ is the ideal image of the target at pose $\Pose$.
\end{lemma}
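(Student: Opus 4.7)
My plan is to reduce the claim to a direct application of the completeness part of \thmref{thm:detection_correctness}. The argument proceeds in two steps: first, I would show that the spatial filter $\SpatialFilter_i$ acts as the identity on the ideal image $\Image_\Pose$ whenever $\Pose \in \PoseSpace_i$; second, that the unfiltered image $\Image_\Pose$ is always detected by $\Detector(\cdot, \PoseSpace_i)$ whenever $\Pose \in \PoseSpace_i$. Chaining these two facts gives the lemma.

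For the first step I would establish $\Image_\Pose \odot \Mask_i = \Image_\Pose$ directly from the definition in~\eqref{eq:spatial_filter_mask}. Since $\Pose \in \PoseSpace_i$, we have $\Image_\Pose = \Decoder(\Pose) \in \Reachset(\PoseSpace_i, \Decoder)$, so $\Image_\Pose$ is itself one of the disjuncts in $\Mask_i = \bigvee_{I \in \Reachset(\PoseSpace_i, \Decoder)} I$. A pixelwise check then suffices: any pixel that is on in $\Image_\Pose$ is also on in $\Mask_i$, so the AND preserves it; any pixel that is off in $\Image_\Pose$ is killed by the AND regardless of the mask value. A subsidiary point is that the grid-sampled mask produced by \algref{alg:spatial_filter_mask} coincides with the theoretical union, which follows from the grid-resolution argument used in the proof of \thmref{thm:detection_correctness}: with step size $\GridStep < \Lipschitz_{\Dec}^{-1}$, every equivalence class of decoded images over $\PoseSpace_i$ is hit by at least one sample, so nothing is missed in the OR accumulation.

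For the second step I would invoke the completeness conclusion of \thmref{thm:detection_correctness}, specialized with $\PoseSpace_i$ in the role of $\PoseSpace_{\Target}$. Since $\Image_\Pose = \Decoder(\Pose)$ for a genuine $\Pose \in \PoseSpace_i$, the certified pose estimator returns $\PoseEst$ within the certified radius $\Radius$ of $\Pose$ by \thmref{thm:estimation_error_bounds}, and the inner grid sampling at resolution below $\Lipschitz_{\Dec}^{-1}$ produces a sample whose decoded image differs from $\Image_\Pose$ by strictly less than one pixel and is therefore identical to it. Hence $\Detector(\Image_\Pose, \PoseSpace_i) = \True$, and combining with the first step gives $\Detector(\SpatialFilter_i(\Image_\Pose), \PoseSpace_i) = \Detector(\Image_\Pose \odot \Mask_i, \PoseSpace_i) = \Detector(\Image_\Pose, \PoseSpace_i) = \True$ as claimed.

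The main obstacle I anticipate is a bookkeeping subtlety rather than a deep technical point: the certified radius $\Radius$ of \thmref{thm:estimation_error_bounds} was derived over the larger space $\PoseSpace_{\Target}$, and nothing forces the encoder output $\PoseEst = \Encoder(\Image_\Pose)$ to lie inside the sub-region $\PoseSpace_i$. I will need to argue that this is harmless: the ball $\{\Pose' : \norm{\Pose' - \PoseEst} \le \Radius\}$ still contains the true $\Pose \in \PoseSpace_i$, so the intersection $\PoseSpaceEst \cap \PoseSpace_i$ used inside $\Detector(\cdot, \PoseSpace_i)$ is nonempty and is covered by the inner grid at resolution $\GridStep < \Lipschitz_{\Dec}^{-1}$, yielding a grid point within $\GridStep$ of $\Pose$ whose decoded image is pixel-for-pixel identical to $\Image_\Pose$. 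With this caveat addressed, the two-step reduction goes through cleanly.
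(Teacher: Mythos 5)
Your proposal is correct and follows essentially the same route as the paper: first show that $\Image_\Pose \odot \Mask_i = \Image_\Pose$ because $\Image_\Pose$ is one of the disjuncts forming $\Mask_i$, then invoke the completeness part of \thmref{thm:detection_correctness} on the unfiltered image. The extra care you take regarding the grid covering of the mask and the encoder estimate possibly falling outside $\PoseSpace_i$ addresses details the paper's one-line appeal to \thmref{thm:detection_correctness} glosses over, but it does not change the argument's structure.
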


\begin{proof}
  By construction, the mask $\Mask_i$ is the logical OR of all images in $\ImageSpace_i = \Reachset(\Decoder, \PoseSpace_i)$.
  Since $\Pose \in \PoseSpace_i$, it follows that $\Image_\Pose \in \ImageSpace_i$.
  Therefore, $\Image_\Pose$ is included in the logical OR operation that defines $\Mask_i$, which implies that for every pixel where $\Image_\Pose$ has value $1$, the mask $\Mask_i$ also has value $1$.
  Consequently, $\Image_\Pose \odot \Mask_i = \Image_\Pose$. The result then follows directly from Theorem~\ref{thm:detection_correctness} since $\Image_\Pose \in \PoseSpace_i$.
\end{proof}

\begin{lemma}[Spatial Filter Exclusion]\label{lem:spatial_filter_exclusion}
  Let $\PoseSpace_i$ be a pose space partition---that satisfies Assumption~\ref{asm:injective_decoder}---and a corresponding spatial filter mask $\Mask_i$.
  For any pose 
  $\Pose \notin \PoseSpace_i$, 
  the filtered image satisfies:
  \begin{align*}
    \Detector(\SpatialFilter_i(\Image_\Pose), \PoseSpace_i) = 
    \Detector(\Image_\Pose \odot \Mask_i, \PoseSpace_i) =
    \False.
  \end{align*}
\end{lemma}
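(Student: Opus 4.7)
The plan is to reduce the claim to the partition property~\eqref{asm:partition_2} and then invoke the soundness clause of the GGM-based detector established in~\thmref{thm:detection_correctness}. The central observation is that $\Detector(\cdot, \PoseSpace_i)$ with $\DetectionThreshold = 0$ can only return $\True$ on images that belong to $\Reachset(\PoseSpace_i, \Decoder)$, since by construction of~\algref{alg:object_detection} a positive detection requires an exact match against $\Decoder(\Pose_k)$ for some grid pose $\Pose_k \in \PoseSpace_i$. It therefore suffices to show that the filtered image $\Image_\Pose \odot \Mask_i$ falls outside this reachable set.

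To set up the argument, I would first locate $\Pose$ in the partitioning: since $\Pose \notin \PoseSpace_i$ and $\bigcup_{j=1}^{\PartitionCt} \PoseSpace_j = \PoseSpace_{\Target}$ by~\eqref{asm:partition_1}, either $\Pose \in \PoseSpace_j$ for some $j \neq i$, or $\Pose \notin \PoseSpace_{\Target}$ (a case in which the target is not imaged and the conclusion is vacuous, as the detector trivially returns $\False$ on a non-target image by Theorem~\ref{thm:detection_correctness}). In the first case, $\Image_\Pose = \Decoder(\Pose) \in \Reachset(\PoseSpace_j, \Decoder)$, which makes $\Image_\Pose$ exactly the type of image to which~\eqref{asm:partition_2} applies. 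Instantiating the partition property with $I' = \Image_\Pose$ then yields
\begin{equation*}
  \Image_\Pose \odot \Mask_i \;=\; \Image_\Pose \odot \bigvee_{I \in \Reachset(\PoseSpace_i, \Decoder)} I \;\notin\; \Reachset(\PoseSpace_i, \Decoder).
\end{equation*}

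I would close the argument by contradiction: if $\Detector(\Image_\Pose \odot \Mask_i, \PoseSpace_i)$ returned $\True$, then by the soundness clause of~\thmref{thm:detection_correctness} some sampled pose $\Pose_k \in \PoseSpace_i$ would satisfy $\Decoder(\Pose_k) = \Image_\Pose \odot \Mask_i$, placing the filtered image inside $\Reachset(\PoseSpace_i, \Decoder)$ and contradicting the preceding display. The contrapositive gives the claim.

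The main obstacle I anticipate is interpreting~\eqref{asm:partition_2} correctly: as typeset, its quantifier reads $I' \in \PoseSpace_j$, which is a slight abuse since $I'$ is an image rather than a pose. The intended reading is that $I'$ ranges over images $\Decoder(\Pose')$ for $\Pose' \in \PoseSpace_j$ with $j \neq i$, and one should verify that the camera model being considered in this sub-lemma reduces to $\Decoder$ (no clutter contribution is being masked into the partition). Once this reading is fixed and the covering property~\eqref{asm:partition_1} is used to place $\Pose$ in another partition, the rest is essentially bookkeeping via the soundness of the detector.
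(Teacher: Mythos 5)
Your proposal is correct and follows essentially the same route as the paper: argue by contradiction, use \thmref{thm:detection_correctness} to conclude that a $\True$ detection would place $\Image_\Pose \odot \Mask_i$ inside $\Reachset(\PoseSpace_i, \Decoder)$, and contradict condition~\eqref{asm:partition_2}. Your additional bookkeeping (locating $\Pose$ in another partition via~\eqref{asm:partition_1} and flagging the type mismatch in the quantifier of~\eqref{asm:partition_2}) is a slightly more careful rendering of the same argument, not a different one.
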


\begin{proof}
    We prove this by contradiction. Assume that 
    $\Pose \notin \PoseSpace_i$
    but $\Detector(\SpatialFilter_i(\Image_\Pose), \PoseSpace_i) = \True$. It follows from Theorem~\ref{thm:detection_correctness} that this equality implies that $\SpatialFilter_i(\Image_\Pose) \in \Reachset(\PoseSpace_i, \Decoder)$ which contradicts the condition in~\eqref{asm:partition_2}.
\end{proof}
\subsection{Certified Pose Estimation in Cluttered Environments}

As illustrated in~\figref{fig:pipeline_cluttered}, our approach for pose estimation in cluttered environments integrates the spatial filtering mechanism with certified object detection to produce reliable detection and pose estimates.
The pipeline comprises three sequential stages: spatial filtering, certified detection, and selection.
In the first stage, the input image $\Image$ is processed by $\PartitionCt$ spatial filters $\{\SpatialFilter_1, \ldots, \SpatialFilter_{\PartitionCt}\}$, each corresponding to a partition $\PoseSpace_i$ of the target pose space.
Each spatial filter applies its mask $\Mask_i$ to the input image, yielding the filtered image $\widetilde{\Image}_i = \Image \odot \Mask_i$.
This operation isolates the portion of the image that could correspond to the target object at poses within $\PoseSpace_i$, while suppressing clutter from other regions.
In the second stage, each filtered image $\widetilde{\Image}_i$ is analyzed by its certified object detector ${\Detector}_i$ that corresponds to the pose space partition $\PoseSpace_i$ (see~\algref{alg:object_detection}).

\begin{figure}[t]
\centering
\includegraphics[width=1.0\columnwidth]{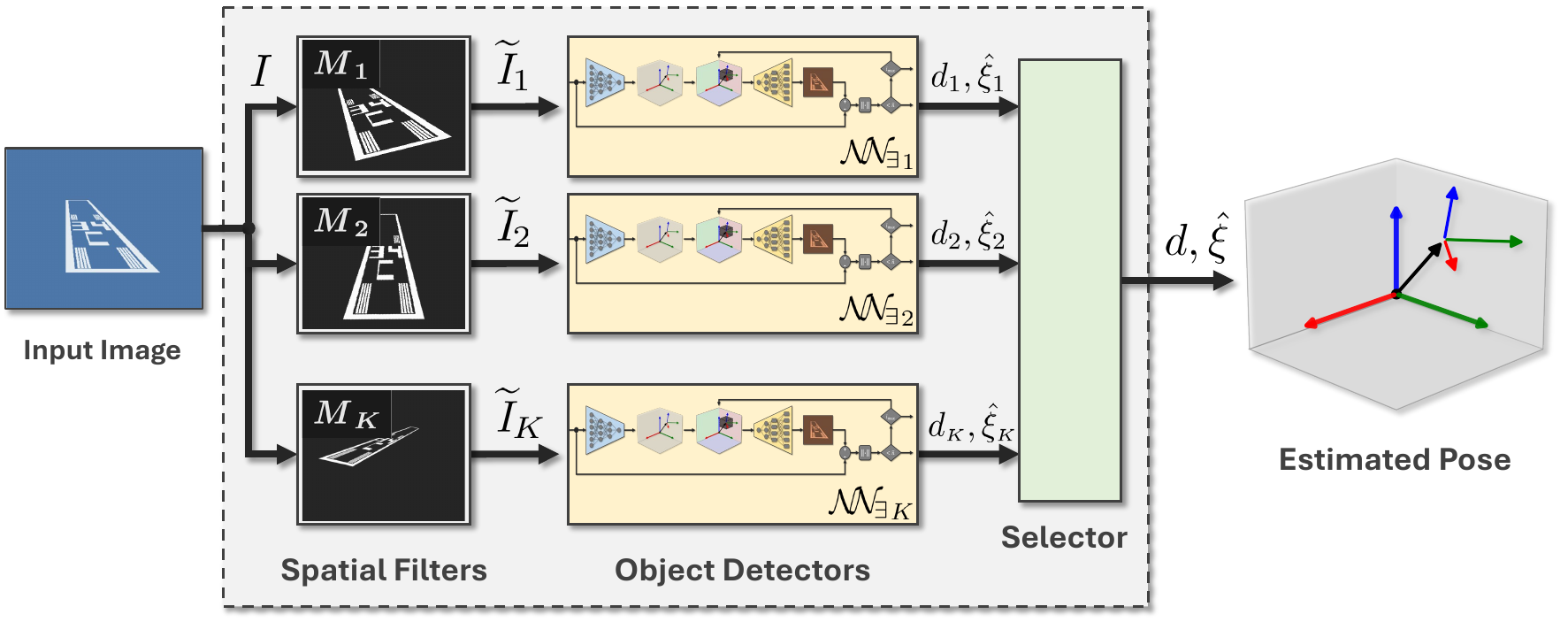}
\caption{%
  GGM-based spatial filtering pipeline for certified pose estimation in cluttered environments.
	An input image from a cluttered environment is processed by multiple spatial filters, each corresponding to a partition of the pose space.
	Each filtered image is then analyzed by a certified object detector.
	The outputs of the detectors are combined through a selection process that minimizes residual clutter, yielding the final pose estimate.
}%
\label{fig:pipeline_cluttered}
\end{figure}

For each filtered image, the detector either signals the absence of the target object ($d_i = \False$) or provides a pose estimate $\PoseEst_i$ with detection status $d_i = \True$.
The set of all detections is collected in $\DetectionSet$.  
Since multiple objects might exist in the scene, the algorithm then returns $(\False, \emptyset)$ if no detections were made, or the entire set $\DetectionSet$ otherwise. A third (optional) stage, is a selection mechanism that picks all of a subset of the set $\DetectionSet$ depending on user specific criteria (e.g., closest detected object).
The complete pipeline is formalized in~\algref{alg:pose_estimation_cluttered}.
\begin{algorithm}[t]
\caption{Certified Pose Estimation in Cluttered Environments}
\label{alg:pose_estimation_cluttered}
\KwIn{$\Image$ (input image), $\{\Mask_1, \ldots, \Mask_{\PartitionCt}\}$ (spatial filter masks), $\Detector$ (certified detector), $\Decoder$ (GGM decoder), $\ErrorBound$ (certified error bound)}
\KwOut{$(d, \PoseEst)$ (detection status and estimated pose)}
\BlankLine

$\DetectionSet \leftarrow \emptyset$ \tcp*{Detection set}
\For{$i = 1$ \KwTo $\PartitionCt$}{
  $\widetilde{\Image}_i \leftarrow \Image \odot \Mask_i$ \tcp*{(1) Spatial filtering}
  $(d_i, \PoseEst_i) \leftarrow \Detector\left(\widetilde{\Image}_i\right)$ \tcp*{(2) Certified detection}
	$\DetectionSet \leftarrow \DetectionSet \cup \{ \PoseEst_i\}$\;
}
$\overline{\DetectionSet} \leftarrow \texttt{User-Specified-Criteria}(\DetectionSet)$ \tcp*{(3) Selection}
\Return $\overline{\DetectionSet}$\;
\vspace*{0.2em}
\end{algorithm}

However, the correctness of this localized detection pipeline relies on the purity of the filtered image $\widetilde{\Image}_{i^*} = \Image \odot \Mask_{i^*}$ for the partition $i^*$ where the unknown pose belongs to. If the spatial filter $\Mask_{i^*}$ captures significant portions of the background clutter alongside the object, the resulting signal may breach the detector $\Detector$ decision threshold $\tau$.

Therefore, to guarantee valid pose estimation and object detection, we must constrain the spatial relationship between the target object $\Target$ and environmental clutter $\Clutter$. We introduce a Clutter Intrusion Bound. This condition posits that either the scene layout provides sufficient physical clearance to spatially isolate the object, or, if overlap occurs, the magnitude of the clutter's contribution to the filtered image $\Image \odot \Mask_{i^*}$ remains below the noise level $\overline{n}$ used to tune the detection threshold $\tau$. This duality allows us to trade off spatial constraints (requiring clear backgrounds) with detector robustness (tolerating minor occlusions or background noise), providing a flexible theoretical basis for deployment.

\begin{assumption}[Bounded Clutter Intrusion]
\label{asm:bounded_clutter}
Consider a bounded pose space $\PoseSpace_{\Target} \subset \PoseSpace$ and a non-overlapping partitioning $\{ \PoseSpace_1, \ldots, \PoseSpace_{\PartitionCt} \}$ that satisfies~\eqref{asm:partition_1}-\eqref{asm:partition_2}. Let $\PoseSpace_{i^*}$ denote the partition for which the unknown pose $\Pose$ belongs, i.e., $\Pose \in \PoseSpace_{i^*}$. Let $\overline{n}$ be the noise threshold used to tune the detection threshold $\tau$ used by $\Detector$. We assume that $\overline{n}$ is selected such that $\norm{\Image_{\Clutter} \odot \Mask_{i^*}} \le \overline{n}.$
\end{assumption}

This inequality consolidates two distinct operating regimes into a single constraint:
\begin{enumerate}
    \item \textbf{Spatial Isolation ($\overline{n} = 0$):} The target object $\Target$ is sufficiently separated from clutter such that the filter $M_{i^*}$ effectively excludes all clutter pixels. This corresponds to a minimum spatial clearance that depends on the volume of $\PoseSpace_{i^*}$.
    \item \textbf{Robust Detection ($\overline{n} > 0$):} Clutter partially intrudes into the filter window, but its magnitude is suppressed by the detector's inherent noise rejection threshold $\tau$.
\end{enumerate}
\begin{theorem}[Pipeline Correctness]\label{thm:pipeline_correctness}
Consider a pose space $\PoseSpace_{\Target}$ that satisfies Assumption~\ref{asm:injective_decoder} and a partitioning $\{\PoseSpace_1, \ldots, \PoseSpace_{\PartitionCt}\}$ that satisfies~\eqref{asm:partition_1}-\eqref{asm:partition_2}.  Let $\PoseSpace_{i^*}$ denote the partition for which the unknown pose $\Pose$ belongs, i.e., $\Pose \in \PoseSpace_{i^*}$.
  For a cluttered image $\Image_\Pose$ captured at unknown pose $\Pose \in \PoseSpace_{\Target}$, and under the Bounded Clutter Intrusion assumption (Assumption~\ref{asm:bounded_clutter}), the pipeline in~\algref{alg:pose_estimation_cluttered} satisfies:

  \begin{enumerate}
    \item \textbf{Uniqueness under Spatial Isolation:} If the spatial filter $M_{i^*}$ achieves perfect spatial isolation (i.e., $\overline{n} = 0$), the algorithm returns a unique valid pose, i.e.:
    \begin{equation}
        \DetectionSet = \{\PoseEst^*\} \; \text{and} \; \norm{\Pose - \PoseEst^*} \le (\delta + \BoundInput) \left(\Lipschitz_{\Dec} \Lipschitz_{\Enc} + 1 \right) + \TrainingError. \label{eq:uniqueness}
    \end{equation}
    
    \item \textbf{Validity under Robust Detection:} If the clutter intrusion in $M_{i^*}$ is non-zero but bounded ($\overline{n} > 0$), the algorithm returns a candidate set $\DetectionSet$ that is guaranteed to include the ground truth, i.e.:
    \begin{align}
        \vert \DetectionSet &\vert \ge 1 \quad \text{and} \quad \exists \PoseEst^* \in \DetectionSet \text{ such that: } \nonumber\\ & \norm{\Pose - \PoseEst^*} \le (\delta + \BoundInput) \left(\Lipschitz_{\Dec} \Lipschitz_{\Enc} + 1 \right) + \Lipschitz_{\Enc}  \overline{n} + \TrainingError. \label{eq:validity}
    \end{align}
\end{enumerate}
\end{theorem}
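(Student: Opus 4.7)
The plan is to reduce the cluttered problem on the unique partition containing $\Pose$ to cases already handled by our uncluttered analysis. By~\eqref{asm:partition_1} and the non-overlapping property there exists a unique $i^*$ with $\Pose \in \PoseSpace_{i^*}$. Using the image decomposition from~\defref{def:image_clutter}, the filtered image on this partition admits
\begin{align*}
\widetilde{\Image}_{i^*} = \Image_{\Pose} \odot \Mask_{i^*} = \bigl(\ImageIdeal(\Pose) \odot \Mask_{i^*}\bigr) \vee \bigl(\ImageClutter(\Pose) \odot \Mask_{i^*}\bigr),
\end{align*}
where the first term equals $\ImageIdeal(\Pose)$ by~\lemref{lem:spatial_filter_preservation} and the second has norm at most $\overline{n}$ by~\asmref{asm:bounded_clutter}.

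For part~(1) with $\overline{n}=0$, the clutter contribution vanishes exactly, so $\widetilde{\Image}_{i^*} = \ImageIdeal(\Pose)$ is an ideal uncluttered image. Invoking~\thmref{thm:detection_correctness} at partition $i^*$ guarantees a positive detection, and~\thmref{thm:estimation_error_bounds} applied to $\Encoder$ on $\widetilde{\Image}_{i^*}$ yields the estimation bound in~\eqref{eq:uniqueness}. For the uniqueness part of the claim, I would combine~\eqref{asm:partition_2} with~\lemref{lem:spatial_filter_exclusion} to rule out detections at any $i \neq i^*$: under strict isolation no clutter pollutes $\Mask_i$, and the target image alone, filtered through $\Mask_i$, cannot lie in $\Reachset(\PoseSpace_i,\Decoder)$, so the soundness guarantee of the detector forces $\DetectionSet = \{\PoseEst^*\}$.

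For part~(2) with $\overline{n}>0$, the filtered image $\widetilde{\Image}_{i^*}$ is exactly an ideal image perturbed by bounded pixel noise of magnitude $\overline{n}$. The noise-robust detector extension remarked after~\thmref{thm:detection_correctness} (with $\DetectionThreshold \geq \Lipschitz_{\Dec}\GridStep + \overline{n}$) ensures a positive detection at $i^*$, and~\corref{cor:estimation_error_bounds_noisy} then supplies the estimate $\PoseEst^* \in \DetectionSet$ satisfying the enlarged bound in~\eqref{eq:validity}. Since at least the ground-truth candidate is present, $|\DetectionSet|\ge 1$, completing the validity claim; other partitions may also produce candidates, but the statement only demands existence of a valid one inside $\DetectionSet$.

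The principal obstacle is the uniqueness clause of part~(1), because~\asmref{asm:bounded_clutter} only bounds clutter intrusion in $\Mask_{i^*}$, while uniqueness requires excluding spurious detections on \emph{every} partition. I would resolve this either by strengthening the isolation hypothesis to hold uniformly across $\{\Mask_i\}_{i=1}^{\PartitionCt}$, or by leveraging the discrete pixel-quantized nature of $\Decoder$ together with~\eqref{asm:partition_2} to show that no mixed (target plus clutter) filtered image can accidentally coincide with an element of $\Reachset(\PoseSpace_i,\Decoder)$ for $i \neq i^*$. This interpretive step is the crux of uniqueness; the existence and error-bound conclusions follow routinely by chaining~\lemref{lem:spatial_filter_preservation} with the uncluttered estimator guarantees of~\thmref{thm:estimation_error_bounds} and~\corref{cor:estimation_error_bounds_noisy}.
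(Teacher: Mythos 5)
Your proposal is correct and follows essentially the same route as the paper: existence of a valid candidate and the error bounds come from \lemref{lem:spatial_filter_preservation} together with \asmref{asm:bounded_clutter} and the uncluttered guarantees (\thmref{thm:estimation_error_bounds}, \corref{cor:estimation_error_bounds_noisy}), while uniqueness is argued via \lemref{lem:spatial_filter_exclusion} and condition~\eqref{asm:partition_2}. The obstacle you flag for the uniqueness clause is genuine, and the paper's own (very terse) proof does not resolve it either: it invokes \lemref{lem:spatial_filter_exclusion}, which is stated for the ideal image $\Decoder(\Pose)$, whereas the image fed to $\Mask_i$ for $i\neq i^*$ in a cluttered scene is $\ImageIdeal\vee\ImageClutter$ and \asmref{asm:bounded_clutter} only constrains clutter inside $\Mask_{i^*}$ --- so your suggested strengthening of the isolation hypothesis across all masks is, if anything, more careful than the published argument.
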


In other words, Theorem~\ref{thm:pipeline_correctness}  distinguishes the behavior of Algorithm~\ref{alg:pose_estimation_cluttered} under these two operating regimes by analyzing the cardinality of the solution set $\DetectionSet$. In the Spatial Isolation regime, where the filter $M_{i^*}$ successfully excludes all external clutter, the filtered image $I \odot M_{i^*}$ conforms strictly to the generative model of the object. Consequently, the inverse map converges to a unique solution, returning a singleton set containing the optimal pose estimate. Conversely, in the Robust Detection regime, the intrusion of clutter—even within the permissible bound---may introduce ambiguity. Under these conditions, our algorithm generalizes to a set-valued estimator, returning a discrete set of candidate poses. While the presence of clutter prevents us from guaranteeing uniqueness (cardinality of one), the calibration of the bound $\tau$ ensures validity: the set of candidates is guaranteed to contain the ground truth pose.

\begin{proof}
    We start by proving~\eqref{eq:validity}.
  It follows from~\eqref{asm:partition_1} that 
  there exists some $i^*$ such that $\Pose \in \PoseSpace_{i^*}$.
  By~\lemref{lem:spatial_filter_preservation} and Assumption~\ref{asm:bounded_clutter}, 
  the certified detector from~\algref{alg:object_detection} correctly detects the target object in $I_\Pose \odot \Mask_{i^*}$, it returns $d_{i^*} = \True$ for $\widetilde{\Image}_{i^*}$.
  Therefore, the algorithm adds $d_{i^*} = \True$ and $\PoseEst_{i^*}$ to $\DetectionSet$.

    The uniqueness of solutions in~\eqref{eq:uniqueness}, follows from Assumption~\ref{asm:bounded_clutter} and \lemref{lem:spatial_filter_exclusion} where all partitions other than $i^*$ will be rejected by the detector $\Detector$.
\end{proof}

\section{Evaluation}%
\label{sec:eval}
In this section, we present a series of experiments to evaluate various aspects of the proposed framework.
In Experiment 1, we evaluate the correctness of the GGM computational graph detailed in~\figref{fig:ggm_process} using two examples: a stop sign and runway markings.
Experiment 2 validates the certified pose estimation algorithm (\algref{alg:dec_enc_framework}) and the theoretical error bounds established in~\thmref{thm:estimation_error_bounds} in uncluttered environments.
Finally, Experiment 3 assesses the performance of the detection and pose estimation pipeline in cluttered environments (\algref{alg:pose_estimation_cluttered}).
\subsection{Experimental Setup}

For the experiments, we implemented the GGM computational graph in Python using PyTorch.
The experiments were conducted on an Apple M1 Pro processor with 32 GB of RAM.
Experiments 1 and 2 used synthetic images, while Experiment 3 used real images captured by a SilkyEvCam event-based camera
with known intrinsic parameters
($\ImWidthPxl = 640$, $\ImHeightPxl = 480$, $f=\SI{8}{mm}$, pixel size $\SI{15}{\micro\meter} \times \SI{15}{\micro\meter}$).
In the latter case, the ground truth of the camera pose was measured using a Vicon motion capture system.

Overall, three target objects were considered in the experiments.
The first target is a stop sign of size $\SI{0.75}{m}$ in length and $\SI{0.75}{m}$ in width (see~\figref{fig:target_example}).
The target object was modeled as a composition of 12 polygons, composed together using a mix of union and difference operations.
For instance, the octagonal shape of the stop sign involves two polygons:
an outer octagon $\Polygon_{\textup{outer}}$ and an inner octagon $\Polygon_{\textup{inner}}$, comprising 8 vertices each,
and composed together using the expression $\Polygon_{\textup{outer}} \oplus \Polygon_{\textup{inner}}$ (see~\tabref{tab:composition_operators}).
Similarly, the letter ``T'' in ``STOP'' is modeled as the union of its two constituent rectangles.

The second an airport runway markings as the target object, scaled down to $\SI{0.3}{m}$ in length and $\SI{0.1}{m}$ in width.
As shown in~\figref{fig:runway_dimensions}, the runway markings consist of a series of white stripes indicating its threshold for landing, and one character and two digits indicating the runway's identifier.
These standard runway markings~\cite{faa2020advisory} are used to provide visual aids for aircrafts during landing.
The runway is modeled as a composition of polygons and the GGM is designed following the same process.
Finally, a physical slow-vehicle ahead sign of size $\qty{14}{\text{in}}\times\qty{16}{\text{in}}$ is used.
As shown in~\figref{fig:slow_sign_setup}, the sign is placed on a planar surface, and images are captured using the event-based camera at various relative poses between the camera and the sign.

\begin{figure}[t]
	\centering
	\includegraphics[width=0.6\columnwidth]{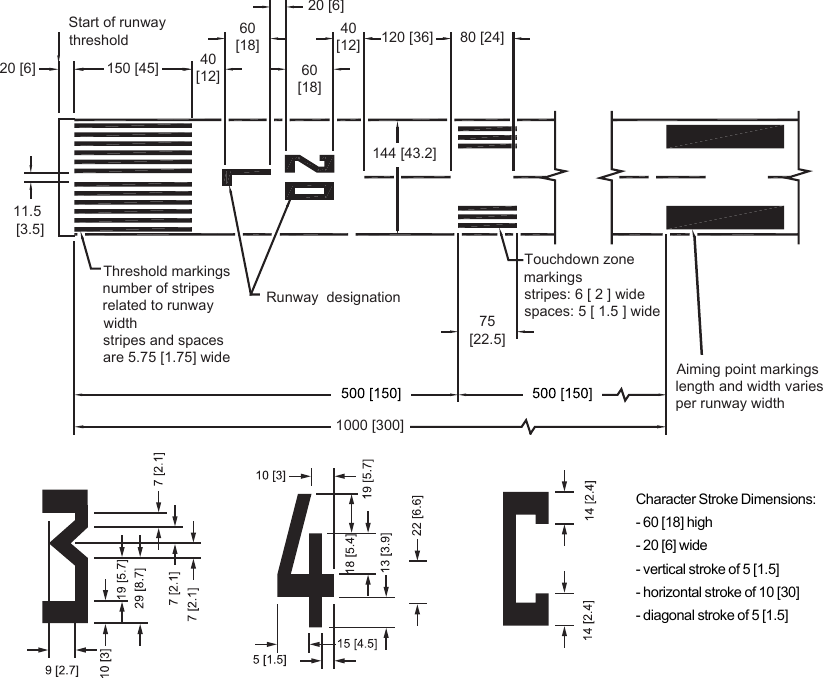}
	\caption{Standard dimensions of runway markings (in meters [feet])~\cite{faa2020advisory} used to design the runway GGM.
	The runway is \SI{300}{\meter} in length and \SI{60}{\meter} in width, and starts with threshold markings, followed by one letter and two digits indicating the runway designation.}
	\label{fig:runway_dimensions}
\end{figure}

\begin{figure}[t]
\centering
\includegraphics[width=0.75\columnwidth]{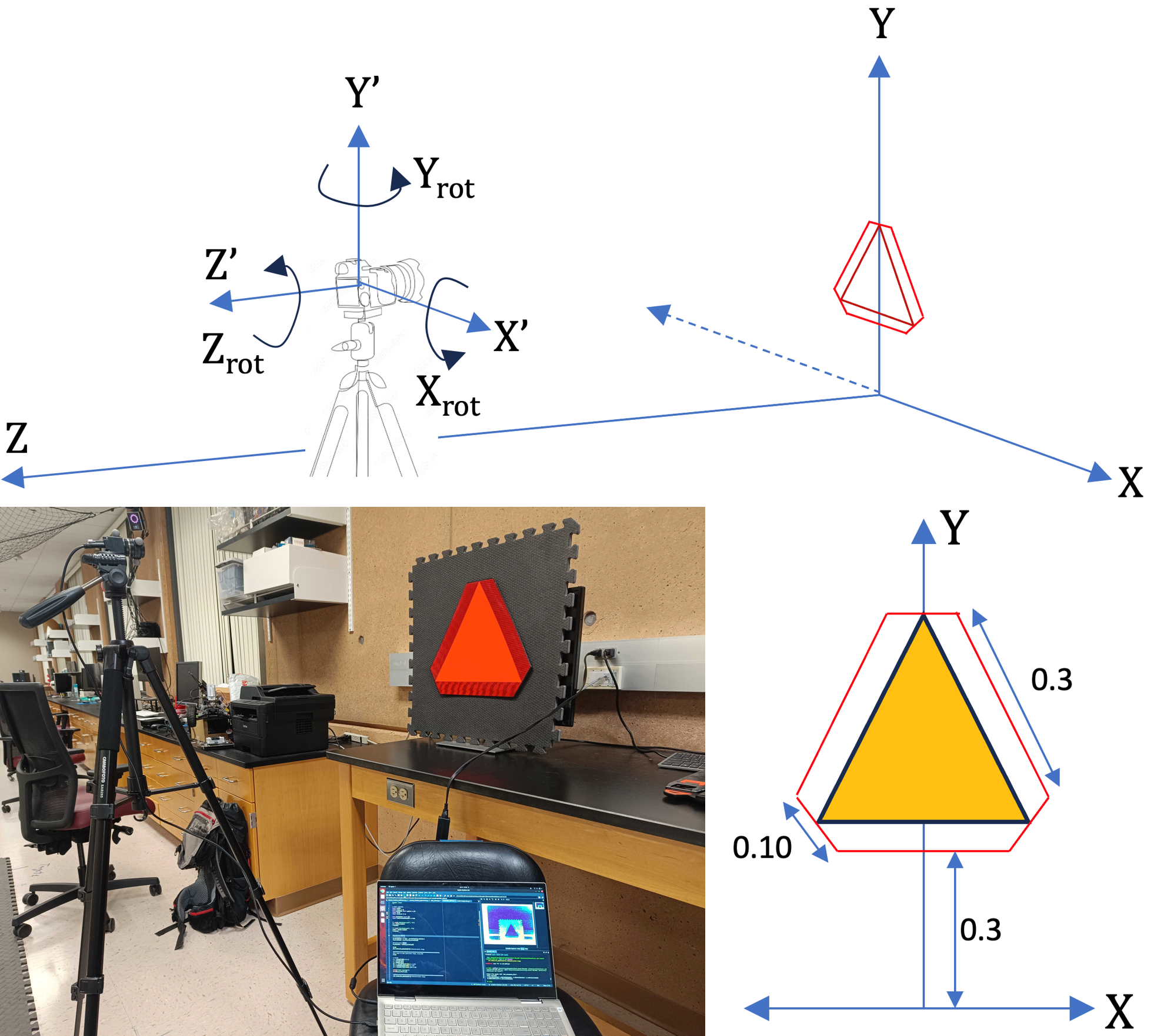}
\caption{Experimental setup for the slow-vehicle ahead sign.
The SilkyEvCam event-based camera is fixed in position, and the slow-vehicle ahead sign is placed on the XY plane.
The relative pose between the camera and the sign is defined by the coordinates $(x,y,z)$ and the angles $(\theta,\phi,\psi)$.}
\label{fig:slow_sign_setup}
\end{figure}
The process for designing a GGM is illustrated in~\figref{fig:pipeline}.
First, an XML file is used to specify the target object's geometry: vertices are specified as 2D coordinates (in the \TCF) using \texttt{<point>} elements, and polygons are defined as sets of vertices using \texttt{<polygon>} elements.
Note that 2D coordinates suffice for defining vertices since the target is a planar object on the XY plane.
The expression used to specify how to compose the polygons to form the target object is given in the \texttt{<composition>} element.
In turn, the XML is used to configure the parameters of the GGM computational graph in~\figref{fig:ggm_process}.
The result is a decoder that takes the camera pose as input and generates the image of the target object.

\begin{figure}[t]
\centering
\includegraphics[width=0.6\columnwidth]{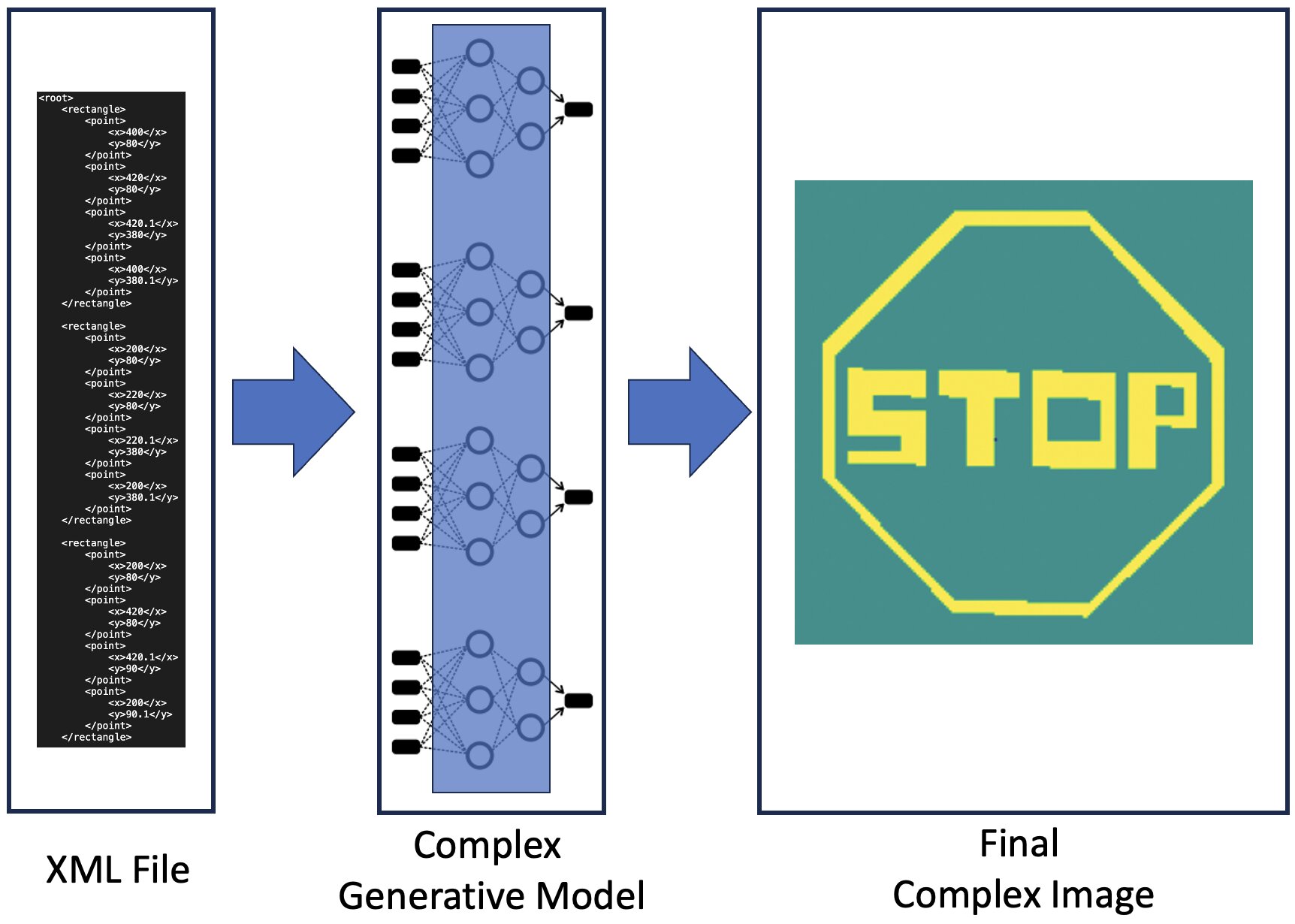}
\caption{Streamlines process for designing the GGM of a target object.
The object's geometry and the camera intrinsic parameters are both specified in an XML file and used to configure the GGM.}
\label{fig:pipeline}
\end{figure}
\subsection{Experiment 1: Evaluating the GGM Correctness}

In this experiment, we assess the correctness of the GGM by comparing the generated images with the ground truth images captured by the event-based camera.
Following the process described earlier, GGMs were designed for the three target objects at a pixel resolution $(640\times480\si{p})$ that matches the camera's resolution.
\figref{fig:ggm_generated_images} shows examples of the images generated by each GGM at different camera poses.
\tabref{tab:ggm_sizes} compares the sizes and complexities of the three GGMs designed.
The runway GGM is the largest and most complex among the three, as it represents a larger target object with more intricate details.
\begin{table}[t]
\centering%
\setlength{\tabcolsep}{4pt}%
\caption{Comparison of the target complexity and number of nodes per layer of each of the three GGMs. designed for image sizes of $640\times480$ pixels.}
\label{tab:ggm_sizes}
\resizebox{1.0\columnwidth}{!}{%
\begin{tabular}{p{2.6cm}m{1.5cm}m{1.0cm}rrrrrrr}
\toprule
{Target} & {Polygons} & {Vertices} & $\Layer0$ & $\Layer1$ & $\Layer2$ & $\Layer3$ & $\Layer4$ \\
\midrule
Stop Sign         & 13 & 58 & 116	& \num{17817600} & \num{36249600} & \num{614400} & \num{307200}\\[0.1em]
Runway Markings   & 16 & 64 & 128	& \num{19660800} & \num{39936000} & \num{614400} & \num{307200}\\[0.1em]
Slow-Vehicle Sign & 3  & 12 & 24	& \num{3686400} & \num{7987200}  & \num{614400} &  \num{307200}\\[0.1em]
\bottomrule
\end{tabular}%
}%
\end{table}
The impact of image resolution is evaluated in~\figref{fig:ggm_generated_images}(c),
which shows the images generated by three different GGMs of the same target but with different output resolutions: high ($640\times480\si{p}$), medium ($120\times100\si{p}$), and low ($60\times30\si{p}$).

We also compare the images generated by the HighRes GGM of the slow-vehicle ahead sign with the images captured by the actual camera at different poses as shown in~\figref{fig:camera_vs_ggm}.
We observe that the GGM-generated images closely resemble the camera images.
Moreover, the three resolutions of the GGM images demonstrate that the GGM can generate images at different qualities while maintaining the same target location and orientation within the image.
The maximum distance between each vertex in the GGM and its corresponding pixel in the generated image is approximately 3 pixels for the high-resolution image, and decreases to less than 1 pixel for the low-resolution image.
\begin{figure*}[t]
\centering%
\includegraphics[width=1.0\linewidth]{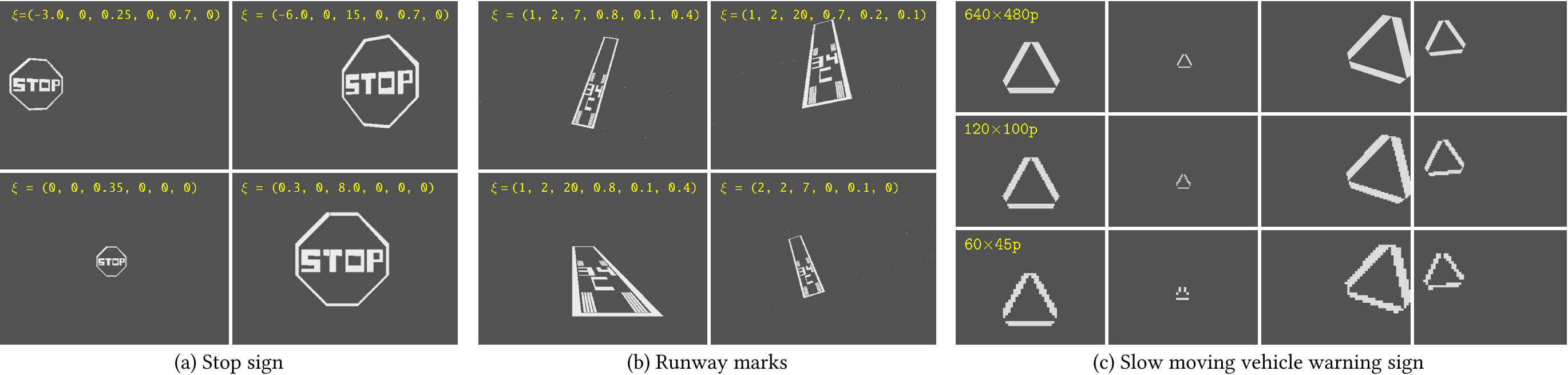}
\caption{%
  Images generated for three different planar objects using their respective GGMs, with four different camera poses each.
	In (c), a comparison between images of the slow-vehicle ahead sign at three different resolutions is shown.%
	}%
    \vspace{-3mm}
\label{fig:ggm_generated_images}%
\end{figure*}

\begin{figure}[t]
	\centering
	\begin{tikzpicture}
		\node[anchor=south west,inner sep=0] (image) at (0,0) {\includegraphics[width=1.0\columnwidth]{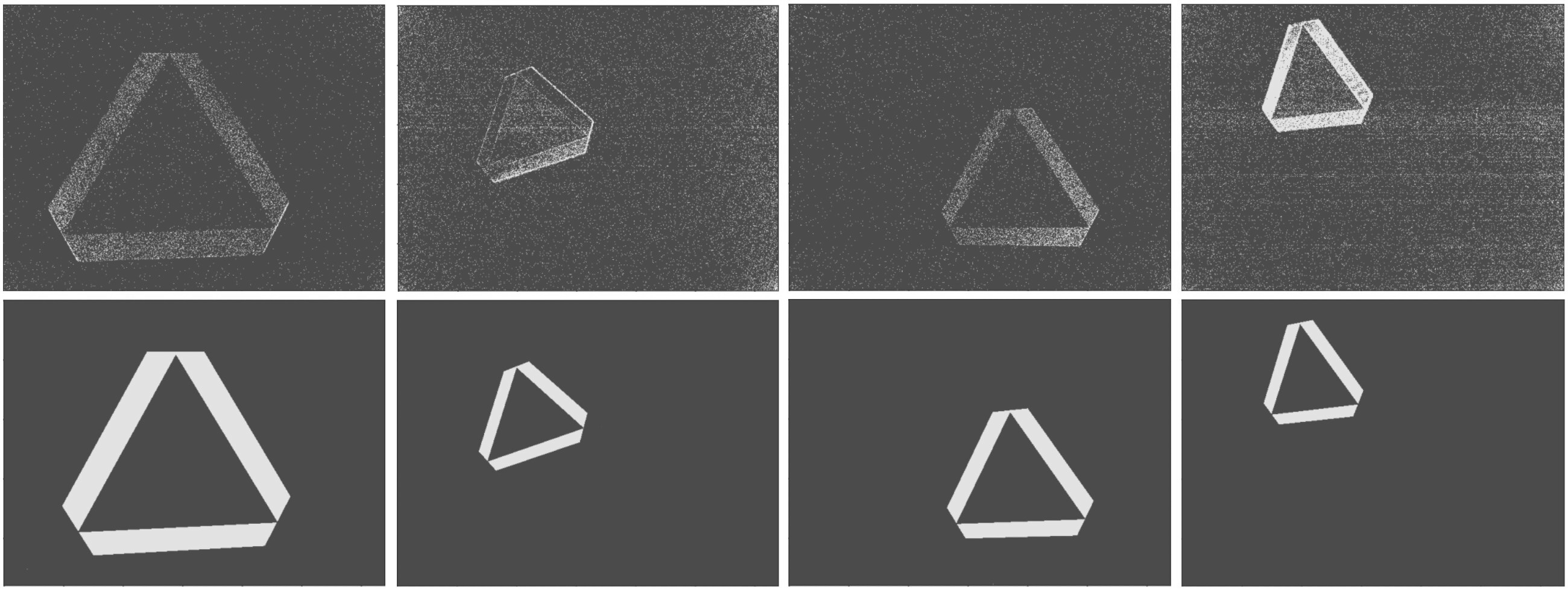}};
		\begin{scope}[x={(image.south east)},y={(image.north west)}]
		\node[yellow,font=\tiny,text width=81pt,anchor=north west,xscale=0.7, fill=darkgray] at (0.125-0.123,0.005) {$\!\!\Pose_1\;(-0.015,-0.435,+0.598,$\\$~\hfill-0.096,+0.086,-0.047)\!$};
		\node[yellow,font=\tiny,text width=81pt,anchor=north west,xscale=0.7, fill=darkgray] at (0.375-0.123,0.005) {$\!\!\Pose_2\;(-0.133,-0.433,+0.885,$\\$~\hfill-0.128,-0.196,-0.087)\!$};
		\node[yellow,font=\tiny,text width=81pt,anchor=north west,xscale=0.7, fill=darkgray] at (0.625-0.123,0.005) {$\!\!\Pose_3\;(-0.327,+0.015,+0.949,$\\$~\hfill-0.310,-0.153,-0.326)\!$};
		\node[yellow,font=\tiny,text width=81pt,anchor=north west,xscale=0.7, fill=darkgray] at (0.875-0.123,0.005) {$\!\!\Pose_4\;(-0.476,-0.383,+1.084,$\\$~\hfill+0.159,-0.267,-0.084)\!$};
		\end{scope}
	\end{tikzpicture}
	\caption{Camera(Top) vs GGM(Bottom) for High Resolution Images ($640\times480$) at four different poses.}
	\label{fig:camera_vs_ggm}
\end{figure}

\begin{figure}[t]
	\centering
	\includegraphics[width=1.0\columnwidth]{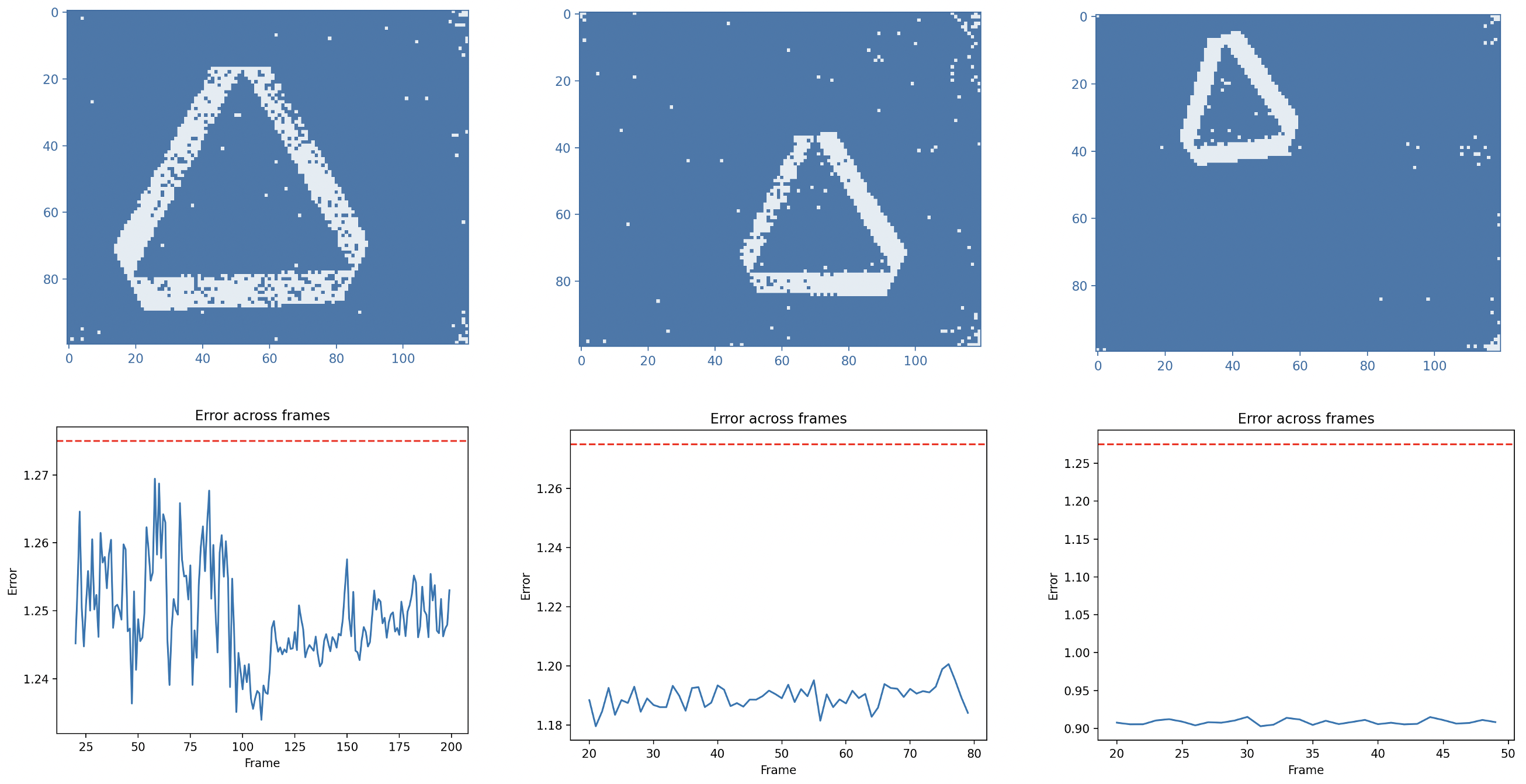}
	\caption{Examples of noisy images and the noise value across multiple frames.}
	\label{fig:noisy_images}
\end{figure}
\subsection{Experiment 2: Certifying the Error Bounds of the Pose Estimator}

We evaluate the error bounds of the pose estimation framework in~\secref{sec:pose_estimation_uncluttered},
the slow-vehicle ahead sign was used as the target object, with pose space:
\begin{align*}
\PoseSpace = [-0.2,0.2] \times [0.33,0.6] \times [1,3.5] \times [0.01,0.1]^3.
\end{align*}
An encoder $\Encoder$
is implemented as a two-layer fully connected network with layer sizes 256--200--6 and sigmoid activations. 
It is trained with the Adam optimizer ($\text{lr}=10^{-4}$, batch size $128$) for $50$ epochs using the Smooth L1 loss. 
To promote robustness, we include Lipschitz regularization ($\lambda=0.1$) and apply neuron clipping at $0.05$.
To evaluate the trained encoder, the pose space is discretized into a grid with step size $\BoundInput=0.01$ in each dimension, yielding approximately \SI{4.7}{\mega\each} data points.
A total of \SI{1}{\mega\each} random poses (unseen during training) were sampled for testing.
The test error histogram of the \SI{1}{\mega\each} random poses is shown in~\figref{fig:randomgrid},
where the encoder attains a mean error of $0.181$ and a maximum error of $0.826$, with $99.99\%$ of samples below $0.65$, closely matching the training performance.
We then compute the certified error bound for the encoder as follows:
\begin{align*}
		\Error_{\Enc} &\leq \BoundInput \left(\Lipschitz_{\Dec}\Lipschitz_{\Enc} + 1 \right) + \TrainingError\\
		&= (0.01)\left(6.45 + 1 \right) + 0.84 \approx 0.91.
\end{align*}
From the error histogram in~\figref{fig:randomgrid}, the maximum test error observed is $0.845$, which is within the certified bound of $0.91$, validating the theoretical bound provided by~\thmref{thm:estimation_error_bounds}.
The maximum theoretical error in each dimension is summarized in~\tabref{tab:error_bounds}.
\begin{table}[t]
\centering%
\setlength{\tabcolsep}{12pt}%
\caption{Certified maximum error bounds for each dimension of the pose estimate.}
\begin{tabular}{c c c c c c c}
\toprule
Dimension $i$ & $\BoundInput$ & $\Lipschitz_{\Dec}\Lipschitz_{\Enc}$ & $\TrainingError$ & $\Error_{\Enc}^{i}$ \\
\midrule
$x$ & $0.01$ & 6.45 & $0.40$ & $0.47$ \\
$y$ & $0.01$ & 6.45 & $0.27$ & $0.34$ \\
$z$ & $0.01$ & 6.45 & $0.76$ & $0.83$ \\
$\theta, \phi, \psi$ & $0.01$ & 6.45 & $0.09$ & $0.16$ \\
\bottomrule
\end{tabular}
\label{tab:error_bounds}
\end{table}

\begin{figure}[t]
\centering
\includegraphics[width=1.0\columnwidth]{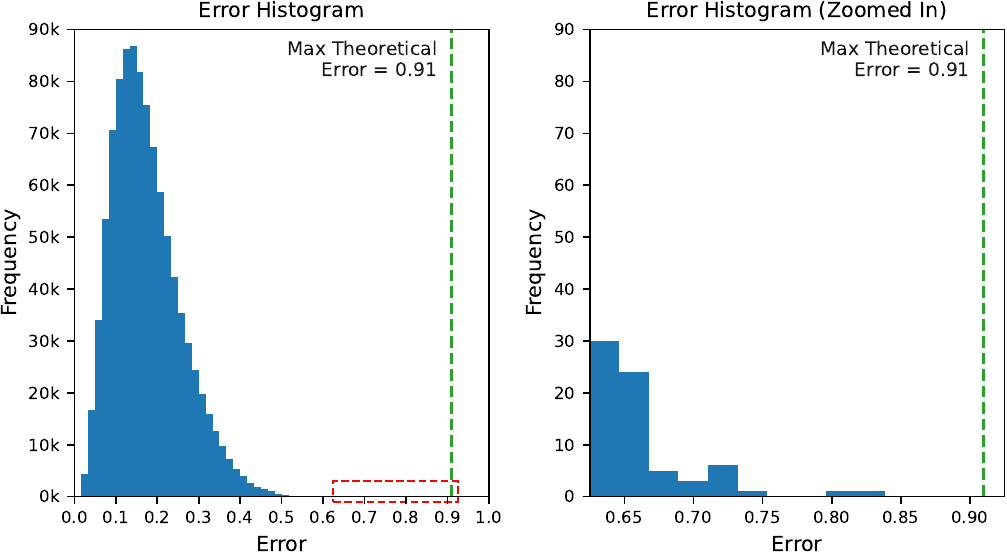}
\caption{Test error histogram for the encoder using \SI{1}{\mega\each} unseen random poses.
}%
\label{fig:randomgrid}
\end{figure}

\begin{figure*}[ht]
	\setlength{\FigWidth}{0.19\linewidth}%
	\setlength{\FigYGap}{-0.2em}%
	\centering\setlength{\tabcolsep}{1pt}%
	\resizebox{0.95\textwidth}{9.2cm}{%
	\begin{tabular}{@{}ccccc@{}}
		\includegraphics[width=\FigWidth]{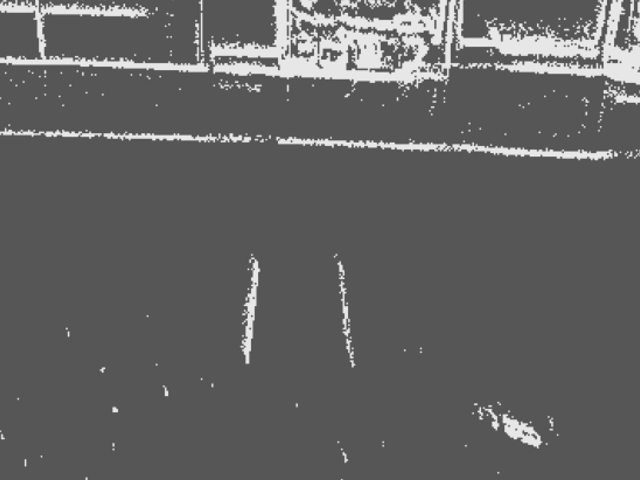} &
		\includegraphics[width=\FigWidth]{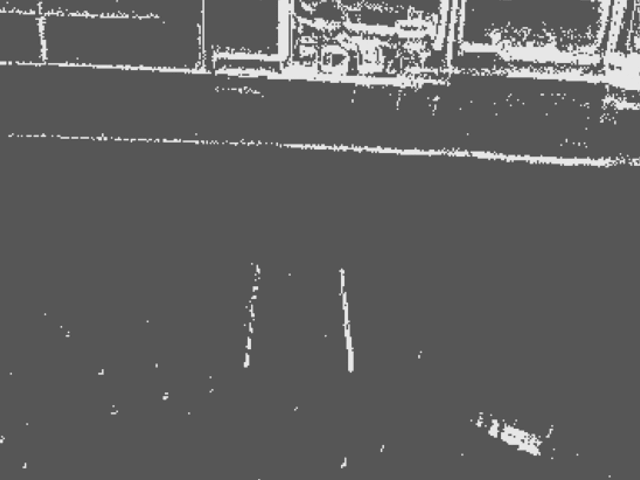} &
		\includegraphics[width=\FigWidth]{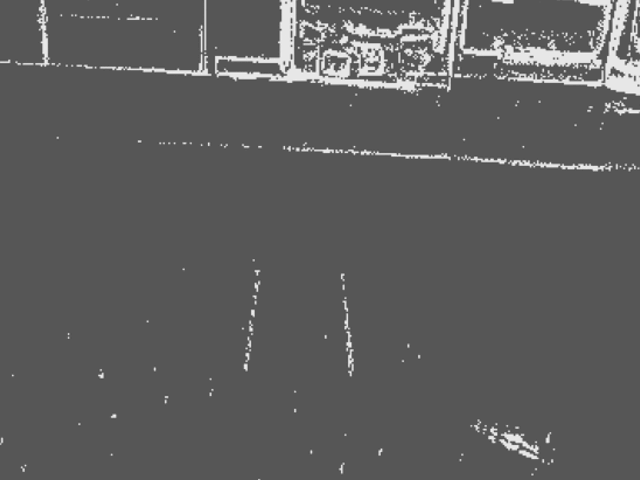} &
		\includegraphics[width=\FigWidth]{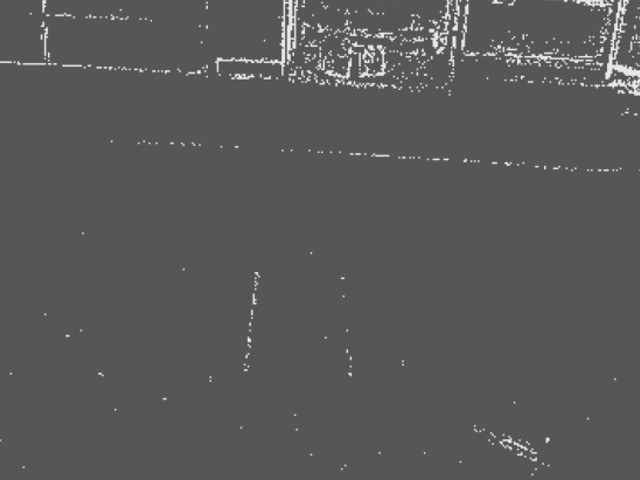} \\[\FigYGap]
		\begin{tikzpicture}[text width=1.0\FigWidth, outer sep=0pt, inner sep=0pt]
		  \node[anchor=south west] (img) {\includegraphics[width=\FigWidth]{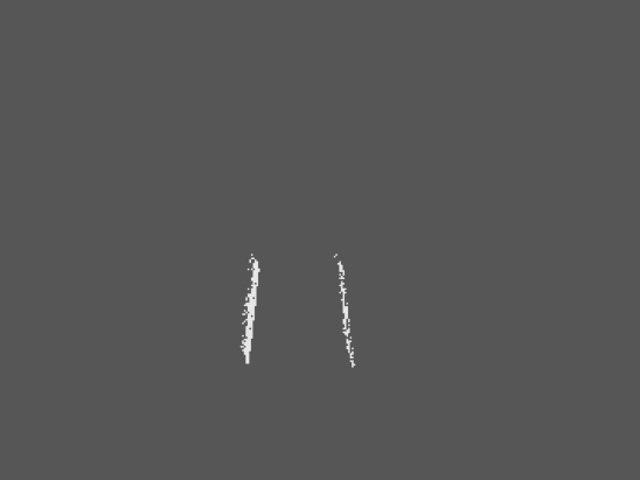}};
		  \node[anchor=south west,pOverlay] at (img.south west) {t = 1};
		  \node[anchor=north west,pOverlay] at (img.north west) {$\Pose$ = (0, 1.73, 0.55, 0, 0.49, 0) \\[0.2em] Err = 0.27};
		\end{tikzpicture} &
		\begin{tikzpicture}[text width=1.0\FigWidth, outer sep=0pt, inner sep=0pt]
		  \node[anchor=south west] (img) {\includegraphics[width=\FigWidth]{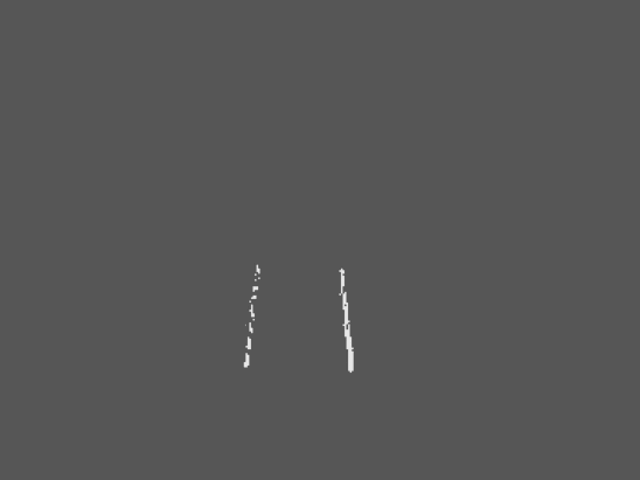}};
		  \node[anchor=south west,pOverlay] at (img.south west)	{t = 2};
		  \node[anchor=north west,pOverlay] at (img.north west)	{$\Pose$ = (0, 1.56, 0.78, 0, 0.58, 0) \\[0.2em] Err = 0.26};
		\end{tikzpicture} &
		\begin{tikzpicture}[text width=1.0\FigWidth, outer sep=0pt, inner sep=0pt]
		  \node[anchor=south west] (img) {\includegraphics[width=\FigWidth]{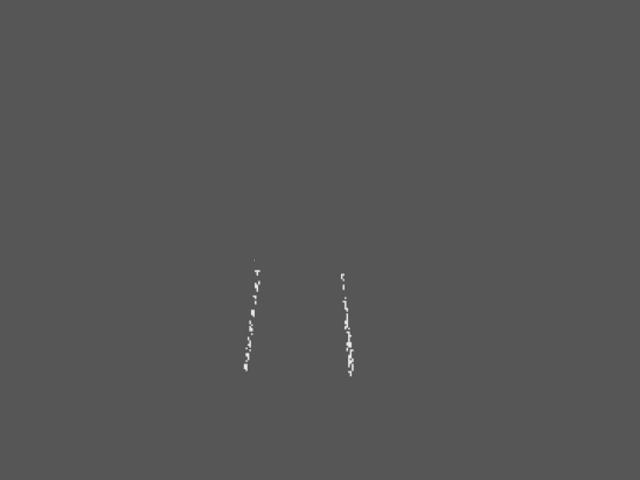}};
		  \node[anchor=south west,pOverlay] at (img.south west)	{t = 3};
		  \node[anchor=north west,pOverlay] at (img.north west)	{$\Pose$ = (0, 1.53, 0.90, 0, 0.65, 0) \\[0.2em] Err = 0.24};
		\end{tikzpicture} &
		\begin{tikzpicture} 
		  \node[anchor=south west,inner sep=0] (img) {\includegraphics[width=\FigWidth]{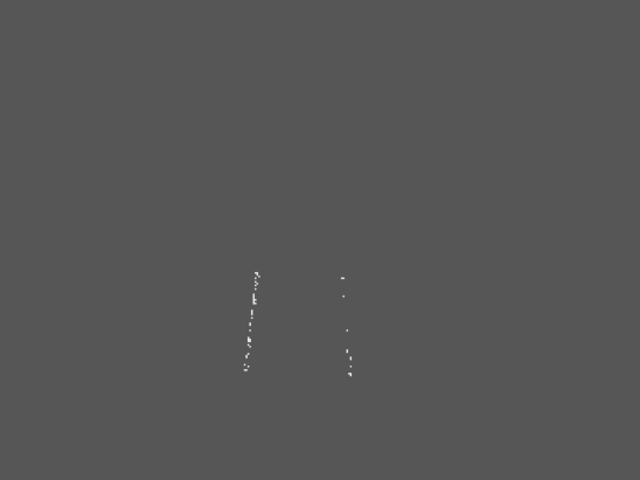}};
		  \node[anchor=south west,pOverlay] at (img.south west) {t = 4};
		  \node[anchor=north west,pOverlay] at (img.north west) {$\Pose$ = (0, 1.58, 0.79, 0, 0.59, 0) \\[0.2em] Err = 0.21};
		\end{tikzpicture} \\[\FigYGap]
		\includegraphics[width=\FigWidth]{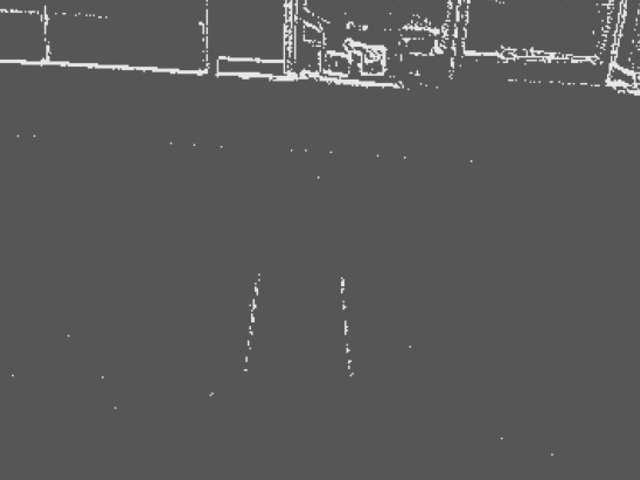} &
		\includegraphics[width=\FigWidth]{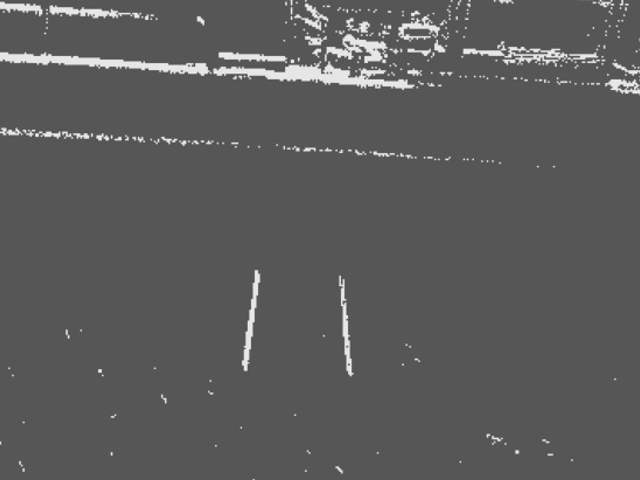} &
		\includegraphics[width=\FigWidth]{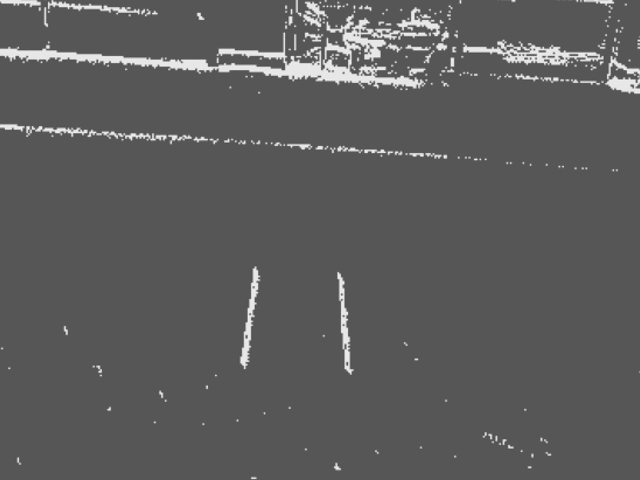} &
		\includegraphics[width=\FigWidth]{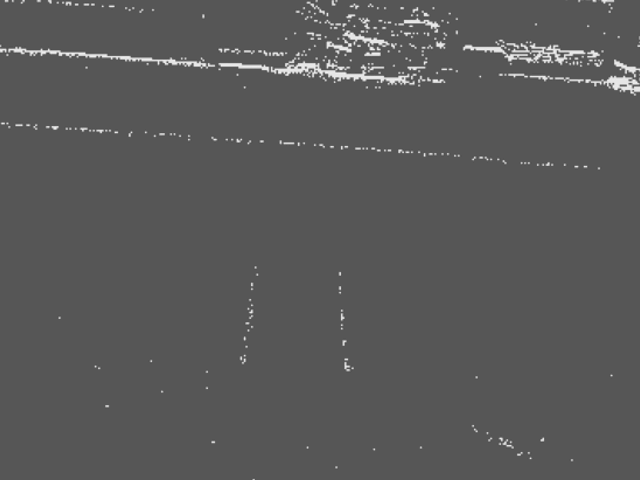} \\[\FigYGap]
		\begin{tikzpicture}[text width=1.0\FigWidth, outer sep=0pt, inner sep=0pt]
		  \node[anchor=south west] (img) {\includegraphics[width=\FigWidth]{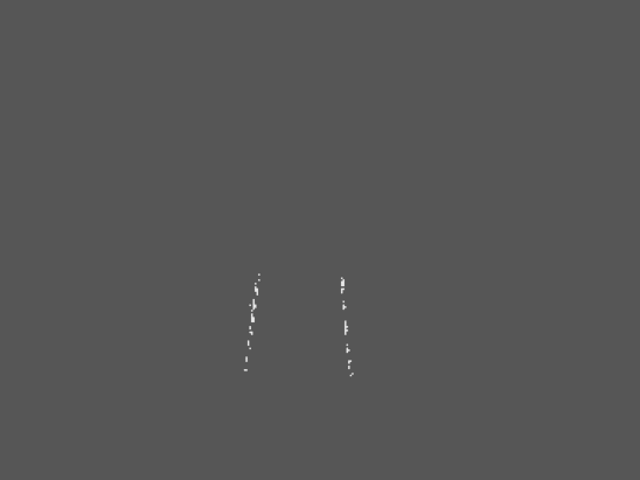}};
		  \node[anchor=south west,pOverlay] at (img.south west) {t = 5};
		  \node[anchor=north west,pOverlay] at (img.north west) {$\Pose$ = (0, 1.41, 1.01, 0, 0.75, 0) \\[0.2em] Err = 0.16};
		\end{tikzpicture} &
		\begin{tikzpicture}[text width=1.0\FigWidth, outer sep=0pt, inner sep=0pt]
		  \node[anchor=south west] (img) {\includegraphics[width=\FigWidth]{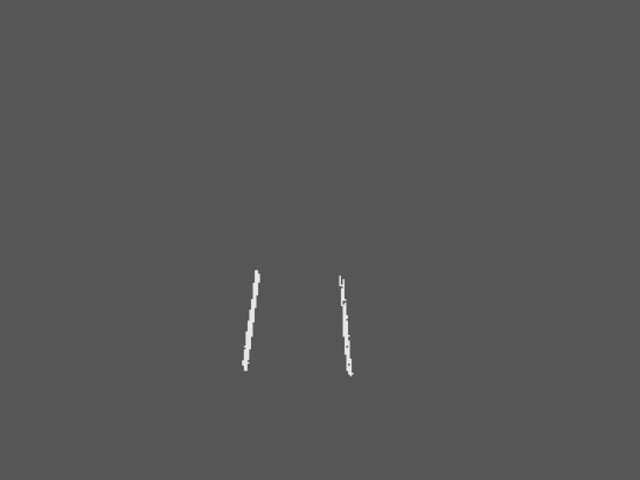}};
		  \node[anchor=south west,pOverlay] at (img.south west)	{t = 6};
		  \node[anchor=north west,pOverlay] at (img.north west)	{$\Pose$ = (0, 1.56, 0.78, 0, 0.66, 0) \\[0.2em] Err = 0.26};
		\end{tikzpicture} &
		\begin{tikzpicture}[text width=1.0\FigWidth, outer sep=0pt, inner sep=0pt]
		  \node[anchor=south west] (img) {\includegraphics[width=\FigWidth]{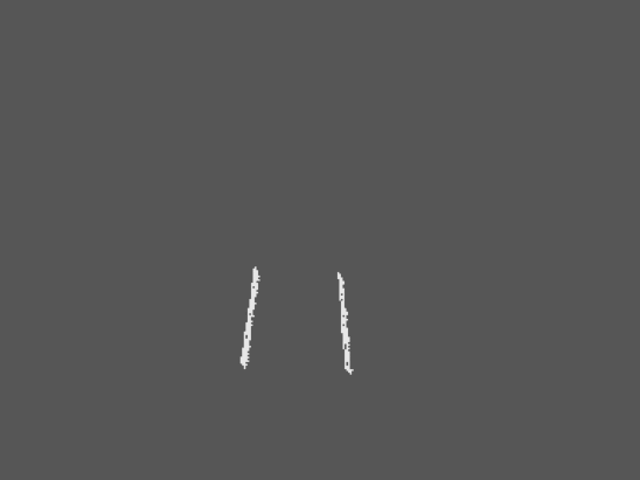}};
		  \node[anchor=south west,pOverlay] at (img.south west)	{t = 7};
		  \node[anchor=north west,pOverlay] at (img.north west)	{$\Pose$ = (0, 1.54, 0.88, 0, 0.72, 0) \\[0.2em] Err = 0.13};
		\end{tikzpicture} &
		\begin{tikzpicture}[text width=1.0\FigWidth, outer sep=0pt, inner sep=0pt]
		  \node[anchor=south west] (img) {\includegraphics[width=\FigWidth]{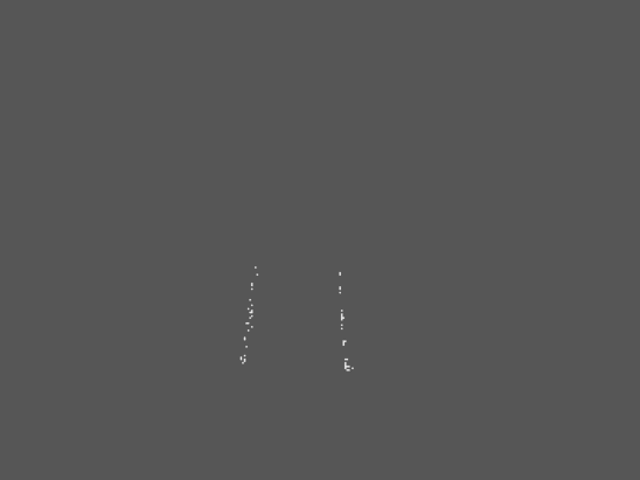}};
		  \node[anchor=south west,pOverlay] at (img.south west) {t = 8};
		  \node[anchor=north west,pOverlay] at (img.north west) {$\Pose$ = (0, 1.44, 0.99, 0, 0.72, 0) \\[0.2em] Err = 0.27};
		\end{tikzpicture} \\[\FigYGap]
		\includegraphics[width=\FigWidth]{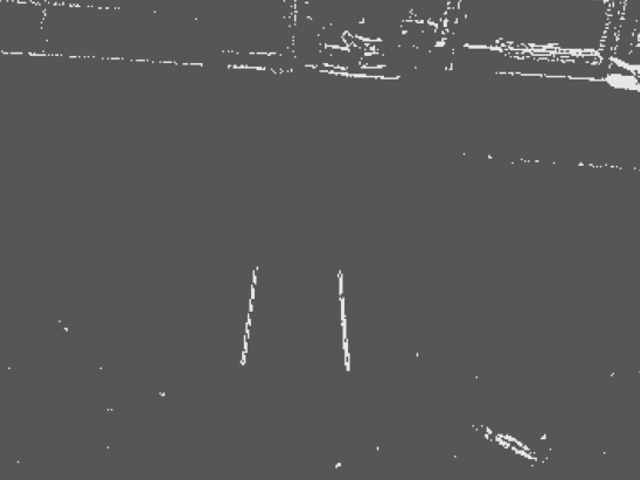} &
		\includegraphics[width=\FigWidth]{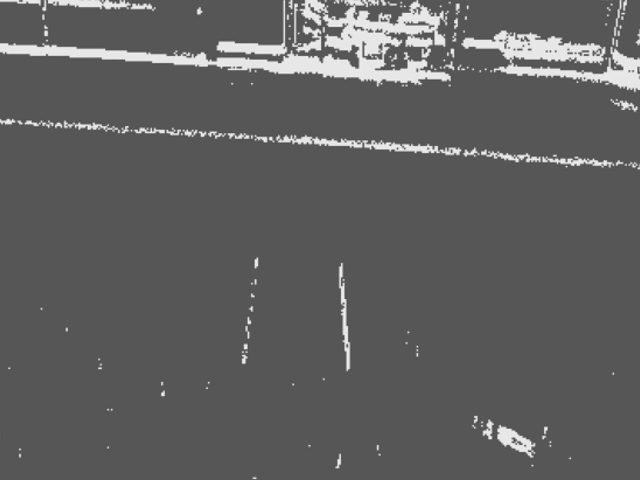} &
		\includegraphics[width=\FigWidth]{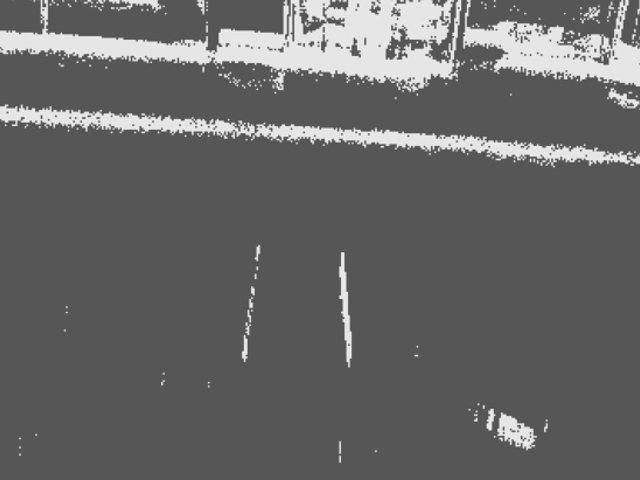} &
		\includegraphics[width=\FigWidth]{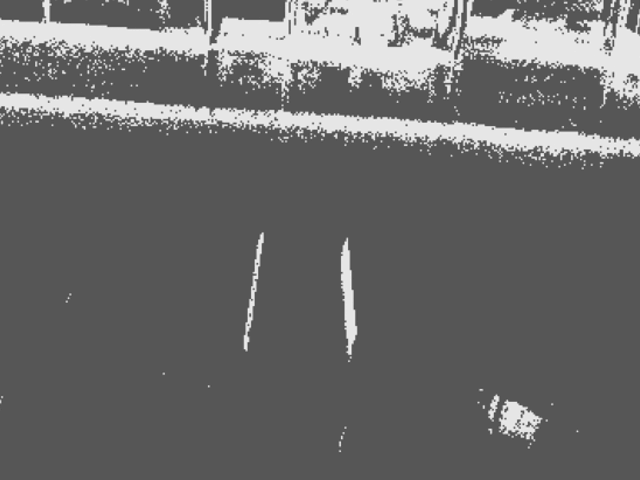} \\[\FigYGap]
		\begin{tikzpicture}[text width=1.0\FigWidth, outer sep=0pt, inner sep=0pt]
		  \node[anchor=south west] (img) {\includegraphics[width=\FigWidth]{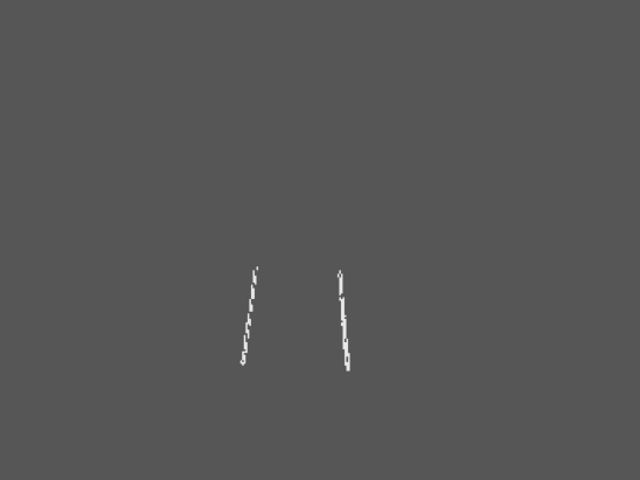}};
		  \node[anchor=south west,pOverlay] at (img.south west) {t = 9};
		  \node[anchor=north west,pOverlay] at (img.north west) {$\Pose$ = (0, 1.65, 0.69, 0, 0.51, 0) \\[0.2em] Err = 0.29};
		\end{tikzpicture} &
		\begin{tikzpicture}[text width=1.0\FigWidth, outer sep=0pt, inner sep=0pt]
		  \node[anchor=south west] (img) {\includegraphics[width=\FigWidth]{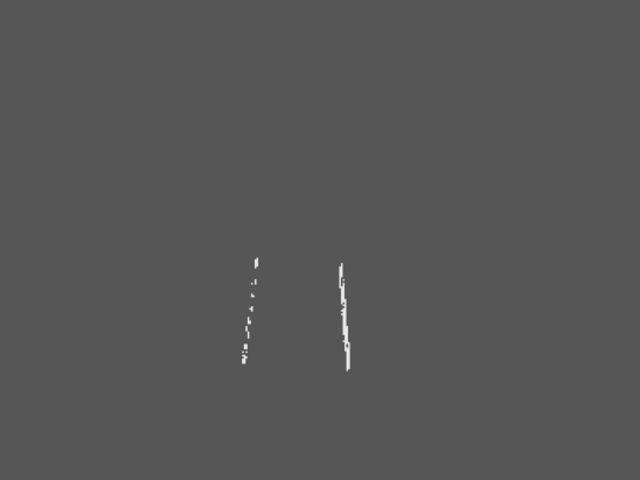}};
		  \node[anchor=south west,pOverlay] at (img.south west)	{t = 10};
		  \node[anchor=north west,pOverlay] at (img.north west)	{$\Pose$ = (0, 1.50, 0.89, 0, 0.66, 0) \\[0.2em] Err = 0.15};
		\end{tikzpicture} &
		\begin{tikzpicture}[text width=1.0\FigWidth, outer sep=0pt, inner sep=0pt]
		  \node[anchor=south west] (img) {\includegraphics[width=\FigWidth]{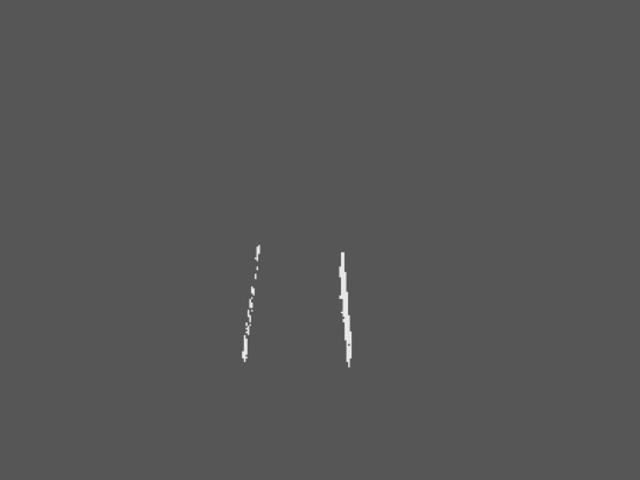}};
		  \node[anchor=south west,pOverlay] at (img.south west)	{t = 11};
		  \node[anchor=north west,pOverlay] at (img.north west)	{$\Pose$ = (0, 1.53, 0.90, 0, 0.65, 0) \\[0.2em] Err = 0.24};
		\end{tikzpicture} &
		\begin{tikzpicture}[text width=1.0\FigWidth, outer sep=0pt, inner sep=0pt] 
		  \node[anchor=south west] (img) {\includegraphics[width=\FigWidth]{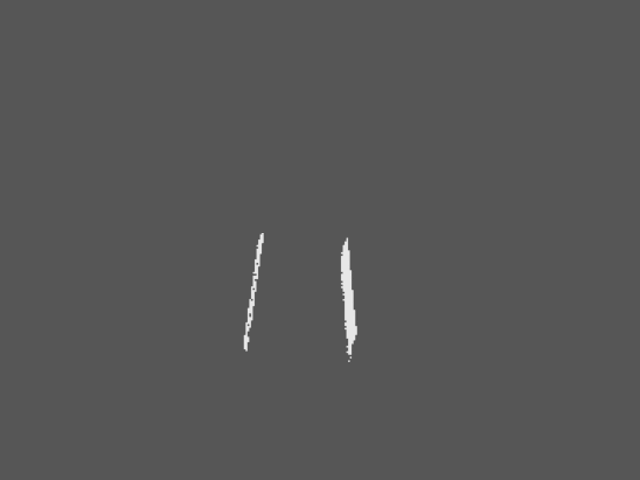}};
		  \node[anchor=south west,pOverlay] at (img.south west) {t = 12};
		  \node[anchor=north west,pOverlay] at (img.north west) {$\Pose$ = (0, 1.58, 0.79, 0, 0.59, 0) \\[0.2em] Err = 0.21};
		\end{tikzpicture}
	\end{tabular}%
	}
	\caption{Temporal sequence comparison between raw camera image $\Image$ (top) and filtered output $\tilde{\Image}$ (bottom) at five representative timestamps.
					 Each filtered image displays the estimated pose $\Pose$ and the corresponding estimation error (Err).
					 Both filtered outputs and pose estimates were obtained using the pose estimation pipeline described in~\secref{sec:pose_estimation_cluttered}.
					 The results demonstrate successful detection, clutter removal, and accurate pose estimation within certified bounds.
	}
	\label{fig:pipeline_keyframes}
\end{figure*}
\subsection{Experiment 3: Detection and Pose Estimation in Cluttered Environments}

We now evaluate the performance of our detection and pose estimation pipeline in cluttered environments.
The goal of this experiment is to assess the ability of our framework to accurately detect and estimate the pose of a runway in the presence of noise and other objects in the scene.
The pose space of interest spans 
$\Pose_y \in [\SI{0.8}{\meter}, \SI{1.0}{\meter}]$, 
$\Pose_z \in [\SI{1.6}{\meter}, \SI{1.8}{\meter}]$, 
$\Pose_\theta \in [0.5, 0.7]$ rad, with $\Pose_x = 0$ fixed.
We partition this region into $\PartitionCt = 27$ hypercubes with spacing $\delta = 0.1$ and construct the certified object detector ${\Detector}_i$ and spatial filter $\SpatialFilter_i$ for each cell $i = 1, \ldots, 27$.
A total of 1,320 frames were collected by the camera at 25 fps.
\figref{fig:pipeline_keyframes} shows a sequence of 12 representative frames.
For each, we display the raw cluttered image and the filtered image produced by the spatial filtering pipeline.
The top-right corner of each frame shows the estimated pose $(\Pose_\theta, \Pose_y, \Pose_z)$ and the corresponding error with respect to the ground truth.
The results show that the spatial filtering pipeline is able to effectively remove clutter from the images, allowing for accurate pose estimation of the runway.

\section{Conclusion}%
\label{sec:conclusion}

This paper introduces a framework for certified vision-based pose estimation that addresses the critical gap in providing provable guarantees for neural network-based perception systems in safety-critical applications.
The main contributions include the design of GGMs that encode camera and target
geometry through analytically-derived rather than learned parameters,
enabling formal verification through reachability analysis.
Additionally, we developed a decoder-encoder architecture with grid-based sampling
that provides deterministic error bounds on pose estimation,
and a multi-stage perception pipeline
that extends certified estimation to cluttered environments
through spatial filtering and reachability-based object detection.
Empirical evaluations
using both synthetic images and real event-based camera data
demonstrate that the framework achieves
reliable pose estimation within certified bounds across diverse scenarios.
These contributions establish the GGM-based framework as
a viable approach for certifiable vision-based perception,
offering significant improvements in
formal guarantees, detection robustness,
and applicability to real-world autonomous systems.
\label{page:before_biblio}%
\bibliographystyle{IEEEtran} 
\bibliography{20_CerGe_Biblio.bib}%
\pagenumbering{gobble} 

\end{document}